\newcommand{\beq}{\begin{equation}}
\newcommand{\eeq}{\end{equation}}
\newcommand{\beqa}{\begin{eqnarray}}
\newcommand{\eeqa}{\end{eqnarray}}
\newcommand{\beqan}{\begin{eqnarray*}}
\newcommand{\eeqan}{\end{eqnarray*}}
\newcommand{\argmax}{\mathop{\mathrm{argmax}}}
\newcommand{\argmin}{\mathop{\mathrm{argmin}}}
\newcommand{\norm}[1]{\|#1\|}
\newcommand{\bX}{{\mathbb X}}
\newcommand{\blambda}{\boldsymbol\lambda}
\newtheorem{myremark}{Remark}[section]
\newtheorem{mytheorem}{Theorem}[section]
\newtheorem{mylemma}{Lemma}[section]
\newtheorem{mycorollary}{Corollary}[section]
\newtheorem{mydefinition}{Definition}[section]
\newenvironment{proofof}[1]{\par\noindent{\bf Proof of #1\ }}{\hfill\BlackBox\\[2mm]}
\begin{document}

\title{Streaming kernel regression with provably adaptive\\mean, variance, and regularization}

\author{\name Audrey Durand \email audrey.durand.2@ulaval.ca \\
\addr Laval University, Qu\'ebec, Canada \\
\AND
\name Odalric-Ambrym Maillard \email odalric.maillard@inria.fr \\
\addr INRIA, Lille, France \\
\AND
\name Joelle Pineau \email jpineau@cs.mcgill.ca \\
\addr McGill University, Montreal, Canada
}

\editor{Kevin Murphy and Bernhard Sch{\"o}lkopf}

\maketitle

\begin{abstract}
	We consider the problem of streaming kernel regression, when the observations arrive sequentially and the goal is to 
	recover the underlying mean function, assumed to belong to an RKHS. The variance of the noise is not assumed to be known. In this context, we tackle the problem of tuning the regularization parameter adaptively at each time step, while maintaining tight confidence bounds estimates on the value of the  mean function at each point.
	To this end, we first generalize existing results for finite-dimensional linear regression with fixed regularization and known variance to the kernel setup with a regularization parameter allowed to be a measurable function of past observations.
	Then, using appropriate self-normalized inequalities we build upper and lower bound estimates for the variance, leading to Bersntein-like concentration bounds. The later is used in order to define the adaptive regularization.
	The bounds resulting from our technique are valid uniformly over all observation points and all time steps, and are compared against the literature with numerical experiments. Finally, the potential of these tools is illustrated by an application to kernelized bandits, where we revisit the Kernel UCB and Kernel Thompson Sampling procedures, 
	and show the benefits of the novel adaptive kernel tuning strategy.
\end{abstract}

\begin{keywords}
    kernel, regression, online learning, adaptive tuning, bandits
\end{keywords}


\section{Introduction}

Many applications require solving an online optimization problem for an unknown, noisy, function defined over a possibly large domain space. Kernel regression methods can learn such possibly non-linear functions by sharing information  gathered across observations. These techniques are being used in many fields where they serve a variety of applications like hyperparameters optimization~\citep{Snoek2012}, active preference learning~\citep{Brochu2008}, and reinforcement learning~\citep{Marchant2014,Wilson2014}. The idea is generally to rely on kernel regression to estimate a function that can be used for decision making and selecting the next observation point.
Algorithmically speaking, standard kernel regression involves a regularization parameter that accounts for both the complexity of the unknown target function, and the variance of the noise. While most theoretical approaches rely on a fixed regularization parameter, in practice, people have often used heuristics in order to tune this parameter adaptively with time.

This however comes at the price of loosing theoretical guarantees. Indeed, in order for theoretical guarantees (based on concentration inequalities) to hold, existing approaches~\citep{Srinivas2010,Valko2013}
require the regularization parameter in the kernel regression to be a fixed quantity. Further, they assume a prior and tight knowledge of the variance of the noise,  which is unrealistic in practice. The reason for this cumbersome assumption is to adjust the regularization parameter in the kernel regression based on this deterministic quantity, as such a choice of regularization conveys a natural Bayesian interpretation~\citep{Rasmussen2006}.
Following this intuition, given an empirical estimate of the function noise based on gathered observations, one should be able to tune the regularization automatically. This is however non-trivial,
first due to the streaming nature of the data, that allows the noise to be a measurable function of the past observations,
second because concentration bounds on the empirical variance are currently unknown in such a general kernel setup,
and finally because all existing theoretical bounds require the regularization parameter to be a deterministic constant, while we require here a parameterization that explicitly depends on past observations.
The goal of this work is to provide the rigorous tools for performing an online tuning of the kernel regularization while preserving theoretical guarantees and confidence intervals in the context of streaming kernel regression with unknown noise. We thus hope to provide a sound method for adaptive tuning that is both interesting from a practical perspective and retains theoretical guarantees.

We gently start our contributions by Theorem~\ref{thm:kLSmean} that generalizes existing concentration results (such as in~\cite{Abbasi2011,Wang2014}), and is explicitly stated for a regularization parameter that may differ from the noise.
This result paves the way to an even more general result (Theorem~\ref{thm:kLSmeantuned}) that holds when the regularization is tuned online at each step. Afterwards, we introduce a streaming variance estimator (Theorem~\ref{thm:rkhs_var_bound}) that yields empirical upper- and lower-bounds on the function noise. Plugging-in the resulting estimates 
leads to empirical Bernstein-like concentration results (Corollary~\ref{cor:empBernstein}) for the kernel regression,
where we use the variance estimates in order to tune the regularization parameter.
Section~\ref{sec:bandits} presents an application to kernelized bandits, where regret bounds for Kernel UCB and Kernel Thompson Sampling procedures are derived.
Section~\ref{sec:discussion} discusses our results and compares them against other approaches.
Finally, Section~\ref{sec:expes} shows the potential of all the previously introduced results while comparing them to existing alternatives through different numerical experiments.
We postpone most of the proofs to the appendix.

\section{Kernel streaming regression with a predictable noise process}\label{sec:kernelreg}
Let us consider a sequential regression problem. 
At each time step $t\in\Nat$, a learner picks a point $x_t\in\cX$ and gets the observation
\beqan
y_t = f_\star(x_t) + \xi_t\,,
\eeqan
where $f_\star$ is an unknown function assumed to belong to some function space $\cF$,
and $\xi_t$ is a random noise. We assume the process generating the observations is \textit{predictable} in the sense that there is a filtration $\cH=(\cH_t)_{t\in\Nat}$ such that
$x_t$ is $\cH_{t-1}$-measurable and $y_t$ is $\cH_{t}$-measurable.
Such an example is given by $\cH_t = \sigma(x_1,\dots,x_{t+1},y_1,\dots,y_t)$.
In the sub-Gaussian streaming predictable model, we assume that for some non-negative constant $\sigma^2$ the following holds
\beqan
\forall t\in\Nat, \forall \gamma \in\Real,\quad
\ln \Esp\Big[ \exp (\gamma\xi_{t})\Big|\cH_{t-1}\Big]  \leq  \frac{\gamma^2 \sigma^2}{2}\,.
\eeqan
%
%
Let 	 $k:\cX\times\cX\to\Real$ be a kernel function (that is continuous, symmetric  positive definite) on a compact set $\cX$ equipped with a positive finite Borel measure,
and denote $\cK$ the corresponding RKHS. 
We first provide a result bounding the prediction error of a standard regularized kernel estimate, where the regularization is given by a fixed parameter $\lambda>0$.

\begin{mytheorem}[Streaming Kernel Least-Squares]
\label{thm:kLSmean}
	Assume we are in the sub-Gaussian streaming predictable model.
	For a parameter  $\lambda\in\Real$, let us define the posterior mean and variances 
	after observing $Y_t = (y_1,\dots,y_t)^\top \in\Real^{t\times 1}$ as
	\beqan
	\begin{cases}
	f_{\lambda,t}(x) =&k_{t}(x)^\top({\bf K}_{t}+ \lambda I_t)^{-1}Y_t\\
	s_{\lambda,t}^2(x) =& \frac{\sigma^2}{\lambda} k_{\lambda,t}(x,x)\,\, \text{with } k_{\lambda,t}(x,x)=k(x,x) - k_{t}(x)^\top({\bf K}_{t}+ \lambda I_t)^{-1}k_{t}(x)\,.	
	\end{cases}
	\eeqan	
	where $k_{t}(x) =  (k(x,x_{t'}))_{t'\leq t}$ is a $t\times 1$ (column) vector	and ${\bf K}_{t} = (k(x_{s},x_{s'}))_{s,s'\leq t}$. 
Then $\forall\delta\!\in\![0,1]$, with probability higher than $1\!-\!\delta$, it holds simultaneously over all $x\!\in\!\cX$ and $t\!\geq\! 0$,
	\beqan
|f_\star(x)\!-\!f_{\lambda,t}(x)| \!\leq \!
\sqrt{	\frac{k_{\lambda,t}(x,x)}{\lambda}}\bigg[\!\sqrt{\lambda}\norm{f_\star}_{\cK}
	\!+\!
	\sigma\sqrt{2\ln(1/\delta) + 2\gamma_t(\lambda)}	
	\bigg]\,,
	\eeqan
	where the quantity  $\quad\gamma_t(\lambda)= \frac{1}{2}
	\sum_{t'=1}^t\!\ln\!\Big(\!1\! +\! \frac{1}{\lambda}k_{\lambda,t'-1}(x_{t'},x_{t'})\Big)\quad$ is the information gain.
\end{mytheorem}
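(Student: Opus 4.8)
The plan is to lift everything into the RKHS $\cK$ through the feature map $\phi(x)=k(\cdot,x)$, so that the reproducing property reads $f_\star(x)=\langle\phi(x),f_\star\rangle$ and $y_{t'}=\langle\phi(x_{t'}),f_\star\rangle+\xi_{t'}$. I introduce the design operator $\Phi_t\colon\cK\to\Real^t$, $h\mapsto(h(x_{t'}))_{t'\le t}$, whose adjoint is $\Phi_t^\ast\alpha=\sum_{t'\le t}\alpha_{t'}\phi(x_{t'})$, so that $\Phi_t\Phi_t^\ast={\bf K}_t$ and $\Phi_t\phi(x)=k_t(x)$, together with the regularized covariance operator $V_t=\lambda I+\Phi_t^\ast\Phi_t$ on $\cK$. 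The push-through identity $(\Phi_t^\ast\Phi_t+\lambda I)^{-1}\Phi_t^\ast=\Phi_t^\ast({\bf K}_t+\lambda I_t)^{-1}$ rewrites the estimate in primal form as $f_{\lambda,t}(x)=\langle\phi(x),V_t^{-1}\Phi_t^\ast Y_t\rangle$, and the same (Woodbury) identity yields the central variance identity $\norm{\phi(x)}_{V_t^{-1}}^2=\langle\phi(x),V_t^{-1}\phi(x)\rangle=k_{\lambda,t}(x,x)/\lambda$. This is exactly the prefactor appearing in the statement.

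Next I would decompose the error. Substituting $Y_t=\Phi_t f_\star+(\xi_1,\dots,\xi_t)^\top$ and using $\Phi_t^\ast\Phi_t=V_t-\lambda I$ gives $f_{\lambda,t}(x)-f_\star(x)=-\lambda\langle\phi(x),V_t^{-1}f_\star\rangle+\langle\phi(x),V_t^{-1}S_t\rangle$, where $S_t:=\Phi_t^\ast(\xi_1,\dots,\xi_t)^\top=\sum_{t'\le t}\xi_{t'}\phi(x_{t'})$ is the noise accumulated in feature space. Both terms are handled by Cauchy--Schwarz in the $V_t^{-1}$ geometry, $|\langle\phi(x),V_t^{-1}u\rangle|\le\norm{\phi(x)}_{V_t^{-1}}\norm{u}_{V_t^{-1}}$. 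For the bias term, $V_t\succeq\lambda I$ gives $\norm{f_\star}_{V_t^{-1}}\le\norm{f_\star}_{\cK}/\sqrt{\lambda}$, so it is at most $\sqrt{\lambda}\,\norm{f_\star}_{\cK}\norm{\phi(x)}_{V_t^{-1}}$; the noise term is at most $\norm{\phi(x)}_{V_t^{-1}}\norm{S_t}_{V_t^{-1}}$. Factoring out $\norm{\phi(x)}_{V_t^{-1}}=\sqrt{k_{\lambda,t}(x,x)/\lambda}$ already reproduces the claimed structure, and since this step is deterministic in $x$, uniformity over all $x$ will come for free once $\norm{S_t}_{V_t^{-1}}$ is controlled uniformly in $t$.

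The crux is a self-normalized tail bound on the martingale $S_t$: I would show that with probability at least $1-\delta$, simultaneously for all $t$, $\norm{S_t}_{V_t^{-1}}^2\le 2\sigma^2\big(\ln(1/\delta)+\tfrac12\ln(\det V_t/\det(\lambda I))\big)$, generalizing the bounds of \cite{Abbasi2011,Wang2014}. This follows from the method of mixtures: conditional sub-Gaussianity makes $\exp\!\big(\langle v,S_t\rangle-\tfrac{\sigma^2}{2}\langle v,\Phi_t^\ast\Phi_t v\rangle\big)$ a supermartingale for each fixed $v\in\cK$, and integrating $v$ against a centered Gaussian mixing measure with covariance $\propto(\lambda I)^{-1}$ produces the nonnegative supermartingale $\big(\det(\lambda I)/\det V_t\big)^{1/2}\exp\!\big(\tfrac{1}{2\sigma^2}\norm{S_t}_{V_t^{-1}}^2\big)$; Ville's maximal inequality then delivers the uniform-in-$t$ statement. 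The one genuine obstacle is that $\cK$ may be infinite-dimensional, so the Gaussian mixture is not literally well defined; I would resolve this by observing that $S_t$, $\phi(x)$, and the relevant action of $V_t^{-1}$ all live in the finite-dimensional subspace spanned by $\{\phi(x),\phi(x_1),\dots,\phi(x_t)\}$, reducing the argument to a finite-dimensional instance of the inequality with $\lambda I$ as the prior on that subspace.

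Finally I would identify the determinant with the information gain. The Weinstein--Aronszajn identity gives $\det V_t/\det(\lambda I)=\det(I_t+\tfrac1\lambda{\bf K}_t)$, while the rank-one update $\det V_{t'}=\det V_{t'-1}\,(1+\tfrac1\lambda k_{\lambda,t'-1}(x_{t'},x_{t'}))$ telescopes to $\ln\!\big(\det V_t/\det(\lambda I)\big)=\sum_{t'=1}^t\ln\!\big(1+\tfrac1\lambda k_{\lambda,t'-1}(x_{t'},x_{t'})\big)=2\gamma_t(\lambda)$. Hence $\norm{S_t}_{V_t^{-1}}\le\sigma\sqrt{2\ln(1/\delta)+2\gamma_t(\lambda)}$ on the good event, and combining the bias and noise bounds with the prefactor $\sqrt{k_{\lambda,t}(x,x)/\lambda}$ yields the theorem. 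Since the whole conclusion rests on the single event of the self-normalized inequality, which is simultaneous over $t$, and the remaining steps are deterministic in $x$, the bound indeed holds simultaneously over all $x\in\cX$ and all $t\ge0$.
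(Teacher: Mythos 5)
Your overall architecture is the same as the paper's: write the estimator in feature space via the push-through identity, decompose the error into a bias term and a self-normalized noise term, apply Cauchy--Schwarz in the $V_t^{-1}$ geometry (with $\norm{\phi(x)}_{V_t^{-1}}^2=k_{\lambda,t}(x,x)/\lambda$ and $\norm{f_\star}_{V_t^{-1}}\le\norm{f_\star}_{\cK}/\sqrt{\lambda}$), bound $\norm{S_t}_{V_t^{-1}}$ by the method of mixtures, identify $\ln\big(\det V_t/\det(\lambda I)\big)$ with $2\gamma_t(\lambda)$ through the rank-one determinant update, and get uniformity in $t$ from a maximal inequality. All of these steps are correct, and your packaging of the uniformity via Ville's inequality is equivalent to the paper's stopping-time-plus-Fatou construction.

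The gap is in the one step the paper itself flags as non-trivial: making the Gaussian mixture rigorous when $\cK$ is infinite-dimensional. Your proposed fix --- ``$S_t$, $\phi(x)$ and the action of $V_t^{-1}$ live in the finite-dimensional subspace spanned by $\{\phi(x),\phi(x_1),\dots,\phi(x_t)\}$, so apply the finite-dimensional inequality there'' --- does not work as stated, because that subspace is \emph{random} and \emph{grows with $t$}. The method of mixtures requires a single mixing measure fixed in advance: the supermartingale property of $t\mapsto\int\exp\big(\langle v,S_t\rangle-\tfrac{\sigma^2}{2}\norm{\Phi_t v}^2\big)\,d\mu(v)$, and hence Ville's inequality or any optional-stopping argument, is only valid when $\mu$ does not depend on $t$ or on the data. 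A Gaussian measure supported on the span of the observed points is $\cH_t$-measurable rather than fixed, so the integrated process is no longer a supermartingale across time steps, and the points cannot be enumerated in advance since they are chosen adaptively; for the same reason you cannot simply invoke the finite-dimensional theorem of \cite{Abbasi2011} ``on that subspace,'' as it presupposes a fixed ambient dimension and a fixed regularization matrix. The paper circumvents this differently: it fixes the \emph{deterministic} Mercer basis of $\cK$, truncates to the first $d$ coordinates --- a fixed, data-independent subspace on which the mixture $\cN(0,I_d)$ is legitimate --- proves $\Esp[M_{\tau,d}]\leq 1$ for every $d$, with $\tau$ the stopping time attached to the bad event, and then passes to the limit $d\to\infty$ by Fatou's lemma. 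Your proof becomes complete if you replace your reduction by this truncation-and-limit argument, or by any other device that keeps the mixing measure deterministic and time-independent.
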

\begin{myremark}
	This result should be considered as an extension of \cite[Theorem~2]{Abbasi2011} from finite-dimensional to possibly infinite dimensional function space. It is a non-trivial result as the Laplace method must be amended in order to be applied.
\end{myremark}
\begin{myremark}
This result holds uniformly over all $x\in\cX$ and most importantly over all $t\geq 0$, thanks to a random stopping time construction (related to the occurrence of \emph{bad events}) and a self-normalized inequality handling this stopping time. This is in contrast with results such as \cite{Wang2014}, that are only stated separately for each $t$.
\end{myremark}
\begin{myremark}
	The quantity $\gamma_{t}(\lambda)$ directly generalizes the classical notion of information gain~\citep{Cover1991}, that is recovered for the choice of regularization $\lambda=\sigma^2$.	
\end{myremark}

The case when $\lambda=\lambda^\star\eqdef\sigma^2/\|f\|_\cK^2$ is of special interest, since we get on the one hand
\beqan
f_{t}^\star(x) &=& k_t(x)^\top({\bf K_t}+ \lambda^\star I_t)^{-1}Y_t\\
s_{t}^{2\star}(x) &=& \|f\|_\cK^2k_t(x,x)\,\, \text{with } k_t(x,x)=k(x,x) - k_t(x)^\top
({\bf K_t}+  \lambda^\star I_t)^{-1}k_t(x)
\eeqan	
and on the other hand 
$\|f\|_\cK k_t(x,x)^{1/2}\Big[1
+\sqrt{2\ln(1/\delta) + 2\gamma_t(\lambda^\star)}
\Big]$.
In practice however, neither $\|f\|_\cK^2$ nor $\sigma^2$ may be known exactly. In this paper, we  assume that an upper bound $C$ is given on $\|f\|_\cK$. Then, we want to build 
an estimate of $\sigma^2$ at each time $t$ in order to tune $\lambda$. Using a sequence of regularization parameters $(\lambda_t)_{t \geq 1}$ that is tuned adaptively based on the past observations requires to modify the previous theorem (it is only valid for a deterministic $\lambda$) into the following more general statement:
%


\begin{mytheorem}[Streaming  Kernel Least-Squares with online tuning]\label{thm:kLSmeantuned}	
Under the same assumption as Theorem~\ref{thm:kLSmean}, let $\blambda=(\lambda_t)_{t\geq 1}$  be a  \textbf{predictable} positive sequence of parameters, that is $\lambda_t$ is $\cH_{t-1}$-measurable for each $t$. 
Assume that for each $t$, $\lambda_t \geq \lambda_\star$ holds for a positive constant $\lambda_\star$.
Let us define the modified posterior mean and variances 
after observing $Y_t\in\Real^t$ as
\beqan
\begin{cases}
	f_{\blambda,t}(x)\!\! &= k_{t}(x)^\top({\bf K}_{t}+ \lambda_{t+1}I_t)^{-1}Y_t\\
	s_{\blambda,t}^2(x)\!\! &= \frac{\sigma^2}{\lambda_{t+1}} k_{\lambda_{t+1},t}(x,x)\,\, \text{with } k_{\lambda,t}(x,x)\!=\!k(x,x) \!-\! k_{t}(x)^\top\!
	({\bf K}_{t}+ \lambda I_t)^{-1}k_{t}(x)\,,
\end{cases}
\eeqan	
where $k_{t}(x) \!=\!  (k(x,x_{t'}))_{t'\leq t}$, and ${\bf K}_{t} \!=\! (K(x_s,x_{s'}))_{s,s'\leq t}$.
Then for all $\delta\!\in\![0,1]$, with probability higher than $1-\delta$, it holds simultaneously over all $x\in\cX$ and $t\geq0$
\beqan
|\!f_\star(x)\!-\!f_{\blambda,t}(x)| \!\leq \!
\sqrt{\frac{k_{\lambda_{t+1},t}(x,x)}{\lambda_{t+1}}}\Big[\!\sqrt{\!\lambda_{t\!+\!1}}\norm{f_\star}_{\cK}
\!+\!
\sigma\sqrt{2\ln(\!1/\delta) \!+\! 2\gamma_{t}(\lambda_\star)}
\Big]\,.
\eeqan
The proof is presented in Appendix~\ref{app:concentration}.
\end{mytheorem}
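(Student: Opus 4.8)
The plan is to lift the problem to the feature space of the kernel and to isolate the single place where the regularization enters, so that the self-normalized machinery already established for Theorem~\ref{thm:kLSmean} can be reused with essentially no modification. Let $\phi:\cX\to\cK$ be the canonical feature map, so that $k(x,x')=\langle\phi(x),\phi(x')\rangle_\cK$ and $f_\star(x)=\langle\phi(x),f_\star\rangle_\cK$. I would introduce the (possibly infinite-dimensional) regularized covariance operator $V_{\lambda,t}=\lambda I+A_t$ with $A_t=\sum_{s=1}^t\phi(x_s)\otimes\phi(x_s)$, together with the noise element $S_t=\sum_{s=1}^t\phi(x_s)\xi_s$. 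Writing $y_s=\langle\phi(x_s),f_\star\rangle_\cK+\xi_s$ and using $A_tf_\star=(V_{\lambda,t}-\lambda I)f_\star$, a direct computation (via the push-through identity $\Phi_t^\top(\mathbf{K}_t+\lambda I)^{-1}=V_{\lambda,t}^{-1}\Phi_t^\top$) gives the pathwise identity
\[
f_{\blambda,t}-f_\star=-\lambda_{t+1}V_{\lambda_{t+1},t}^{-1}f_\star+V_{\lambda_{t+1},t}^{-1}S_t,
\]
where the prescribed predictable value $\lambda=\lambda_{t+1}$ has been substituted. Pairing with $\phi(x)$ and applying Cauchy--Schwarz in the $V_{\lambda_{t+1},t}^{-1}$ geometry yields
\[
|f_\star(x)-f_{\blambda,t}(x)|\le\|\phi(x)\|_{V_{\lambda_{t+1},t}^{-1}}\Big(\lambda_{t+1}\|f_\star\|_{V_{\lambda_{t+1},t}^{-1}}+\|S_t\|_{V_{\lambda_{t+1},t}^{-1}}\Big).
\]

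First I would dispatch the two deterministic factors. A Woodbury identity gives $\|\phi(x)\|_{V_{\lambda_{t+1},t}^{-1}}^2=\langle\phi(x),V_{\lambda_{t+1},t}^{-1}\phi(x)\rangle_\cK=k_{\lambda_{t+1},t}(x,x)/\lambda_{t+1}$, which is exactly the prefactor of the claim, while $V_{\lambda_{t+1},t}\succeq\lambda_{t+1}I$ gives $\|f_\star\|_{V_{\lambda_{t+1},t}^{-1}}\le\|f_\star\|_\cK/\sqrt{\lambda_{t+1}}$, so the noise-free term contributes precisely $\sqrt{k_{\lambda_{t+1},t}(x,x)/\lambda_{t+1}}\cdot\sqrt{\lambda_{t+1}}\,\|f_\star\|_\cK$. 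It then only remains to control the self-normalized noise term $\|S_t\|_{V_{\lambda_{t+1},t}^{-1}}$ uniformly over $t$.

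The key observation, and the reason a predictable regularization causes no harm, is that neither $S_t$ nor $A_t$ depends on the regularization parameter. Consequently the exponential supermartingale $M_t^\eta=\exp(\langle\eta,S_t\rangle_\cK-\tfrac{\sigma^2}{2}\|\eta\|_{A_t}^2)$ underlying Theorem~\ref{thm:kLSmean}, the Gaussian mixture (Laplace method) over $\eta$, and the random-stopping-time construction that delivers uniformity in $t$ can all be run with the single fixed prior associated to the constant $\lambda_\star$, with no reference to the random sequence $\blambda$. This yields, with probability at least $1-\delta$ and simultaneously over all $t\ge0$, the fixed-$\lambda_\star$ bound $\|S_t\|_{V_{\lambda_\star,t}^{-1}}\le\sigma\sqrt{2\ln(1/\delta)+2\gamma_t(\lambda_\star)}$, the log-determinant $\tfrac12\ln\det(I+\lambda_\star^{-1}A_t)$ produced by the mixture being exactly $\gamma_t(\lambda_\star)$.

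Finally I would transfer this bound to the time-varying regularization by a one-line Loewner-order argument: since $\lambda_{t+1}\ge\lambda_\star$ holds pathwise by assumption, $V_{\lambda_{t+1},t}=\lambda_{t+1}I+A_t\succeq\lambda_\star I+A_t=V_{\lambda_\star,t}$, hence $V_{\lambda_{t+1},t}^{-1}\preceq V_{\lambda_\star,t}^{-1}$ and $\|S_t\|_{V_{\lambda_{t+1},t}^{-1}}\le\|S_t\|_{V_{\lambda_\star,t}^{-1}}$ on the same event. Substituting the two factors into the Cauchy--Schwarz bound closes the proof. I expect the main obstacle to be not the new monotonicity step, which is elementary, but the bookkeeping required to justify that the mixture and stopping-time arguments of Theorem~\ref{thm:kLSmean} are genuinely \emph{decoupled} from the regularization choice: namely, checking that the infinite-dimensional mixing density may be fixed at $\lambda_\star$ while the estimator still uses the random $\lambda_{t+1}$, and that predictability of $\blambda$ is used only to make $f_{\blambda,t}$ a causal estimator, the concentration step itself needing nothing beyond the pathwise inequality $\lambda_{t+1}\ge\lambda_\star$.
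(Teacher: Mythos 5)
Your proposal is correct and follows essentially the same route as the paper's own proof: the same feature-space decomposition of $f_{\blambda,t}-f_\star$ into a bias term and a self-normalized noise term, the same Cauchy--Schwarz step in the $V_{\lambda_{t+1},t}^{-1}$ geometry, the same monotonicity transfer from the random $\lambda_{t+1}$ to the deterministic $\lambda_\star$ (your Loewner-order one-liner is exactly the content of the paper's Lemma~\ref{lem:increasingnorm}), and the same supermartingale-plus-Laplace-mixture-plus-stopping-time machinery run entirely at $\lambda_\star$. You also correctly identified the crux, namely that $S_t$ and $A_t$ are independent of the regularization, so predictability of $\blambda$ plays no role in the concentration step itself.
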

The regularization parameter $\lambda_{t+1}$ is therefore used in conjunction with previous data up to time $t$ to provide the posterior regression model (mean and variance) that is used in return to acquire the next observation $y_{t+1}$ on point $x_{t+1}$.
\begin{myremark}
	Since $\lambda_t$ is allowed to be $\cH_{t-1}$-measurable,
	this gives theoretical guarantees for virtually any adaptive tuning procedure of the regularization parameter. 
\end{myremark}

\begin{myremark}
	The assumption that $\lambda_t\geq \lambda_\star$ will be naturally satisfied for the choice of regularization we consider.
\end{myremark}
\section{Variance estimation}
\label{sec:variance_estimation}

We now focus on the estimation of the variance parameter of the noise in the case when it is unknown, or loosely known. Theorem~\ref{thm:kLSmeantuned} suggests 
to define the sequence $(\lambda_t)_{t \geq 1}$ by
%
\beqa\label{eqn:lambdat}
\lambda_t = \sigma_{+,t-1}^2/C^2\quad\text{with}\quad
\sigma_{+,t} =  \min\{\tilde \sigma_{+,t}, \sigma_{+,t-1} \}\quad\text{and}\quad
\sigma_{+,0} = \sigma_+\,,
\eeqa
%
where $\sigma_+\geq \sigma$ is an initial loose upper bound on $\sigma$ 
and $\tilde \sigma_{+,t}$ is an upper-bound estimate on $\sigma$ built from all observations gathered up to time $t$ (inclusively). This ensures that $\lambda_t$ is $\cH_{t-1}$ measurable for all $t$ and satisfies $\lambda_t\geq \lambda_\star$ with high probability, where $\lambda_\star = \sigma^2/C^2$. 
The crux is now to define the upper-bound estimate $\sigma_{+,t}$ on $\sigma$.
In order to get a variance estimate, one obviously requires more than the sub-Gaussian assumption, since the term $\sigma^2$ has no reason to be tight (the inequality remains valid when $\sigma^2$ is replaced with any larger value). In order to convey the minimality of $\sigma^2$, we assume that the noise sequence is both $\sigma$-sub-Gaussian and second-order\footnote{The term on the right-hand side corresponds to the cumulant generating function of the chi-squared distribution with 1 degree of freedom. This assumption naturally holds for Gaussian variables.} $\sigma$-sub-Gaussian, in the sense that
\beqan
\forall t, \forall \gamma < \frac{1}{2\sigma^2}
\quad
\ln\Esp\bigg[\exp(\gamma \xi_t^2) \bigg|\cH_{t-1}\bigg] \leq -\frac{1}{2}\ln\Big(1-2\gamma \sigma^2\Big)\,.
\eeqan

\vspace{-4mm}
\begin{myremark}
	To avoid any technicality, one may assume that $\xi_t|\cH_{t-1}$ is exactly $\cN(0,\sigma^2)$, in which case it is trivially second-order $\sigma$-sub-Gaussian.
\end{myremark}

Now let $\quad\hat \sigma^2_{\lambda,T} = \frac{1}{T}\sum_{t=1}^T (y_t- f_{\lambda,T}(x_t))^2\quad$ denote the (slightly biased) variance estimate for a regularization parameter $\lambda$.

\vspace{-2mm}
\begin{mytheorem}[Streaming Kernel variance estimate]
	\label{thm:rkhs_var_bound}
	Assume we  are in the predictable second-order $\sigma$-sub-Gaussian streaming regression model, with a predictable positive sequence $\blambda$ such that $\lambda_t \geq \lambda_\star$ holds for all $t$.	Let us introduce the following quantities
	\beqan
	&C_{t}(\delta)=\ln(e/\delta)\big[1+\ln(\pi^2\ln(t)/6)/\ln(1/\delta)\big],\qquad D_{\lambda,t}(\delta)= 2\ln(1/\delta) +2\gamma_t(\lambda)\\
	&\text{
		and finally}\quad \alpha = \max \Big( 1 - \sqrt{\frac{C_t(\delta')}{t}} - \sqrt{\frac{C_t(\delta') + 2 D_{\lambda_\star,t}(\delta')}{t}},0 \Big)\,.
\eeqan
	Then, let us introduce the following variance bounds, defined differently depending on whether a deterministic  upper bound $\sigma_{+} \geq \sigma$ is known (case 1)  or not (case 2).
	\beqan
	 \sigma_{+,t}(\lambda,\lambda_\star) 
	 =\begin{cases}
\hat \sigma_{\lambda,t}+ \sigma_+ \bigg( \sqrt{\frac{C_t(\delta')}{t}} + \sqrt{\frac{C_t(\delta')+2D_{\lambda_\star,t}(\delta')}{t}} \bigg) + \sqrt{\frac{2 \sigma_+ \lVert f_\star \rVert_\cK \sqrt{\lambda D_{\lambda_\star,t}(\delta')}}{t}} 
&\text{(case 1)}\\
\frac{1}{\alpha^2} \Bigg( \sqrt{\hat \sigma_{\lambda,t} \alpha+\frac{\lVert  f_\star \rVert_\cK \sqrt{\lambda D_{\lambda_\star,t}(\delta')}}{2t}} + \sqrt{\frac{\lVert  f_\star \rVert_\cK \sqrt{\lambda D_{\lambda_\star,t}(\delta')}}{2t}} \Bigg)^2&
\text{(case 2)}
\end{cases}\\
 \sigma_{-,t}(\lambda) =
 \begin{cases}
\hat \sigma_{\lambda,t} - \sigma_+ \sqrt{\frac{2C_t(\delta')}{t}} - \lVert f_\star \rVert_\cK \sqrt{\frac{\lambda}{t} \bigg( 1 - \frac{1}{\max_{t' \leq t} (1 + \frac{1}{\lambda}k_{\lambda,t'-1}(x_{t'}, x_{t'}))} \bigg)}&\text{(case 1)}\\
\bigg[ \hat \sigma_{\lambda,t} - \lVert f_\star \rVert_\cK \sqrt{\frac{\lambda}{t} \bigg(1 - \frac{1}{\max_{t' \leq t}(1 + \frac{1}{\lambda}k_{\lambda,t'-1}(x_{t'}, x_{t'}))} \bigg)} \bigg] \bigg( 1 + \sqrt{\frac{2 C_t(\delta')}{t}} \bigg)^{-1}&\text{(case 2)}.
\end{cases}
	\eeqan
		Then with probability higher than $1-3\delta'$, it holds simultaneously for all $t\geq0$
	\begin{align*}
	 \sigma_{-,t}(\lambda_t) \leq \sigma \leq \sigma_{+,t}(\lambda_t,\lambda_\star)\,.	
	\end{align*}
The proof is presented in Appendix~\ref{app:variance_estimation}.
\end{mytheorem}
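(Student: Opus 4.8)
The plan is to work directly with the vector of fitted residuals. Writing $P \eqdef \lambda_t({\bf K}_t + \lambda_t I_t)^{-1}$, which is symmetric with $0\prec P\preceq I_t$, the residuals satisfy $Y_t - (f_{\lambda_t,t}(x_s))_{s\leq t} = P\,Y_t$, so that $\hat\sigma_{\lambda_t,t} = \tfrac{1}{\sqrt t}\norm{P(F_\star + \Xi)}$ with $F_\star\eqdef(f_\star(x_s))_{s\leq t}$ and $\Xi\eqdef(\xi_s)_{s\leq t}$. Expanding the square, everything reduces to controlling three pieces: a deterministic bias energy $\tfrac1t\norm{PF_\star}^2$, a noise energy $\tfrac1t\norm{P\Xi}^2$ to be compared against $\sigma^2$, and a bias--noise cross term $\tfrac2t\langle P^2F_\star,\Xi\rangle$.

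First I would dispose of the bias. The push-through identity together with the reproducing property gives $F_\star^\top({\bf K}_t+\lambda_t I_t)^{-1}F_\star\leq\norm{f_\star}_\cK^2$, and since $P^2\preceq\lambda_t({\bf K}_t+\lambda_t I_t)^{-1}$ this yields the deterministic bound $\tfrac1{\sqrt t}\norm{PF_\star}\leq\norm{f_\star}_\cK\sqrt{\lambda_t/t}$; tracking the eigenvalues of $P^2$ against the per-step leverages $k_{\lambda_t,t'-1}(x_{t'},x_{t'})$ sharpens the prefactor into the $\bigl(1-1/\max_{t'\leq t}(1+\tfrac1{\lambda_t}k_{\lambda_t,t'-1}(x_{t'},x_{t'}))\bigr)$ appearing in $\sigma_{-,t}$.

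Next comes the noise energy. The second-order $\sigma$-sub-Gaussian hypothesis supplies a chi-squared-type cumulant bound valid for every $\gamma<1/(2\sigma^2)$, hence control of both tails of $\sum_{s\leq t}\xi_s^2$. I would build the two associated supermartingales and, exactly as in Theorem~\ref{thm:kLSmean}, pair a self-normalized inequality with a random-stopping-time and dyadic peeling argument --- the $\ln(\pi^2\ln(t)/6)$ inside $C_t(\delta')$ being precisely the peeling correction --- to get, simultaneously over all $t$, one-sided bounds $\tfrac1t\norm{\Xi}^2\leq\sigma^2(1+\sqrt{2C_t(\delta')/t})^2$ and $\tfrac1t\norm{\Xi}^2\geq\sigma^2(1-\sqrt{C_t(\delta')/t})^2$ (the upper- and lower-tail constants genuinely differ, which is why $\sqrt{2C_t/t}$ enters $\sigma_{-,t}$ while $\sqrt{C_t/t}$ enters $\alpha$). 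The lower bound on $\sigma$ then only needs $\norm{P\Xi}\leq\norm{\Xi}$ and the upper tail, producing the $(1+\sqrt{2C_t/t})^{-1}$ shape. For the upper bound on $\sigma$ I instead need a lower bound on $\norm{P\Xi}^2=\norm{\Xi}^2-\Xi^\top(I-P^2)\Xi$, i.e.\ an upper estimate of the filtering loss $\Xi^\top(I-P^2)\Xi$; bounding its trace through $\Tr{I-P^2}\leq 2\Tr{I-P}\leq 4\gamma_t(\lambda_\star)$ (using $\lambda_t\geq\lambda_\star$ and $\tfrac{x}{1+x}\leq\ln(1+x)$) and its fluctuation by the same self-normalized device introduces $D_{\lambda_\star,t}(\delta')$ and, combined with the lower tail, assembles into $\alpha=\bigl(1-\sqrt{C_t/t}-\sqrt{(C_t+2D_{\lambda_\star,t})/t}\bigr)_+$, so that $\tfrac1t\norm{P\Xi}^2\geq\alpha^2\sigma^2$.

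The remaining cross term $\tfrac2t\langle P^2F_\star,\Xi\rangle$ is the main obstacle: because $P$ depends on the whole sample $(\xi_s)_{s\leq t}$, the residuals are not martingale differences and naive concentration is unavailable. Here I would invoke the self-normalized inequality underlying Theorem~\ref{thm:kLSmean}, evaluated at the fixed regularization $\lambda_\star$ so as to remain uniformly valid in $t$, to obtain $\tfrac2t|\langle P^2F_\star,\Xi\rangle|\lesssim\tfrac1t\sigma\norm{f_\star}_\cK\sqrt{\lambda_t D_{\lambda_\star,t}(\delta')}$. Assembling the three pieces sandwiches $\hat\sigma_{\lambda_t,t}$ between expressions in $\sigma$. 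When a deterministic $\sigma_+\geq\sigma$ is known (case~1) I substitute it into every correction to make the bounds explicit. When it is not (case~2) the true $\sigma$ must be retained, yielding an implicit inequality of the form $\alpha\sigma-2\sqrt{c\,\sigma}\leq\hat\sigma_{\lambda_t,t}$ with $c=\norm{f_\star}_\cK\sqrt{\lambda_t D_{\lambda_\star,t}(\delta')}/(2t)$, whose resolution as a quadratic in $\sqrt\sigma$ produces exactly the $\tfrac1{\alpha^2}(\sqrt{\hat\sigma_{\lambda_t,t}\alpha+c}+\sqrt c)^2$ expression. A union bound over the at most three failure events --- the two chi-squared tails and the self-normalized cross-term event, each of probability at most $\delta'$ --- gives the stated confidence $1-3\delta'$, holding simultaneously over all $t$ by the stopping-time construction.
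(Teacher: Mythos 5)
Your proposal is correct and takes essentially the same route as the paper's proof (Theorem~\ref{thm:kLSvar} and Corollary~\ref{cor:rkhs_var_bound} in Appendix~\ref{app:variance_estimation}): the same bias/noise/cross-term decomposition of $t\hat\sigma^2_{\lambda,t}$ --- yours phrased via the residual smoother $P=\lambda({\bf K}_t+\lambda I_t)^{-1}$, the paper's in feature space via $G_{\lambda,\tau}$, which coincide through push-through identities, your cross term $2\langle P^2F_\star,\Xi\rangle$ being exactly the paper's $2\lambda^2{\theta^\star}^\top G_{\lambda,\tau}^{-2}\Phi_\tau^\top E_\tau$. The probabilistic ingredients also match: time-uniform chi-squared tail bounds with peeling (the paper invokes Lemma~\ref{lem:varconcentration} from \cite{Maillard2016}), the self-normalized martingale bound evaluated at the deterministic $\lambda_\star$ via monotonicity in $\lambda$ to handle both the filtering loss and the cross term on a single event, the resolution of a quadratic in $\sqrt{\sigma}$ for case~2, and a union bound over the same three events giving $1-3\delta'$.
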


\vspace{-2mm}
\begin{myremark}
	The case when absolutely no bound is known on the noise $\sigma^2$ is challenging in practice. 
	In this case, it is intuitive that one should not be able to recover the noise with too few samples.
	The bound stated in Theorem~\ref{thm:rkhs_var_bound} (see Appendix~\ref{app:variance_estimation}) supports this intuition, as when the number of observations is too small, then $\alpha=0$ and the corresponding bound becomes trivial ($\sigma\leq \infty$). 
\end{myremark}
\begin{myremark}
	In the variance bounds of Theorem~\ref{thm:kLSvar} the term $\norm{f_\star}_{\cK}$ appears systematically with the factor $\sqrt{\lambda}$. This suggests we need to choose $\lambda$ proportional to $1/\|f_\star\|_\cK^2$, which gives further justification to the target $\lambda_\star= \sigma^2/C^2$, where $C$ is a known upper bound on $\norm{f_\star}$.
\end{myremark}
\begin{myremark}
	In practice, we advice to choose the best of case 1 and case 2 bounds when $\sigma_+\geq \sigma$ is known.
\end{myremark}

In order to \textit{estimate} the upper bound $\sigma_{+,t}(\lambda, \lambda_\star)$, one needs at least a lower-bound on $\lambda_\star$. Let us define
\beqa\label{eqn:sigma_minus}
\sigma_{-,t} =  \max\{\tilde \sigma_{-,t}, \sigma_{-,t-1} \}\quad\text{with}\quad
\sigma_{-,0} = \sigma_-\,,
\eeqa
where $0 \leq \sigma_- \leq \sigma$ is a initial lower-bound on $\sigma$ and $\tilde \sigma_{-,t}$ is a lower-bound estimate on $\sigma$ built from all observations gathered up to time $t$ (inclusively). Then, one way to proceed is, at each time step $t \geq 1$, to build an estimate $\tilde \sigma_{-,t} = \sigma_{-,t}(\lambda)$, which in return can be used to compute the lower quantity $\lambda_- \leq \lambda_\star$, and obtain the estimate $\tilde \sigma_{+,t}=\sigma_{+,t}(\lambda, \lambda_-) \geq \sigma_{+,t}(\lambda, \lambda_\star)$. Then, we compute the predictable sequence $\blambda$ as described by equation~\ref{eqn:lambdat}. Further replacing the variance $\sigma$ with its estimate $\sigma_{+,t}$ using a union bound in the result of Theorem~\ref{thm:kLSmeantuned}, we derive confidence bounds that are fully computable in the context where the regularization parameter is adaptively tuned and the function noise is unknown. 
This is summarized in the following empirical Bernstein-style inequality:

\begin{mycorollary}[Kernel empirical-Bernstein inequality]\label{cor:empBernstein}
	Assume that $C\geq \|f\|_\cK$.
    Let us define the following noise lower-bound for each $t\geq 1$
    \beqan
    \sigma_{-,t} = \max \{ \sigma_{-,t}(\lambda_{t-1}), \sigma_{-,t-1} \}
    \eeqan
    and define $\lambda_{-} = \sigma^2_{-,t}/C^2$ as the corresponding lower bound on $\lambda_\star$.
	Then, let us define the following noise upper bound for each $t\geq 1$
	\beqan
	\sigma_{+,t} \!= \! \min\{ \sigma_{+,t}(\lambda_{t-1},\lambda_-), \sigma_{+,t-1} \}\,.
	\eeqan
    Define the regularization parameterizing the regression model used for acquiring observation at time $t$
    to be $\lambda_t = \sigma_{+,t}^2/C^2$, according to Equation~\ref{eqn:lambdat}.
	Then with probability higher than $1-4\delta$, the following is valid simultaneously  for all $x\in\cX$ and $t\geq0$,
    \begin{align}
        &\big|\!f_\star(x)\!-\!f_{\lambda_t,t}(x)\big| \!\leq \! \sqrt{\frac{k_{\lambda_t,t}(x,x)}{\lambda_t}}B_{\lambda_t,t}(\delta)
        \qquad \text{where} \nonumber \\
        \label{eqn:Bt}
        & B_{\lambda_t,t}(\delta) = \!\sqrt{\!\lambda_t}C \!+\! \sigma_{+,t} \sqrt{2\ln(1/\delta) + 2 \gamma_t(\lambda_-)}\,.
    \end{align}  
    %
\end{mycorollary}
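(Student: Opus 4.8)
The plan is to assemble Corollary~\ref{cor:empBernstein} by combining the three main ingredients already proved, namely Theorem~\ref{thm:kLSmeantuned} for the mean concentration under a predictable regularization sequence, Theorem~\ref{thm:rkhs_var_bound} for the two-sided variance bounds, and a union bound to control the failure probabilities. First I would verify that the construction of $\sigma_{+,t}$ and $\lambda_t$ in the corollary indeed produces a \emph{predictable} sequence satisfying the hypotheses of Theorem~\ref{thm:kLSmeantuned}: since $\sigma_{+,t}$ depends only on observations up to time $t-1$ (through $\sigma_{+,t}(\lambda_{t-1},\lambda_-)$ and the running minimum), the induced $\lambda_t=\sigma_{+,t}^2/C^2$ is $\cH_{t-1}$-measurable, and the monotone running-minimum construction guarantees $\lambda_t\geq\lambda_\star$ on the event where the variance upper bound holds. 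This is precisely the role of the target $\lambda_\star=\sigma^2/C^2$ and the assumption $C\geq\|f\|_\cK$.

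Next I would quantify the failure probabilities. Theorem~\ref{thm:rkhs_var_bound} gives, with probability at least $1-3\delta'$, the two-sided bound $\sigma_{-,t}(\lambda_t)\leq\sigma\leq\sigma_{+,t}(\lambda_t,\lambda_\star)$ simultaneously over all $t$; on this event the upper estimate $\sigma_{+,t}$ used in the corollary dominates $\sigma$, and the lower estimate $\sigma_{-,t}$ yields $\lambda_-=\sigma_{-,t}^2/C^2\leq\lambda_\star$. Theorem~\ref{thm:kLSmeantuned} contributes a further $\delta$ probability of failure for the mean concentration. Setting $\delta'=\delta$ (or rebalancing the constants) and taking a union bound gives the stated $1-4\delta$. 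I would then substitute the bound $\sigma\leq\sigma_{+,t}$ into the concentration inequality of Theorem~\ref{thm:kLSmeantuned} and use monotonicity of the information gain, i.e.\ that $\gamma_t(\lambda)$ is decreasing in $\lambda$ so that $\gamma_t(\lambda_\star)\leq\gamma_t(\lambda_-)$ whenever $\lambda_-\leq\lambda_\star$, to replace the unknown $\gamma_t(\lambda_\star)$ by the computable $\gamma_t(\lambda_-)$. Replacing $\sqrt{\lambda_{t+1}}\|f_\star\|_\cK$ by $\sqrt{\lambda_t}C$ via $\|f_\star\|_\cK\leq C$ then yields the explicit bound $B_{\lambda_t,t}(\delta)$ in~\eqref{eqn:Bt}.

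The subtle point, and the step I expect to require the most care, is the interaction between the two events. The variance bound in Theorem~\ref{thm:rkhs_var_bound} is stated for an arbitrary predictable sequence with $\lambda_t\geq\lambda_\star$, yet here $\lambda_t$ is itself defined \emph{through} the variance estimate, creating a circular dependency: $\sigma_{+,t}$ determines $\lambda_t$, while the validity of $\sigma_{+,t}\geq\sigma$ presupposes $\lambda_t\geq\lambda_\star$. I would resolve this by an induction/conditioning argument on the good event: one shows that on the intersection of the variance event and the mean event, the inequality $\sigma\leq\sigma_{+,t}$ propagates forward in time so that $\lambda_t\geq\lambda_\star$ holds at every step, which in turn validates the application of Theorem~\ref{thm:rkhs_var_bound} at the next step. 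The monotone running minima in the definitions of $\sigma_{+,t}$ and $\sigma_{-,t}$ are exactly what make this propagation work, since they prevent the estimates from oscillating below (respectively above) the true $\sigma$ once the good event is entered.

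Finally I would note that because the confidence sets of both theorems hold \emph{simultaneously over all} $t\geq 0$ and $x\in\cX$ (thanks to their self-normalized, stopping-time constructions), no additional union bound over $t$ is needed beyond the four-event split; the uniformity is inherited directly. I would then state that the remaining manipulations, plugging the definitions of $C_t(\delta')$, $D_{\lambda,t}(\delta')$, and $\alpha$ into the closed form of $\sigma_{+,t}$ and tracking constants, are routine, and defer them together with the full propagation argument to the appendix.
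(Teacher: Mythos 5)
Your proposal is correct and follows essentially the same route the paper takes (the paper never spells out a separate proof of Corollary~\ref{cor:empBernstein}; it is obtained exactly as you describe, by a union bound combining Theorem~\ref{thm:kLSmeantuned} with the variance bounds of Theorem~\ref{thm:rkhs_var_bound}, then substituting $\|f_\star\|_\cK\leq C$, $\sigma\leq\sigma_{+,t}$, and $\gamma_t(\lambda_\star)\leq\gamma_t(\lambda_-)$ via monotonicity of the information gain in $\lambda$). Your explicit treatment of the circular dependency between $\lambda_t$ and the variance estimate, resolved by propagating the good event forward through the running min/max constructions, is a point the paper leaves implicit, and handling it carefully as you do is the right call.
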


\begin{myremark}
	This result is especially interesting since it provides a fully empirical confidence envelope function around $f_\star$. When an initial bound on the noise $\sigma_+$ is known and considered to be tight, one may simply choose the constant deterministic sequence $\blambda = (\lambda,\dots,\lambda)$, in which case the same result holds for $\lambda_- = \lambda$ and $\sigma_{+,t} = \sigma_+$.
\end{myremark}

We observe from Theorem~\ref{thm:rkhs_var_bound} that the tightness of the noise estimates depends on the $\lambda$ parameter that is used for computing $\tilde \sigma_{-,t}$ and $\tilde \sigma_{+,t}$. 
Since $\sigma^2/C^2 \leq \lambda_t \leq \sigma_+^2/C^2$ holds with high probability by construction, using such an adaptive 
$\lambda_t$ should yield tighter bounds than  using a fixed $\sigma_+^2/C^2$.
This is supported by the numerical experiments of  Section~\ref{sec:expes:empirical_variance_estimage}.


\section{Application to kernelized bandits}
\label{sec:bandits}

Here is a direct application of our results in the framework of stochastic multi-armed bandits with structured arms embedded in an RKHS~\citep{Srinivas2010,Valko2013}.
At each time step $t \geq 1$, a bandit algorithm recommends a point $x_t$ to sample and observes a noisy outcome $y_t = f_\star(x_t) + \xi_t$, where $\xi_t \sim \Normal(0, \sigma^2)$. Let $\star = \argmax_{x\in\bX} f_\star(x)$ be the optimal arm. The goal of an algorithm is to pick a sequence of points $(x_t)_{t \leq T}$ that minimizes the cumulative regret
\begin{align}
    \label{eqn:regret}
    \kR_T = \sum_{t=1}^T f_\star(\star) - f_\star(x_t).
\end{align}
In this context, one needs to build tight confidence sets on the mean of each arm, and this will be given by Corollary~\ref{cor:empBernstein}. We illustrate our technique on two main bandit strategies: Upper Confidence Bound (UCB)~\citep{Auer2002} and Thompson Sampling (TS)~\citep{Thompson1933}; both are adapted here to the kernel setting with unknown variance.


\begin{mydefinition}[Information gain with unknown variance]
We define the information gain at time $t$  for a regularization parameter $\lambda$ to be
\beqan
\gamma_t(\lambda)= \frac{1}{2}\sum_{t'=1}^t \ln\Big(1+ \frac{1}{\lambda}k_{\lambda,t'-1}(x_{t'},x_{t'})\Big)\,.
\eeqan
\end{mydefinition}
This definition directly extends the usual definition of information gain, that can be recovered by choosing $\lambda=\sigma^2$. The following extension of Lemma~7 in \cite{Wang2014} (see also \cite{Srinivas2012}) to the case when the variance is estimated plays an important role in the regret analysis of both algorithms.
\begin{mylemma}[From sum of variances to information gain]
\label{lem:infogain}
	Let us assume that the kernel is bounded by $1$ in the sense that $\sup_{x\in\cX}k(x,x)\leq 1$.
	Let $\blambda$ be any sequence such that $\forall \lambda \in \bslambda,  \lambda \geq \sigma^2 / C^2$. For instance, this is satisfied with high probability when using
	Equation~\ref{eqn:lambdat}. Then, it holds
	\beqan
	\sum_{t=1}^T s_{\blambda,t-1}^2(x_t) = \sigma^2 \sum_{t=1}^T \frac{1}{\lambda_t}k_{\lambda_t,t-1}(x_t,x_t) \leq \frac{2 C^2}{\ln(1 + C^2/\sigma^2)} \gamma_T(\sigma^2/C^2)\,.
	\eeqan
\end{mylemma}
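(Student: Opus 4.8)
The plan is as follows. The stated equality is immediate from the definition of the posterior variance: by construction $s_{\blambda,t-1}^2(x_t) = \frac{\sigma^2}{\lambda_t} k_{\lambda_t,t-1}(x_t,x_t)$, so the entire content of the lemma is the final inequality. Writing $u_t = \frac{1}{\lambda_t} k_{\lambda_t,t-1}(x_t,x_t)$ so that $\sum_{t=1}^T s_{\blambda,t-1}^2(x_t) = \sigma^2 \sum_{t=1}^T u_t$, the goal reduces to showing $\sum_t u_t \leq \frac{2 C^2/\sigma^2}{\ln(1+C^2/\sigma^2)}\,\gamma_T(\sigma^2/C^2)$.

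The first step, which I expect to be the crux, is to pass from the adaptive $\lambda_t$ inside the sum to the fixed target $\lambda_\star = \sigma^2/C^2$ that appears in the information gain. To do this I would rewrite the posterior variance in operator form. Working in the finite-dimensional span of $\phi(x_1),\dots,\phi(x_t)$ (where $\phi$ is the canonical feature map, so that every object below is a genuine matrix) and setting $V_{t-1} = \sum_{s<t}\phi(x_s)\phi(x_s)^\top$, the push-through / Woodbury identity gives
\[
\frac{1}{\lambda}\,k_{\lambda,t-1}(x,x) = \phi(x)^\top\big(V_{t-1}+\lambda I\big)^{-1}\phi(x)\,.
\]
From this form it is transparent that $\lambda \mapsto \frac{1}{\lambda}k_{\lambda,t-1}(x,x)$ is nonincreasing, since $\lambda \mapsto (V_{t-1}+\lambda I)^{-1}$ is nonincreasing in the positive semidefinite order. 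As $\lambda_t \geq \lambda_\star$ by assumption, this yields $u_t \leq \frac{1}{\lambda_\star}k_{\lambda_\star,t-1}(x_t,x_t)$ for every $t$; call this upper bound $v_t$.

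It then remains to treat the fixed-$\lambda_\star$ sum by the classical argument. Since $({\bf K}_{t-1}+\lambda_\star I)^{-1}\succeq 0$, one has $k_{\lambda_\star,t-1}(x_t,x_t)\leq k(x_t,x_t)\leq 1$, so $v_t = \frac{C^2}{\sigma^2}k_{\lambda_\star,t-1}(x_t,x_t)$ lies in $[0,B]$ with $B = C^2/\sigma^2$. The elementary inequality $u \leq \frac{B}{\ln(1+B)}\ln(1+u)$, valid for $u\in[0,B]$ because $u\mapsto \ln(1+u)/u$ is decreasing, then gives
\[
v_t \leq \frac{C^2/\sigma^2}{\ln(1+C^2/\sigma^2)}\,\ln\!\Big(1+\tfrac{C^2}{\sigma^2}k_{\lambda_\star,t-1}(x_t,x_t)\Big)\,.
\]
Summing over $t$ recognizes $2\,\gamma_T(\sigma^2/C^2)$ on the right-hand side, and multiplying through by $\sigma^2$ produces exactly the claimed constant $\frac{2C^2}{\ln(1+C^2/\sigma^2)}$. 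The only subtlety worth flagging is the operator-form reduction of the first step; once monotonicity in $\lambda$ is established the remainder is routine, and the argument never uses the sequence $\blambda$ beyond the pointwise inequality $\lambda_t\geq\sigma^2/C^2$.
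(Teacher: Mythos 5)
Your proof is correct and follows essentially the same route as the paper's: reduce the adaptive $\lambda_t$ to the fixed $\lambda_\star=\sigma^2/C^2$ via monotonicity of $\lambda\mapsto\frac{1}{\lambda}k_{\lambda,t-1}(x,x)$ (the paper justifies this through its technical lemma on $\lambda\mapsto u^\top(\lambda I+A)^{-1}u$), then cap each term at $C^2/\sigma^2$ using the bounded kernel, and finish with the logarithmic inequality — your bound $u\leq\frac{B}{\ln(1+B)}\ln(1+u)$ on $[0,B]$ is exactly the paper's $\min\{r,\alpha\}\leq\frac{\alpha}{\ln(1+\alpha)}\ln(1+r)$ in disguise. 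The only cosmetic difference is that you make the operator-form monotonicity argument explicit inline, whereas the paper cites it implicitly; the substance is identical.
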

In the sequel, it is useful to bound  the confidence bound term
$B_{\lambda_t,t}(\delta)$ from Equation~\ref{eqn:Bt}.
\begin{mylemma}[Deterministic bound on the confidence bound]\label{lem:Bt}
	Assume that we are given a constant $0<\sigma_{-}< \sigma$, so that 
$\sigma_{t,-} \geq \sigma_{-}$ holds for all $t$. Then for all $t\leq T$,
the confidence bound term is upper-bounded by the following deterministic quantity
\begin{align*}
B_{\lambda_t,t}(\delta) 	&\leq \sigma_+\bigg(1+ \sqrt{2\ln(1/\delta) + 2 \gamma_T(\sigma_{-}^2/C^2)}\bigg)\,.
\end{align*}
Further, we have $\gamma_t(\sigma_{t,-}^2/C^2) = \gamma_t(\sigma_{-}^2/C^2)+O(1/\sqrt{t})$.
\end{mylemma}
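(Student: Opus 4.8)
The plan is to reduce $B_{\lambda_t,t}(\delta)$ to the advertised deterministic quantity by exploiting the specific choice $\lambda_t=\sigma_{+,t}^2/C^2$ together with two monotonicity properties of the information gain. Since $\lambda_t=\sigma_{+,t}^2/C^2$ gives $\sqrt{\lambda_t}\,C=\sigma_{+,t}$, the first term of $B_{\lambda_t,t}(\delta)$ in Equation~\eqref{eqn:Bt} collapses and I can factor $\sigma_{+,t}$ out, writing $B_{\lambda_t,t}(\delta)=\sigma_{+,t}\big(1+\sqrt{2\ln(1/\delta)+2\gamma_t(\lambda_-)}\big)$. It then suffices to control the two factors separately.

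For the prefactor, I would use that $\sigma_{+,t}=\min\{\tilde\sigma_{+,t},\sigma_{+,t-1}\}$ with $\sigma_{+,0}=\sigma_+$ defines a non-increasing sequence, hence $\sigma_{+,t}\le\sigma_+$ for every $t$. For the information-gain factor, recall $\lambda_-=\sigma_{t,-}^2/C^2\ge\sigma_-^2/C^2$ by the standing hypothesis $\sigma_{t,-}\ge\sigma_-$, so the argument of $\gamma_t$ is at least $\sigma_-^2/C^2$. I then invoke two monotonicities: $\gamma_t(\lambda)$ is non-increasing in $\lambda$, and $\gamma_t(\lambda)$ is non-decreasing in $t$. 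Together these give $\gamma_t(\lambda_-)\le\gamma_t(\sigma_-^2/C^2)\le\gamma_T(\sigma_-^2/C^2)$ for all $t\le T$, and substituting both bounds yields the claimed inequality.

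The monotonicity in $t$ is immediate because each summand $\tfrac12\ln(1+\lambda^{-1}k_{\lambda,t'-1}(x_{t'},x_{t'}))$ is non-negative (the posterior variances $k_{\lambda,t'-1}(x_{t'},x_{t'})$ are non-negative). The monotonicity in $\lambda$ is cleanest through the standard log-determinant identity $\gamma_t(\lambda)=\tfrac12\ln\det(I_t+\lambda^{-1}{\bf K}_t)$, obtained by telescoping the incremental posterior variances: since $\lambda^{-1}{\bf K}_t$ is non-increasing in $\lambda$ in the positive semidefinite order, so is its log-determinant. I note that arguing monotonicity term-by-term is deceptively delicate, because $k_{\lambda,t'-1}$ itself depends on $\lambda$; routing through the global log-determinant sidesteps this.

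For the second, asymptotic statement, I would first quantify how fast the running lower bound stabilises: Theorem~\ref{thm:rkhs_var_bound} controls $\sigma_{t,-}$ with an error of order $\tilde O(1/\sqrt t)$ (up to the $C_t(\delta')$ logarithmic factors), so the gap between the regularization $\sigma_{t,-}^2/C^2$ and the reference $\sigma_-^2/C^2$ is of the same order. I would then transfer this rate to $\gamma_t$ by a mean-value argument on the (constant-order) interval of admissible regularizations, using the sensitivity $\partial_\lambda\gamma_t(\lambda)=-\tfrac{1}{2\lambda}\Tr{{\bf K}_t(\lambda I_t+{\bf K}_t)^{-1}}$; note also that one direction, $\gamma_t(\sigma_{t,-}^2/C^2)\le\gamma_t(\sigma_-^2/C^2)$, is already free from the monotonicity above. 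The main obstacle lies precisely in this sensitivity: it carries an effective-dimension factor $\Tr{{\bf K}_t(\lambda I_t+{\bf K}_t)^{-1}}$, which is not a universal constant, so obtaining a clean $O(1/\sqrt t)$ requires controlling this factor on the relevant regularization range and absorbing it, together with the $C_t(\delta')$ logarithmic terms, into the constant hidden by the $O(\cdot)$ notation.
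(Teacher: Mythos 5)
The first half of your argument, establishing the deterministic bound on $B_{\lambda_t,t}(\delta)$, is correct and follows essentially the same route as the paper: bound both $\sqrt{\lambda_t}\,C$ and $\sigma_{+,t}$ by $\sigma_+$ via the non-increasing sequence $(\sigma_{+,t})_t$, and bound $\gamma_t(\lambda_-)$ by $\gamma_T(\sigma_-^2/C^2)$ using that $\gamma_t(\lambda)$ is non-increasing in $\lambda$ and non-decreasing in $t$. Your justification of the $\lambda$-monotonicity through the identity $\gamma_t(\lambda)=\tfrac12\ln\det(I_t+\lambda^{-1}{\bf K}_t)$ and the positive semidefinite order is in fact more careful than the paper, which asserts this monotonicity without proof; your warning that a term-by-term argument is delicate because $k_{\lambda,t'-1}$ itself depends on $\lambda$ is well taken.

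The second half contains a genuine gap. Your argument hinges on the claim that the gap between $\sigma_{t,-}^2/C^2$ and $\sigma_-^2/C^2$ is of order $O(1/\sqrt{t})$ (up to logarithmic factors). This is false: Theorem~\ref{thm:rkhs_var_bound} (through Theorem~\ref{thm:kLSvar}) controls $\sigma_{t,-}$ relative to the \emph{true} noise level, giving $\sigma - O(1/\sqrt{t}) \leq \sigma_{t,-} \leq \sigma$ with high probability, so the running lower bound stabilises at $\sigma$, not at $\sigma_-$. Since the lemma assumes $\sigma_- < \sigma$ strictly, the gap $\sigma_{t,-}^2/C^2 - \sigma_-^2/C^2$ converges to the positive constant $(\sigma^2-\sigma_-^2)/C^2$, and correspondingly $\gamma_t(\sigma_-^2/C^2)-\gamma_t(\sigma_{t,-}^2/C^2)$ is generically of constant order or growing in $t$; no mean-value argument can make it $O(1/\sqrt{t})$. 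What the paper's proof actually establishes is proximity to the information gain at the true noise level, namely $\gamma_t(\sigma^2/C^2) \leq \gamma_t(\sigma_{t,-}^2/C^2) \leq \gamma_t(\sigma^2/C^2) + O(1/\sqrt{t})$, which indicates that the displayed asymptotic claim in Lemma~\ref{lem:Bt} carries a typo ($\sigma_-$ should read $\sigma$ inside $\gamma_t$ on the right-hand side). The sound move was either to prove that corrected statement --- your mean-value machinery then applies on the interval $[\sigma_{t,-}^2/C^2,\sigma^2/C^2]$, whose length is $O(1/\sqrt{t})$ --- or to flag that the statement as written cannot hold; asserting the false premise instead makes the derivation unsound. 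On the positive side, your caveat about the effective-dimension factor $\Tr{{\bf K}_t(\lambda I_t+{\bf K}_t)^{-1}}$ in $\partial_\lambda \gamma_t$ is legitimate and applies equally to the paper's own proof, which silently ignores it; it can be absorbed using $\Tr{{\bf K}_t(\lambda I_t+{\bf K}_t)^{-1}} \leq 2\gamma_t(\lambda)$, at the price of an extra information-gain factor hidden in the $O(\cdot)$.
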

\begin{myremark} The term $\sigma_+$ can be replaced with
	a more refined term $\sigma + O(1/\sqrt{t})$ thanks to the confidence bounds on the variance estimates.
\end{myremark}

\paragraph{Kernel UCB with unknown variance}

The upper bound on the error can be used directly in order to build a UCB-style algorithm.
Formally, the vanilla UCB algorithm~\citep{Auer2002} corresponding to our setting picks at time $t$ the arm
\begin{align}
    \label{eqn:kernel_ucb}
    x_t \in \argmax_{x\in\cX} f^+_{\lambda_t,t-1}(x)\quad \text{ where } \quad
    f^+_{\lambda,t}(x)=f_{\lambda,t}(x) + \sqrt{\frac{k_{\lambda,t}(x,x)}{\lambda}} B_{\lambda,t}(\delta)\,.
\end{align}
Following the regret proof strategy of \cite{Abbasi2011}, with some minor modifications, yields the following guarantee on the regret of this strategy:
\begin{mytheorem}[Kernel UCB with unknown noise and adaptive regularization]\label{thm:KernelUCB}
	With probability higher than $1-\delta$, the regret of Kernel UCB with adaptive regularization and variance estimation satisfies for all $T\geq0$ (recall that
 $B_{\lambda_{t+1},t}(\delta)$ is defined in Equation~\ref{eqn:Bt}):
	\beqan
	\kR_T \leq	 2\sum_{t=1}^T\sqrt{\frac{k_{\lambda_t,t-1}(x_t,x_t)}{\lambda_t}} B_{\lambda_t,t-1}(\delta/4)\,.
	 \eeqan
	 In particular, we have
	 \beqan
	 \kR_T &\leq& 2\frac{\sigma_+}{\sigma}\Big(1 + \sqrt{2\ln(4/\delta) + 2 \gamma_T(\sigma_{-}^2/C^2)}\big)C
	 \sqrt{T \frac{2\gamma_T(\sigma^2/C^2) }{\ln(1 + C^2/\sigma^2)} }\,.
	 \eeqan
\end{mytheorem}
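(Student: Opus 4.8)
The plan is to run the classical optimism-based (UCB) regret argument of \cite{Abbasi2011}, but with the fixed-variance confidence set replaced by the fully empirical envelope of Corollary~\ref{cor:empBernstein}. The only probabilistic ingredient is that envelope: I would apply Corollary~\ref{cor:empBernstein} at confidence level $\delta/4$ in place of $\delta$, which guarantees with probability at least $1-4\cdot(\delta/4)=1-\delta$ that $|f_\star(x)-f_{\lambda_t,t-1}(x)|\le \sqrt{k_{\lambda_t,t-1}(x,x)/\lambda_t}\,B_{\lambda_t,t-1}(\delta/4)$ holds simultaneously for every $x\in\cX$ and every $t\ge 1$. All subsequent steps are deterministic and take place on this event. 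Predictability of $\blambda$ (each $\lambda_t$ being $\cH_{t-1}$-measurable) is exactly what licenses forming the round-$t$ posterior from $\lambda_t$ together with the data $y_1,\dots,y_{t-1}$, so the hypotheses of the corollary are met.

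Next I would establish the per-round regret inequality. By optimism the upper confidence function $f^+_{\lambda_t,t-1}$ of Equation~\ref{eqn:kernel_ucb} dominates $f_\star$ at the optimal arm, $f_\star(\star)\le f^+_{\lambda_t,t-1}(\star)$; by the greedy rule $x_t\in\argmax_x f^+_{\lambda_t,t-1}(x)$ we have $f^+_{\lambda_t,t-1}(\star)\le f^+_{\lambda_t,t-1}(x_t)$; and the lower side of the envelope gives $f_{\lambda_t,t-1}(x_t)\le f_\star(x_t)+\sqrt{k_{\lambda_t,t-1}(x_t,x_t)/\lambda_t}\,B_{\lambda_t,t-1}(\delta/4)$. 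Chaining these three inequalities cancels the posterior mean and yields $f_\star(\star)-f_\star(x_t)\le 2\sqrt{k_{\lambda_t,t-1}(x_t,x_t)/\lambda_t}\,B_{\lambda_t,t-1}(\delta/4)$. Summing over $t=1,\dots,T$ gives $\kR_T\le 2\sum_{t=1}^T\sqrt{k_{\lambda_t,t-1}(x_t,x_t)/\lambda_t}\,B_{\lambda_t,t-1}(\delta/4)$, which is the first claim.

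To obtain the closed-form second bound I would first strip off the random envelope term. Lemma~\ref{lem:Bt}, applied at level $\delta/4$ so that $\ln(1/(\delta/4))=\ln(4/\delta)$, upper-bounds $B_{\lambda_t,t-1}(\delta/4)$ by the deterministic, $t$-independent quantity $\sigma_+\big(1+\sqrt{2\ln(4/\delta)+2\gamma_T(\sigma_-^2/C^2)}\big)$, which I pull out of the sum. What remains is $\sum_{t=1}^T\sqrt{k_{\lambda_t,t-1}(x_t,x_t)/\lambda_t}$, to which I apply Cauchy--Schwarz to get $\sqrt{T\sum_{t=1}^T k_{\lambda_t,t-1}(x_t,x_t)/\lambda_t}$. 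Finally, Lemma~\ref{lem:infogain} bounds $\sum_{t=1}^T k_{\lambda_t,t-1}(x_t,x_t)/\lambda_t=\sigma^{-2}\sum_t s^2_{\blambda,t-1}(x_t)$ by $\sigma^{-2}\cdot 2C^2\gamma_T(\sigma^2/C^2)/\ln(1+C^2/\sigma^2)$; substituting and factoring the $C/\sigma$ out of $\sqrt{C^2/\sigma^2}$ reproduces exactly the stated explicit regret bound.

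The analysis is routine once the pieces are assembled, so the genuine difficulty is bookkeeping rather than conceptual. First, the per-round bound must be summable without an extra union bound over rounds; this is precisely why I lean on the uniformity over $t\ge 0$ in Corollary~\ref{cor:empBernstein} (inherited from the stopping-time construction behind Theorem~\ref{thm:kLSmean}). Second, Lemma~\ref{lem:infogain} requires $\lambda_t\ge\sigma^2/C^2$ for all $t$, and I would note that this condition holds on the \emph{same} high-probability event, since it is a consequence of the variance upper bound $\sigma_{+,t}\ge\sigma$ used in defining $\lambda_t$. The remaining care lies in the index/timing alignment under the adaptive regularization—pairing $\lambda_t$ with the $(t-1)$-posterior—and in checking that the $\delta/4$ budget of Lemma~\ref{lem:Bt} matches the $\ln(4/\delta)$ appearing in the target expression; neither the Cauchy--Schwarz step nor the information-gain substitution presents any real obstacle.
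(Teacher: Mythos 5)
Your proposal is correct and follows essentially the same route as the paper's own proof: optimism plus the greedy rule plus the empirical envelope of Corollary~\ref{cor:empBernstein} for the per-round bound, then Lemma~\ref{lem:Bt}, Cauchy--Schwarz, and Lemma~\ref{lem:infogain} for the closed-form bound. If anything, your bookkeeping is tighter than the paper's, which runs the argument at level $\delta$ (yielding probability $1-4\delta$ and $\ln(1/\delta)$ in the final display) rather than at level $\delta/4$ as the theorem statement requires; your version matches the stated $\ln(4/\delta)$ exactly.
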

\begin{myremark}
	This result that holds simultaneously over all time horizon $T$ extends that of \cite{Abbasi2011} first to kernel regression and then to the case when the variance of the noise is unknown. This should also be compared to \cite{Valko2013} that assumes bounded observations, which implies a bounded noise (with known bound) and a bounded $f_\star$, and \cite{Srinivas2010} that provides looser bounds. 
\end{myremark}


\paragraph{Kernel TS with unknown variance}

Another application of our confidence bounds is in the analysis of Thompson sampling in the kernel scenario. Before presenting the result, let us say a few words about the design of TS algorithm in a kernel setting. Such an algorithm requires sampling from a posterior distribution over the arms. It is natural to consider a Gaussian posterior with posterior means and variances given by the kernel estimates. However, it has been noted in a series of papers~\citep{Agrawal2014,Abeille2016} that, in order to obtain provable regret minimization guarantees, the posterior variance should be inflated (although in practice, the vanilla version without inflation may work better). Following these lines of research, and owing to our novel confidence bounds, we derive the following TS algorithm using a posterior variance inflation factor $v_t^2$.

\begin{algorithm}
	Parameters: regularization sequence $\blambda$, variance inflation factor $v_t^2$ for each $t$.
    
	\begin{algorithmic}[1]
		\FORALL{$t \geq 1$}
		\STATE compute the posterior mean $\hat {\bf f}_{t-1}=(f_{\lambda_t,t-1}(x))_{x\in\bX}$
		\STATE compute the posterior covariance $\hat {\bm\Sigma}_{t-1}= \frac{\sigma_{+,t-1}^2}{\lambda_t} \big( k_{\lambda_t,t-1}(x,x')\big)_{x,x'\in\bX}$
		\STATE sample $\tilde f_t = \Normal(\hat {\bf f}_{t-1}, v_t^2 \hat {\bm\Sigma}_{t-1})$
		\STATE play $x_t = \argmax_{x \in \bX} \tilde f_t(x)$
		\STATE observe outcome $y_t = f_\star(x_t) + \xi_t$
		\ENDFOR
	\end{algorithmic}
	\caption{{Kernel TS} with adaptive variance estimation and regularization tuning}
	\label{alg:kernel_ts}
\end{algorithm}
\begin{myremark} 
	The algorithm does not know the variance $\sigma^2$ of the noise, but uses an upper estimate
	$\sigma_{+,t-1}^2$.
	\end{myremark}

\begin{myremark}
We assume that the set of arms $\bX$ is \emph{discrete}. This is merely for practical reasons since otherwise updating the estimate of $f_\star$ in a RKHS requires memory and computational times that are unbounded with $t$. This also simplifies the analysis.
\end{myremark}

The following regret bound can then be obtained after some careful but easy adaptation of \cite{Agrawal2014}. 
We provide the proof of this result in Appendix~\ref{app:bandits}, which can be of independent interest, being a more rigorous and somewhat simpler rewriting of the original proof technique from \cite{Agrawal2014}.

\begin{mytheorem}[Regularized Kernel TS with variance estimate]
	\label{thm:kernel_ts}
	Assume that the maximal instantaneous pseudo-regret  $R=\max_{x\in\bX}\big(f_\star(\star)-f_\star(x)\big)$ is finite. Then, the regret of Kernel TS (Algorithm~\ref{alg:kernel_ts}) with $v_t = \frac{B_{\lambda_t,t-1}(\delta/4)}{\sigma_{+,t-1}}$ after $T$ episodes is $\cO(C \sqrt{T \ln(T |\bX|)} \gamma_T(\sigma^2/C^2))$ with probability $1 - 3 \delta$. More precisely,	with probability  $1-3\delta$, 	the regret is bounded for all $T\geq0$:
	\beqan
	\kR_T
	&\leq& C_{1,T}\bigg(\sum_{t=1}^T \sqrt{\frac{k_{\lambda_t,t-1}(x_t,x_t)}{\lambda_t}} B_{\lambda_t,t-1}(\delta/4)\bigg)+C_2 R\sqrt{T \ln(1/\delta)}+   4\pi e R \delta\,, 
	\eeqan
	where $C_{1,T} = (4\sqrt{\pi e}+1)\bigg(1+\sqrt{2\ln\Big(\frac{T(T+1) |\bX|}{\sqrt{\pi}\delta}\Big)}\bigg)$ and $C_2=\sqrt{ 8 \pi e(1 + \delta\sqrt{4\pi e})^2}$.
    
    Further, we have
	\beqan
	\kR_T &\leq& C_{1,T}\frac{\sigma_+}{\sigma}\Big( 1 +\sqrt{2\ln(4/\delta) + 2 \gamma_T(\sigma_-^2/C^2)} \Big)  C\sqrt{T \frac{2 \gamma_T(\sigma^2/C^2)}{\ln(1 + C^2/\sigma^2)}} \\
	&&+C_2 R\sqrt{T \ln(1/\delta)} + 4\pi e R \delta \,.
	\eeqan

\end{mytheorem}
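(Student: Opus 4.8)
The plan is to adapt the Thompson Sampling regret analysis of \cite{Agrawal2014} to our setting, where the only genuine novelty is that the posterior used for sampling is built from the \emph{adaptively tuned} regularization $\lambda_t$ and the estimated noise upper bound $\sigma_{+,t-1}$, both of which are $\cH_{t-1}$-measurable. The first step is to observe that, with the prescribed inflation $v_t = B_{\lambda_t,t-1}(\delta/4)/\sigma_{+,t-1}$, the conditional standard deviation of the sampled value $\tilde f_t(x)$ equals exactly the confidence width $w_t(x):=\sqrt{k_{\lambda_t,t-1}(x,x)/\lambda_t}\,B_{\lambda_t,t-1}(\delta/4)$ appearing in Corollary~\ref{cor:empBernstein}. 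I would then introduce two events. The \emph{mean-concentration} event $E^\mu$, on which $|f_\star(x)-f_{\lambda_t,t-1}(x)|\leq w_t(x)$ holds simultaneously over all $x\in\bX$ and all $t$, is exactly Corollary~\ref{cor:empBernstein} applied with budget $\delta/4$ (hence valid with probability $1-\delta$); and the \emph{sample-concentration} event $E^\theta$, on which $|\tilde f_t(x)-f_{\lambda_t,t-1}(x)|\leq g\,w_t(x)$ with $g:=\sqrt{2\ln(T(T+1)|\bX|/(\sqrt{\pi}\delta))}$, is obtained from a Gaussian tail bound together with a union bound over the discrete arm set $\bX$ and over $t\leq T$ (again valid with probability $1-\delta$).

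The core is the standard saturated/unsaturated decomposition. Calling an arm $x$ \emph{saturated} at time $t$ when its gap $f_\star(\star)-f_\star(x)$ exceeds $(1+g)\,w_t(x)$, I would first bound, on $E^\mu\cap E^\theta$, the ``sampling'' part of the instantaneous regret as $\tilde f_t(x_t)-f_\star(x_t)\leq (1+g)\,w_t(x_t)$, simply by splitting through $f_{\lambda_t,t-1}(x_t)$ and applying the two events. The key probabilistic ingredient is a Gaussian anti-concentration bound: conditionally on $\cH_{t-1}$ and on $E^\mu$, the sampled value of the optimal arm exceeds $f_\star(\star)$ with probability at least $p=1/(4\sqrt{\pi e})$, since this only requires the Gaussian sample to exceed its own mean by one standard deviation (which here is $w_t(\star)$). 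This forces the played arm $x_t$ to be unsaturated with probability at least $p$, whence $\Esp[f_\star(\star)-\tilde f_t(x_t)\mid\cH_{t-1}]$ is controlled by $(1/p)$ times the expected width of $x_t$. Combining the two parts yields a conditional bound of the form $\Esp[\Delta_t\mid\cH_{t-1}]\leq (4\sqrt{\pi e}+1)(1+g)\,\Esp[w_t(x_t)\mid\cH_{t-1}]$, up to the contribution of the complement $(E^\mu)^c\cup(E^\theta)^c$, bounded through $\Esp[\Delta_t\mathbf{1}_{(E^\mu)^c}]\leq R\,\delta$ and its analogues, which produce the residual proportional to $R\delta$.

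It then remains to pass from this bound on the conditional expectation to a high-probability bound on the realized sum. I would form the super-martingale with increments $\Delta_t-\Esp[\Delta_t\mid\cH_{t-1}]$, bounded in absolute value by the maximal instantaneous regret $R$, and apply an Azuma--Hoeffding inequality at level $\delta$; this is exactly what produces the additive term $C_2R\sqrt{T\ln(1/\delta)}$ together with the leftover $4\pi e R\delta$, and fixes the constants $C_{1,T}=(4\sqrt{\pi e}+1)(1+g)$ and $C_2=\sqrt{8\pi e(1+\delta\sqrt{4\pi e})^2}$. Summing the conditional bound over $t$ and folding in the martingale deviation delivers the first displayed regret bound, stated in terms of $\sum_t w_t(x_t)=\sum_t \sqrt{k_{\lambda_t,t-1}(x_t,x_t)/\lambda_t}\,B_{\lambda_t,t-1}(\delta/4)$. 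The total failure probability is the sum of the three $\delta$ budgets ($E^\mu$, $E^\theta$, Azuma), giving $1-3\delta$.

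Finally, to obtain the explicit second bound I would control the widths deterministically and in aggregate. Lemma~\ref{lem:Bt} gives $B_{\lambda_t,t-1}(\delta/4)\leq \sigma_+\big(1+\sqrt{2\ln(4/\delta)+2\gamma_T(\sigma_-^2/C^2)}\big)$ uniformly in $t$, while Cauchy--Schwarz followed by Lemma~\ref{lem:infogain} gives $\sum_{t=1}^T\sqrt{k_{\lambda_t,t-1}(x_t,x_t)/\lambda_t}\leq \sqrt{T\sum_t \lambda_t^{-1}k_{\lambda_t,t-1}(x_t,x_t)}\leq \frac{C}{\sigma}\sqrt{T\,\frac{2\gamma_T(\sigma^2/C^2)}{\ln(1+C^2/\sigma^2)}}$; multiplying these two estimates assembles the final expression. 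The main obstacle is the anti-concentration step in this adaptive regime: one must verify that replacing the fixed posterior variance of \cite{Agrawal2014} by the predictable quantities $\lambda_t$ and $\sigma_{+,t-1}$ leaves the conditional law of $\tilde f_t$ a genuine Gaussian with covariance $v_t^2\hat{\bm\Sigma}_{t-1}$ and keeps the optimism probability bounded below by the \emph{same absolute constant} $p$. This works precisely because $\lambda_t$ and $\sigma_{+,t-1}$ are $\cH_{t-1}$-measurable, so that, conditionally on $\cH_{t-1}$, the sampling distribution is fixed and the $w_t(\star)$ scaling cancels exactly in the anti-concentration computation; the remaining bookkeeping of the probability budget is routine.
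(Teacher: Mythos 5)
Your proposal is correct and follows essentially the same route as the paper's own proof: the two concentration events (mean via Corollary~\ref{cor:empBernstein} at budget $\delta/4$, sample via a Gaussian tail plus union bound over $\bX$ and $t$), the saturated/unsaturated decomposition with the Gaussian anti-concentration lower bound on the optimism probability, an Azuma--Hoeffding step to convert conditional expectations into a high-probability bound (yielding the $C_2 R\sqrt{T\ln(1/\delta)}$ and $4\pi e R\delta$ terms), and finally Lemma~\ref{lem:Bt} plus Lemma~\ref{lem:infogain} with Cauchy--Schwarz for the explicit bound. The only deviations are cosmetic bookkeeping (your optimism constant $1/(4\sqrt{\pi e})$ versus the paper's $1/(2\sqrt{\pi e})$, and how the bad-event residuals are folded in), which do not change the argument or the stated constants.
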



\begin{myremark} 
As our confidence intervals do not require a bounded noise, likewise we can control the regret  with high probability without requiring  bounded observations, contrary to earlier works
such as \cite{Valko2013}.
\end{myremark}

\section{Discussion and related works}\label{sec:discussion}

\paragraph{Concentration results}

Theorem~\ref{thm:kLSmean} extends the self-normalized bounds of \cite{Abbasi2011} from the setting of linear function spaces to that of an RKHS with sub-Gaussian noise. Based on a nontrivial adaptation of the Laplace method, it yields self-normalized inequalities in a setting of possibly infinite dimension. It generalizes the following result of \cite{Wang2014} to kernel regression with $\lambda \neq \sigma^2$, which was already a generalization of a previous result by \cite{Srinivas2010} for bounded noise. It is also more general than the concentration result from \cite{Valko2013}, for kernel regression with $\lambda \neq \sigma^2$, which holds \emph{under the assumption of bounded observations}.

\begin{mylemma}[Proposition 1 from \cite{Wang2014}]
	\label{lem:gp_concentration:f_rkhs}
	Let $f_\star$ denote a function in the RKHS $\cK$ induced by kernel $k$ and let us define the posterior mean and variances with $\lambda = \sigma^2$, for (arbitrary) data $(x_{t'})_{t'\leq t}$. Assuming $\sigma$-sub-Gaussian noise variables, then for all $\delta' \in (0, 1)$ we have that
	\begin{align*}
	\Pr \big[ \exists x \in \cX : |f_{\lambda,t}(x) - f_\star(x)| \geq \ell_{\lambda,t+1}(\delta') k_{\lambda,t}^{1/2}(x,x) \big] \leq \delta',
	\quad \text{where}\quad \\
    \ell_{\lambda,t}^2(\delta')
	= \Vert f \Vert_\cK^2
	+ \sqrt{8 \gamma_{t-1}(\lambda) \ln \frac{2}{\delta'}}
	+ \sqrt{2 \ln \frac{4}{\delta'}} \Vert f \Vert_\cK   + 2 \gamma_{t-1}(\lambda)
	+ 2 \sigma \ln \frac{2}{\delta'}
	\end{align*}
	 and $\gamma_t(\lambda)=\frac{1}{2} \sum_{t'=1}^t \ln \big( 1 + \frac{1}{\lambda} k_{\lambda,t'-1}(x_{t'}, x_{t'}) \big)$ is the information gain.
\end{mylemma}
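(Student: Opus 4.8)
The plan is to treat this as a frequentist concentration bound for the kernel ridge posterior mean computed with $\lambda=\sigma^2$, and to reconstruct it through the bias--variance decomposition standard in the Gaussian-process bandit literature. Writing $Y_t = F_t + \Xi_t$ with $F_t=(f_\star(x_{t'}))_{t'\le t}$ and $\Xi_t=(\xi_{t'})_{t'\le t}$, linearity of $f_{\lambda,t}$ in $Y_t$ gives $f_{\lambda,t}(x) = m_t(x) + n_t(x)$, where $m_t(x)=k_{t}(x)^\top({\bf K}_{t}+\lambda I_t)^{-1}F_t$ is the noiseless posterior mean and $n_t(x)=k_{t}(x)^\top({\bf K}_{t}+\lambda I_t)^{-1}\Xi_t$ is the stochastic part. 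I would bound the two pieces separately and then recombine, factoring out the posterior standard deviation $k_{\lambda,t}^{1/2}(x,x)$.

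For the bias $m_t(x)-f_\star(x)$, the key is the reproducing property $f_\star(x)=\langle f_\star, k(\cdot,x)\rangle_\cK$ together with the feature-map identity expressing $m_t$ as a projection-type operator applied to $f_\star$. A Cauchy--Schwarz step in $\cK$ then yields a deterministic bound of the form $\norm{f_\star}_\cK$ times the posterior standard deviation $k_{\lambda,t}^{1/2}(x,x)$; this is where the $\norm{f_\star}_\cK^2$ and the cross term $\sqrt{2\ln(4/\delta')}\,\norm{f_\star}_\cK$ in $\ell^2$ originate, once combined with the noise scale via an AM--GM step. This part is purely deterministic and uses only $f_\star\in\cK$.

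For the noise $n_t(x)$, I would apply a self-normalized tail inequality to the martingale $\sum_{t'} \xi_{t'}\,[({\bf K}_{t}+\lambda I_t)^{-1}k_{t}(x)]_{t'}$; the natural normalizer is the quadratic form $k_{t}(x)^\top({\bf K}_{t}+\lambda I_t)^{-1}k_{t}(x)$, whose log-determinant aggregate over the selected points is exactly the information gain $\gamma_{t-1}(\lambda)$ through $\ln\det(I_t+\lambda^{-1}{\bf K}_{t})$. The observation that makes the statement uniform over $x\in\cX$ is that the concentration event controls the coefficient vector $({\bf K}_{t}+\lambda I_t)^{-1}\Xi_t$ itself (equivalently, the whitened noise), not its evaluation at any fixed $x$; hence a single event of mass $1-\delta'$ covers all $x$ simultaneously once $k_{\lambda,t}^{1/2}(x,x)$ is factored out. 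The $\sqrt{8\gamma_{t-1}(\lambda)\ln(2/\delta')}$, $2\gamma_{t-1}(\lambda)$, and $2\sigma\ln(2/\delta')$ contributions to $\ell^2$ all arise here, and the split between $2/\delta'$ and $4/\delta'$ reflects a union bound over the bias- and noise-related bad events.

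The main obstacle is precisely this noise term: obtaining the information-gain form of the self-normalized bound in the possibly infinite-dimensional RKHS rather than in finite dimension. This forces the Laplace / method-of-mixtures argument to be run with the Gram operator in place of a finite covariance matrix, and then collected so that the log-determinant collapses to $\gamma_{t-1}(\lambda)$. I note that Theorem~\ref{thm:kLSmean} of this paper carries out exactly such an amended Laplace construction and in fact yields a cleaner envelope, so an alternative route would be to recover a bound of this shape as a by-product of that result; however, the present $\ell^2$ form---with its explicit cross terms---comes out more directly from the bias/variance split combined with AM--GM than from the tight bound of Theorem~\ref{thm:kLSmean}. Everything else is routine algebra: collecting the two bounds, factoring $k_{\lambda,t}^{1/2}(x,x)$, and linearizing the product of $\norm{f_\star}_\cK$ with the noise scale into the stated sum.
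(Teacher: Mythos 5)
First, a point of comparison you could not have known: the paper does \emph{not} prove this lemma at all --- it is quoted verbatim from \cite{Wang2014} (their Proposition~1) as an external baseline against which the paper's own results are compared, so there is no in-paper proof to match your attempt against. The closest in-paper analogue is the proof of Theorems~\ref{thm:kLSmean} and~\ref{thm:kLSmeantuned} in Appendix~\ref{app:concentration}, and your skeleton --- split $f_{\lambda,t}-f_\star$ into a deterministic bias term and a stochastic noise term, bound the bias by $\norm{f_\star}_\cK$ times the posterior standard deviation via Cauchy--Schwarz, and control the whitened noise vector itself (rather than its evaluation at a fixed $x$) by a self-normalized argument whose log-determinant collapses to the information gain --- is precisely the skeleton of that proof. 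As a strategy it is sound, and your observation about how uniformity over $x\in\cX$ is obtained is the right one.

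The genuine gap is that the strategy, carried out as you describe it, does not produce the stated $\ell^2$. Bounding the two pieces separately and then adding yields an envelope of the form $\big(\norm{f_\star}_\cK + \sqrt{2\gamma_t(\lambda)+2\ln(1/\delta')}\big)\,k_{\lambda,t}^{1/2}(x,x)$, i.e.\ exactly Theorem~\ref{thm:kLSmean}'s form, whereas the target $\ell_{\lambda,t}^2$ has the structure of a \emph{squared} error expanded into three pieces: a bias square $\norm{f}_\cK^2$, a noise quadratic form bounded chi-square style (giving $2\gamma_{t-1}+2\sigma\ln(2/\delta')+\sqrt{8\gamma_{t-1}\ln(2/\delta')}$), and a stochastic cross term (giving $\sqrt{2\ln(4/\delta')}\norm{f}_\cK$, which couples $\norm{f}_\cK$ to $\ln(1/\delta')$ but \emph{not} to $\gamma$). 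Recovering this requires expanding the squared error and running two separate concentration events --- one for the quadratic form, one for the linear cross term --- followed by a union bound; no amount of ``routine algebra and AM--GM'' converts the additive recombination you propose into this expression. Your term accounting also contains an internal inconsistency that signals the problem: you call the bias step ``purely deterministic'' yet attribute to it the cross term $\sqrt{2\ln(4/\delta')}\norm{f_\star}_\cK$, which contains the confidence parameter and therefore can only arise from a concentration event (it is the inner product between the bias direction and the noise). A smaller point: the lemma is stated for each fixed $t$ only, so the stopping-time/method-of-mixtures machinery you insist on (which the paper needs for uniformity in $t$) is heavier than required here; a fixed-time martingale bound suffices.
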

\begin{myremark}This results provides a bound 
	that is valid for each $t$, with probability higher $1-\delta$. In contrast, results from
	\cite{Abbasi2011}, as well as Theorem~\ref{thm:kLSmean} hold  with probability higher $1-\delta$,
	uniformly for all $t$, and are thus much stronger in this sense.
\end{myremark}

Theorem~\ref{thm:kLSmeantuned} extends Theorem~\ref{thm:kLSmean} to the case when \textit{the regularization is tuned online} based on gathered observations. To the best of our knowledge, no such result exists in the literature at the time of writing this paper. Moreover, Theorem~\ref{thm:rkhs_var_bound} provides variance estimates with confidence bounds scaling with $1/\sqrt{t}$, in the spirit of the results from \cite{Maurer2009}, that were provided in the  i.i.d. case. Thus, Theorem~\ref{thm:rkhs_var_bound} also appears to be new.
 Finally,  Corollary~\ref{cor:empBernstein} further specifies Theorem~\ref{thm:kLSmeantuned}  to the situation where the regularization is tuned according to Theorem~\ref{thm:rkhs_var_bound}, yielding a fully adaptive regularization procedure with explicit confidence bounds.

\paragraph{Bandits optimization}

When applied to the setting of multi-armed bandits, Theorems~\ref{thm:kernel_ts} and~\ref{thm:KernelUCB} respectively extend linear TS~\citep{Agrawal2014,Abeille2016} and UCB~\citep{Li2010,Chu2011} to the RKHS setting. Similar extensions have been provided in the literature: GP-UCB~\citep{Srinivas2010} generalizes UCB from the linear to the RKHS setting through the use of Gaussian processes; this corresponds to the case when $\lambda = \sigma^2$. The bounds they provide in the case when the target function belongs to an RKHS is however quite loose. KernelUCB~\citep{Valko2013} also generalizes UCB from the linear to the RKHS setting through the use of kernel regression. However the analysis of this algorithm was out of reach of their proof technique (that requires independence between arms) and they analyze instead the arguably less appealing variant called SupKernelUCB. Also, the analysis of both GP-UCB and  SupKernelUCB in the agnostic setting are respectively limited to bounded noise and bounded observations.

\section{Illustrative numerical experiments}
\label{sec:expes}

In this section, we illustrate the results introduced in the previous Sections~\ref{sec:kernelreg}
and \ref{sec:variance_estimation} on a few examples. The first one is the concentration result on the mean from Theorem~\ref{thm:kLSmean}, the second one is the variance estimate from Theorem~\ref{thm:rkhs_var_bound}, and the last one combines the formers by using the noise estimate to tune $\lambda_{t+1} = \sigma_t^2 / C^2$ in Theorem~\ref{thm:kLSmeantuned}, which corresponds to Corollary~\ref{cor:empBernstein}. We finally show the performance of kernelized bandits techniques using the provided variance estimates and adaptative regularization schemes.

We conduct the experiments using the function $f_\star$ shown by Figure~\ref{fig:function}, which has norm $\|f_\star\|_\cK = \| \theta_\star \|_2 = 2.06$ in the RKHS induced by a Gaussian kernel
$\quad   k(x, x') = e^{-\frac{(x - x')^2}{2 \rho^2}}\quad$
with length scale $\rho = 0.3$. We consider the space $\cX = [0, 1]$ and that the standard deviation of the noise is $\sigma = 0.1$. All further experiments use the upper-bound $C = 5$ on $\|f_\star\|_\cK$ and the lower-bound $\sigma_- = 0.01$ on $\sigma$.

\begin{figure}[H]
	\centering
	\includegraphics{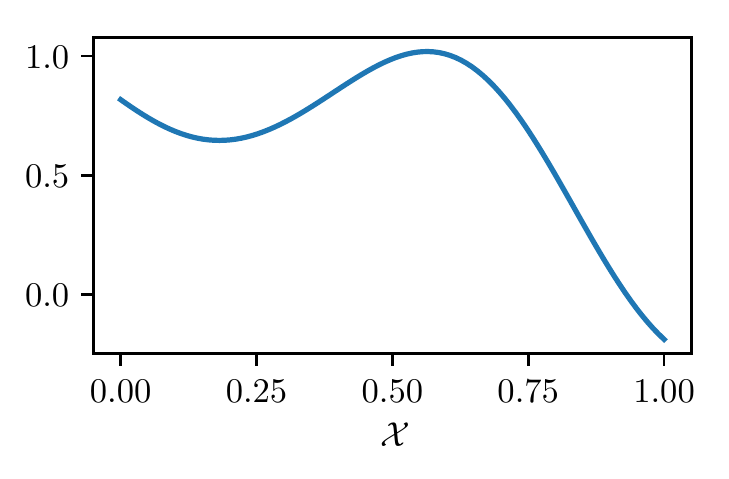}
	\vspace{-1mm}\caption{Test function $f_\star$ used in the following numerical experiments.}
	\label{fig:function}
\end{figure}

\subsection{Kernel concentration bound}

The following experiments compare the concentration result given by Theorem~\ref{thm:kLSmean} with the kernel concentration bounds from \cite{Wang2014} reported by Lemma~\ref{lem:gp_concentration:f_rkhs}. 
The true noise $\sigma = 0.1$ is assumed to be known and all observations are uniformly sampled from $\cX$. 
In both cases, we use a fixed confidence level $\delta=0.1$. 
Figure~\ref{fig:concentration_bounds_known_variance_againstwang} shows that for $\lambda = \sigma^2$, the result given by Theorem~\ref{thm:kLSmean} recovers the confidence envelope of \cite{Wang2014}. 
Note however that the confidence bound that we plot for Theorem~\ref{thm:kLSmean} are valid \textit{uniformly} over all time steps, while the one derived from \cite{Wang2014} is only valid separately for each time. 
Further, Theorem~\ref{thm:kLSmean} generalizes the latter result to the case where $\lambda \neq \sigma^2$. For illustration, Figure~\ref{fig:concentration_bounds_known_variance_thm1} illustrates the confidence envelopes in the special case where $\lambda = \sigma^2/C^2$, which also shows the potential benefit of such a tuning.

\begin{figure}
	\centering
	\includegraphics{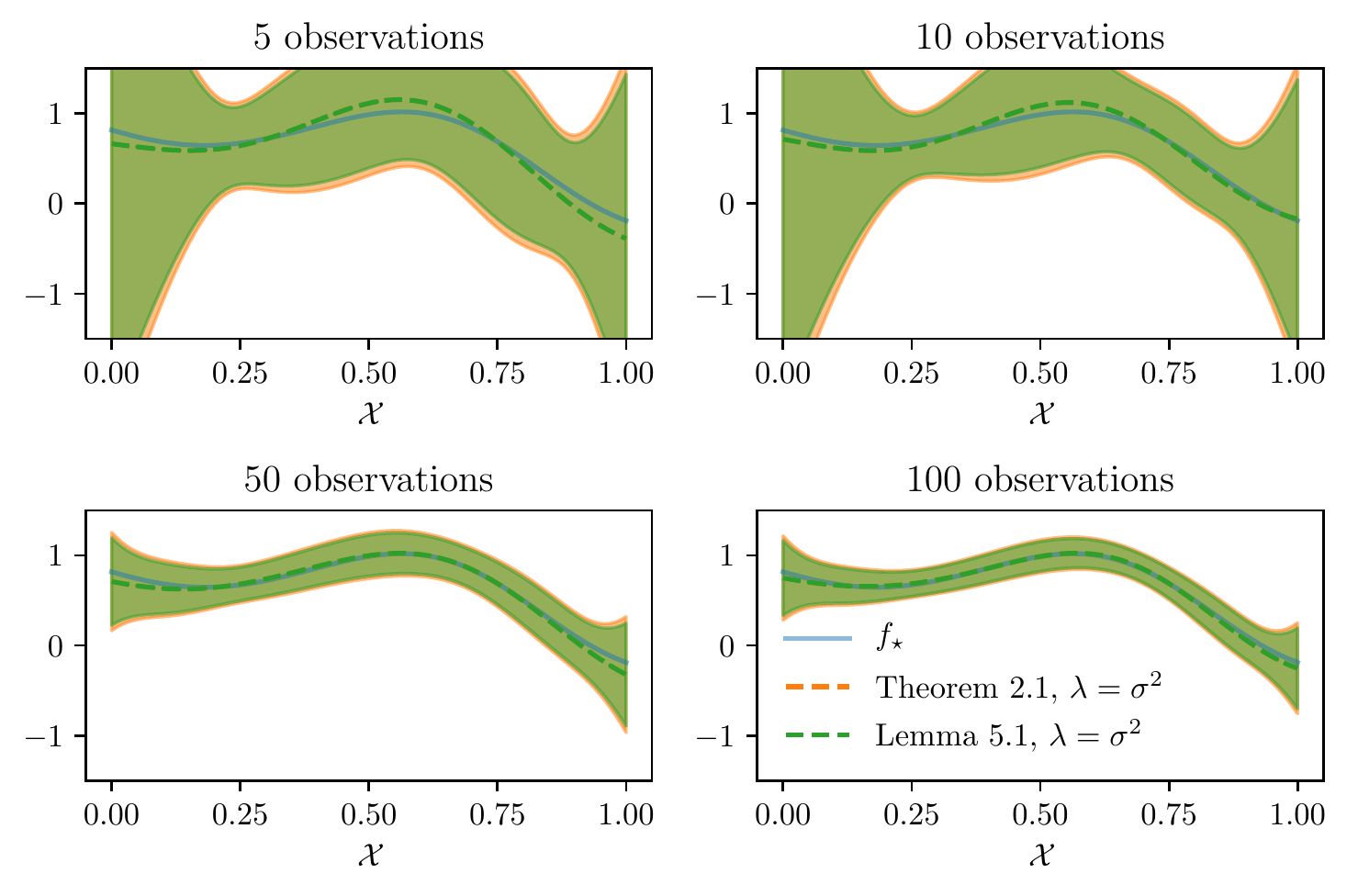}
	\vspace{-1mm}\caption{Confidence interval of Theorem~\ref{thm:kLSmean} and Lemma~\ref{lem:gp_concentration:f_rkhs}~\citep{Wang2014}.}
	\label{fig:concentration_bounds_known_variance_againstwang}
\end{figure}

\begin{figure}
	\centering
	\includegraphics{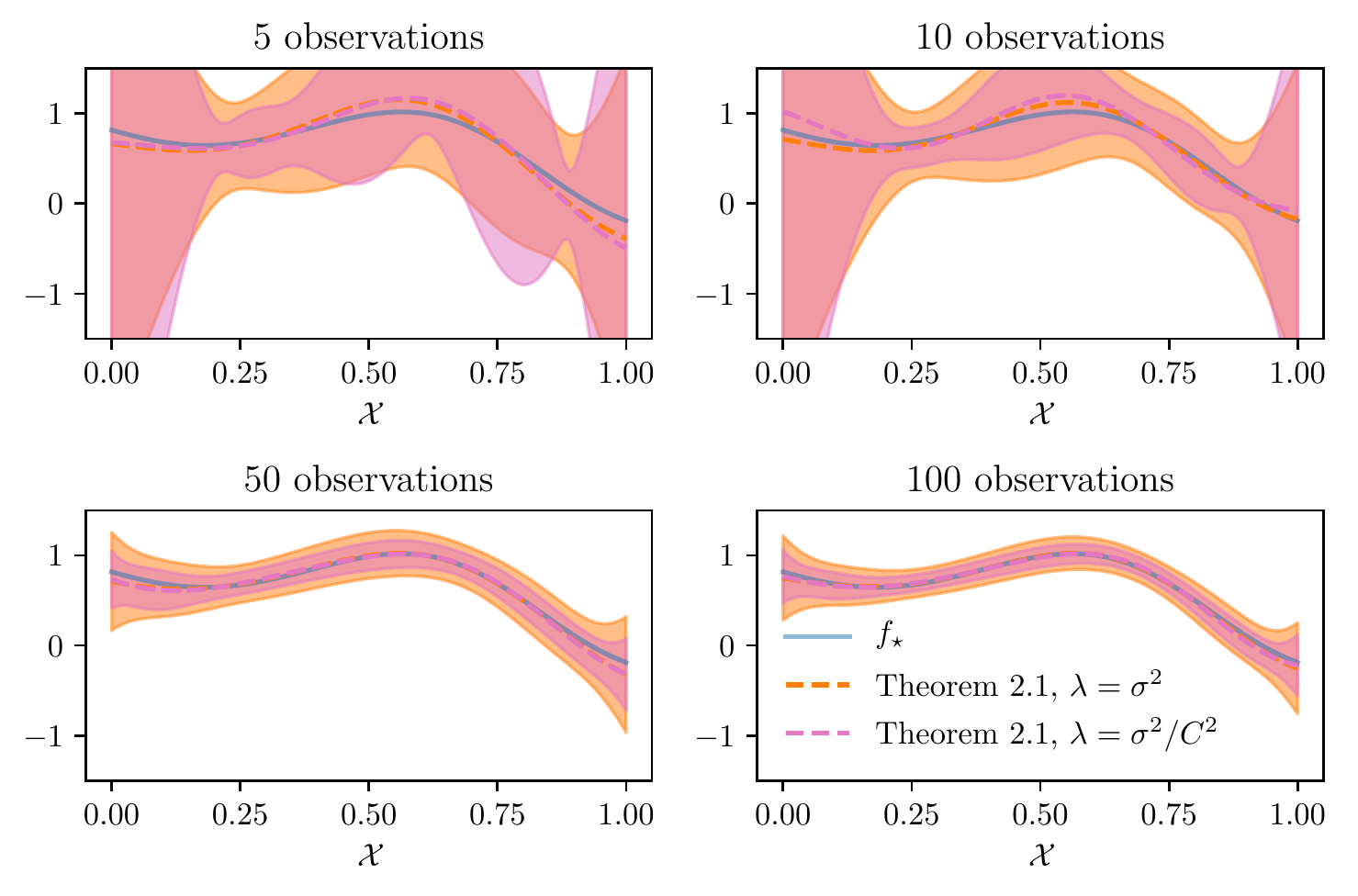}
	\vspace{-1mm}\caption{Confidence interval of Theorem~\ref{thm:kLSmean} for different $\lambda$.}
	\label{fig:concentration_bounds_known_variance_thm1}
\end{figure}

\subsection{Empirical variance estimate}
\label{sec:expes:empirical_variance_estimage}

We now illustrate the convergence rate of the noise estimates $\sigma_{-,t} = \max \{ \sigma_{-,t}(\lambda), \sigma_{-,t-1} \}$ and $\sigma_{+,t} = \min \{ \sigma_{+,t}(\lambda, \lambda_-), \sigma_{+,t-1} \}$ computed using Theorem~\ref{thm:rkhs_var_bound}, where $\lambda_-= \sigma_{-,t}^2/C^2$ and $\delta = 0.1$. All observations are uniformly sampled from $\cX$. Section~\ref{sec:variance_estimation} suggests that $\lambda = \sigma_{+,t-1}^2/C^2$ should provide tighter bounds than a fixed $\lambda = \sigma_+^2/C^2$. Figure~\ref{fig:empirical_noise_estimation_lambda} shows that this is indeed the case especially for large values of $t$. We also see that the adaptive update of $\lambda$ converges to the same value, whatever the initial bound $\sigma_+$. This is especially interesting when $\sigma_+$ is a loose initial upper bound on $\sigma$.

\begin{figure}
	\centering
	\begin{subfigure}{0.49\textwidth}
		\captionsetup{skip=-5pt}
		\includegraphics[width=\textwidth]{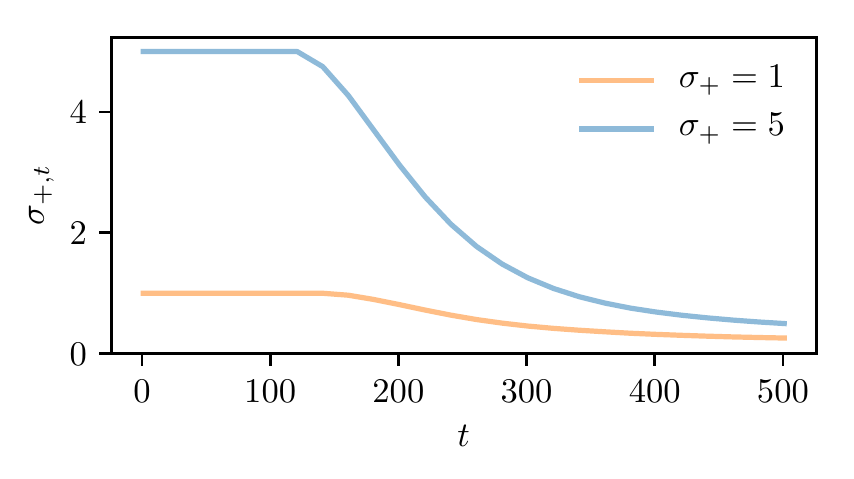}
		\caption{}
	\end{subfigure}
	\begin{subfigure}{0.49\textwidth}
		\captionsetup{skip=-5pt}
		\includegraphics[width=\textwidth]{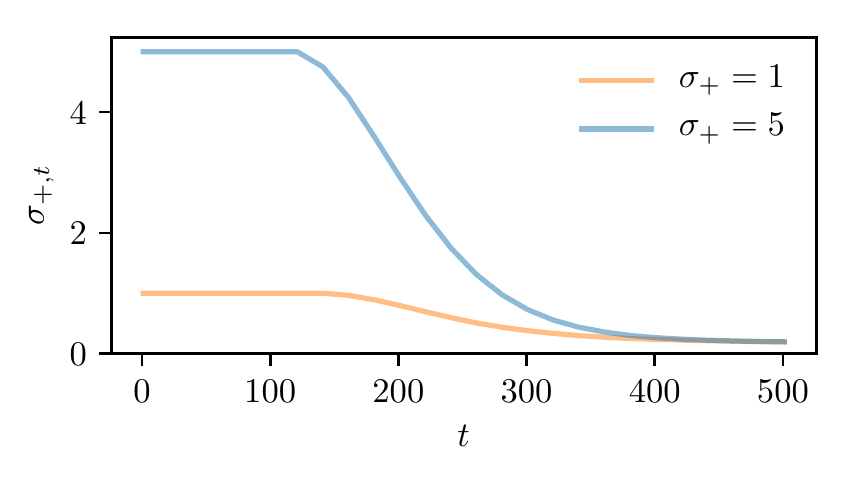}
		\caption{}
	\end{subfigure}
	\caption{Noise estimate from Theorem~\ref{thm:rkhs_var_bound} with $\sigma_+$ for a) fixed $\lambda = \sigma_+^2/C^2$; b) $\lambda = \sigma_{+, t-1}^2/C^2$.}
	\label{fig:empirical_noise_estimation_lambda}
\end{figure}

In practice, the bound of Theorem~\ref{thm:rkhs_var_bound} not using the knowledge of $\sigma_+$ may be useful even when $\sigma_+$ is known. This is illustrated by Figure~\ref{fig:empirical_noise_estimation_upperbound} that plots the upper-bound variance estimate $\sigma_{+,t}(\lambda, \lambda_-)$ for $\lambda = \sigma_{+, t-1}^2/C^2$ in both cases. In practice, we suggest to use the minimum of the bound using the knowledge of $\sigma_+$ (case~1) and of the agnostic one (case~2) to set $\sigma_{+,t}(\lambda, \lambda_-)$ and the maximum for $\sigma_{-,t}(\lambda)$. Figure~\ref{fig:empirical_noise_estimation} shows the resulting noise estimate envelopes for different $\sigma_+$ values (recall that $\sigma = 0.1$).

\begin{figure}
	\centering
	\begin{subfigure}{0.49\textwidth}
		\captionsetup{skip=-5pt}
		\includegraphics[width=\textwidth]{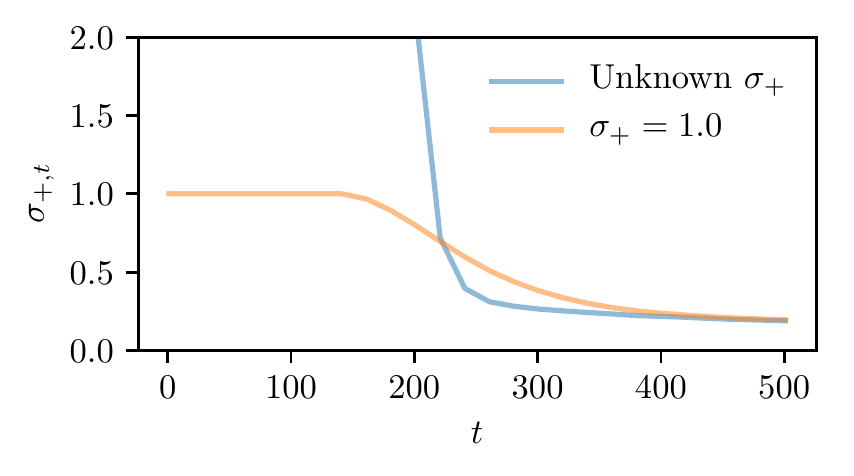}
		\caption{}
		\label{fig:empirical_noise_estimation_upperbound}
	\end{subfigure}
	\begin{subfigure}{0.49\textwidth}
		\captionsetup{skip=-5pt}
		\includegraphics[width=\textwidth]{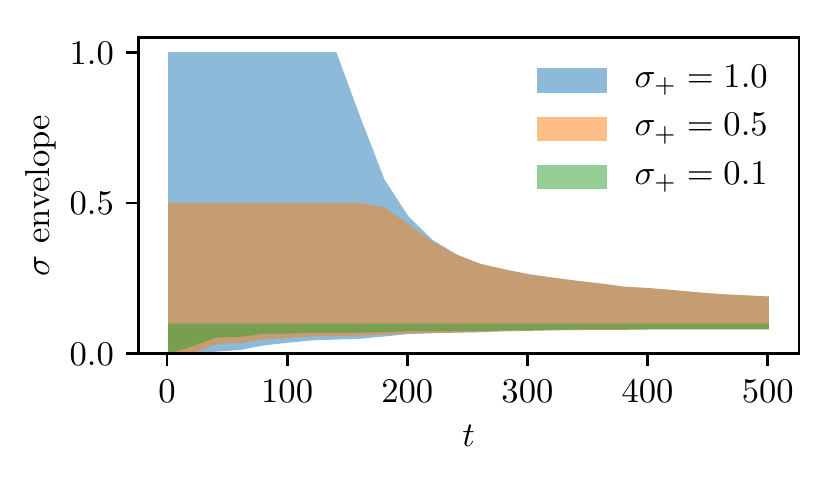}
		\caption{}
		\label{fig:empirical_noise_estimation}
	\end{subfigure}
	\caption{Variance estimate a) from Theorem~\ref{thm:rkhs_var_bound}, with and without $\sigma_+$; b) as minimum of the bounds and $\sigma_+$, for different upper-bounds.}
\end{figure}

\subsection{Adaptive regularization}

We now combine the previous experiments and use the estimated noise in order to tune the regularization. Recall that we consider $\sigma_{-,0} = \sigma_-$, $\sigma_{+,0} = \sigma_+$, and $\lambda_0 = \sigma_+^2/C^2$. On each time $t \geq 1$, we estimate the noise lower-bound $\sigma_{-,t} = \max \{ \sigma_{-,t}(\lambda_{t-1}), \sigma_{-,t-1} \}$ using Theorem~\ref{thm:rkhs_var_bound} and set $\lambda_- = \sigma_{-,t}^2/C^2$. 
We then compute the upper-bound noise estimate $\sigma_{+,t} = \min \{ \sigma_{+,t}(\lambda_{t-1}, \lambda_-), \sigma_{+,t-1} \}$ using Theorem~\ref{thm:rkhs_var_bound} and set $\lambda_t = \sigma_{+,t}^2/C^2$. We are now ready to compute the confidence interval given by Corollary~\ref{cor:empBernstein}. Note that $\delta = 0.1$ is used everywhere and all observations are uniformely sampled from $\cX$. Figure~\ref{fig:concentration_bounds_thm3} illustrates the resulting confidence envelope of this fully empirical model for noise upper-bound $\sigma_+ = 1$ (recall that the noise satisfies $\sigma = 0.1$) plotted against the confidence envelope obtained with Theorem~\ref{thm:kLSmean} with fixed $\lambda = \sigma_+^2/C^2$. We observe the improvement of the confidence intervals with the number of observations. Recall that this setting is especially challenging since the variance is unknown, the regularization parameter is tuned online, and the confidence bounds are valid uniformly over all time steps.

\begin{figure}[t]
	\centering
	\includegraphics{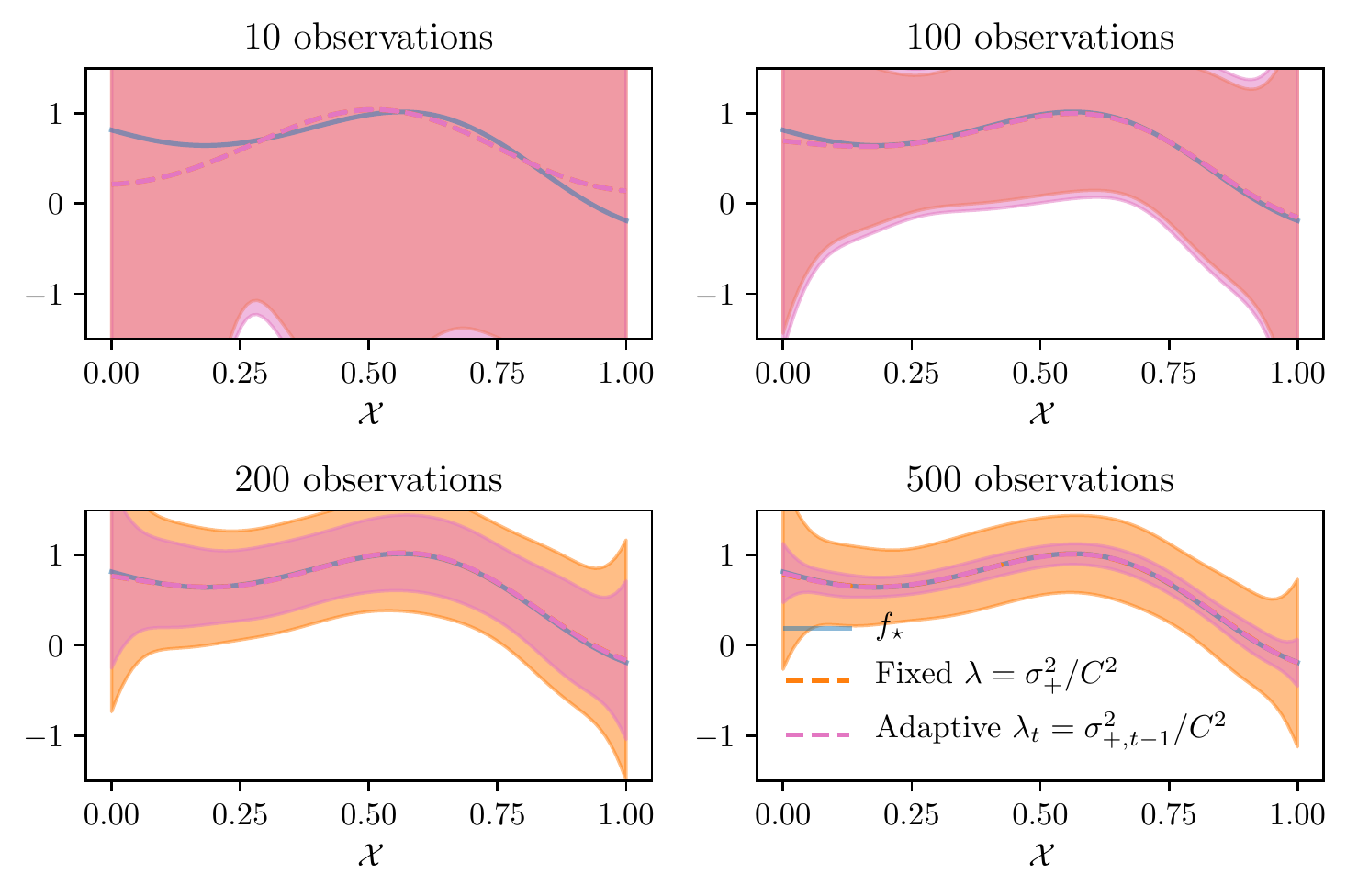}
	\caption{Confidence interval using fixed (Theorem~\ref{thm:kLSmean}) and adaptive (Corollary~\ref{cor:empBernstein}) regularization, for $\sigma_+ = 1$ and $\delta=0.1$.}
	\label{fig:concentration_bounds_thm3}
\end{figure}

\subsection{Kernelized bandits optimization}

In this section, we now evaluate the potential of kernelized bandits algorithms with variance estimate. We consider $\bX$ as the linearly discretized space $\cX = [0, 1]$ into 100 arms. Recall that the goal is to minimize the cumulative regret (Equation~\ref{eqn:regret}) and that we are optimizing the function shown by Figure~\ref{fig:function} with $\sigma = 0.1$. We evaluate Kernel UCB (Equation~\ref{eqn:kernel_ucb}) and Kernel TS (Algorithm~\ref{alg:kernel_ts} with $v_t = B_{\lambda_t, t-1}(\delta)/\sigma_{+,t-1}$) with three different configurations:
\begin{enumerate}[a),nolistsep]
	\item the oracle, that is with fixed $\lambda_t = \sigma^2/C^2$, assuming knowledge of $\sigma$;
	\item the fixed $\lambda_t = \sigma_+^2/C^2$, that is the best one can do without prior knowledge of $\sigma^2$;
	\item the adaptative regularization tuned with Corollary~\ref{cor:empBernstein}.
\end{enumerate}
All configurations use $C = 5$. Kernel UCB uses $\delta = 0.1/4$ and Kernel TS uses $\delta = 0.1/12$ such that their regret bounds respectively hold with probability $0.9$. Recall that observations are now sampled from $\bX$ using the bandits algorithms (they are not i.i.d.). Configurations b) and c) use $\sigma_+ = 1$, while the oracle a) uses $\sigma_+ = \sigma$. Figure~\ref{fig:bandits} shows the cumulative regret averaged over 100 repetitions. Note that the oracle corresponds to the best performance that could be expected by Kernel UCB and Kernel TS given knowledge of the noise. The plots confirm that adaptively tuning the regularization using the variance estimates can lead to a major improvement compared to using a fixed, non-accurate guess: after an initial burn-in phase, the regret of the adaptively tuned algorithm increases at the same rate as that of the oracle algorithm knowing the noise exactly. The fact that Kernel UCB outperforms Kernel TS much implies that inflating the variance in Kernel TS, as suggested per the theory presented previously, may not be optimal in practice. Further attention should be given to this question.

\begin{figure}[t]
	\centering
	\begin{subfigure}{0.49\textwidth}
		\captionsetup{skip=-5pt}
		\includegraphics[width=\textwidth]{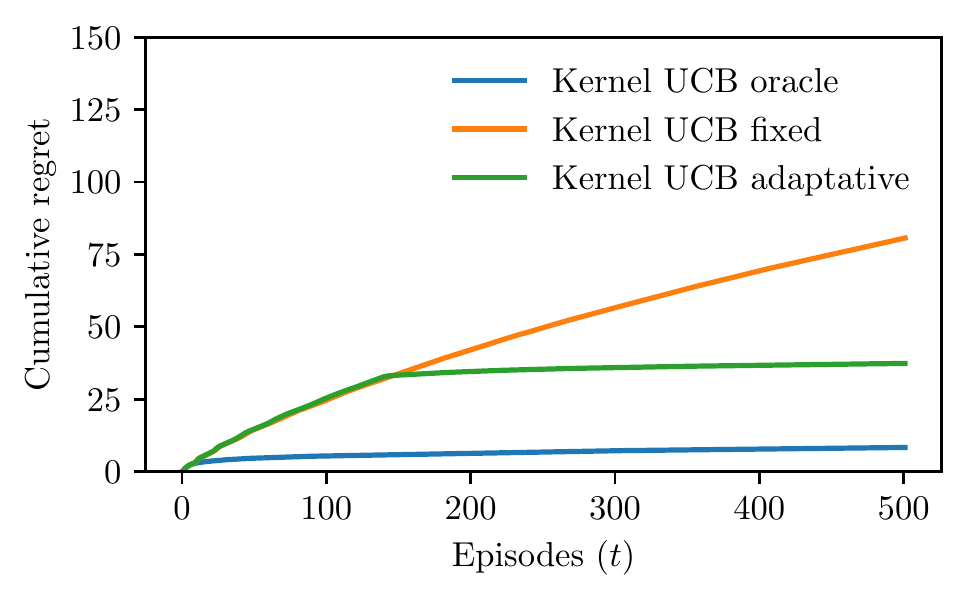}
		\caption{}
		\label{fig:bandits:kernel_ucb}
	\end{subfigure}
	\begin{subfigure}{0.49\textwidth}
		\captionsetup{skip=-5pt}
		\includegraphics[width=\textwidth]{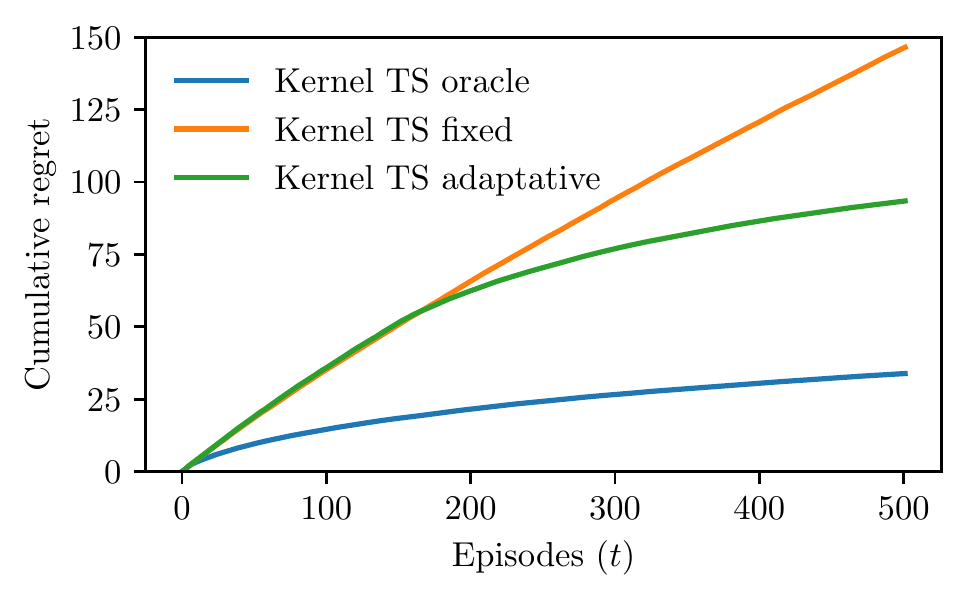}
		\caption{}
		\label{fig:bandits:kernel_ts}
	\end{subfigure}
	\caption{Averaged cumulative regret along episodes for a) Kernel UCB and b) Kernel TS.}
	\label{fig:bandits}
\end{figure}


\begin{figure}[t]
	\centering
	\includegraphics[width=0.5\textwidth]{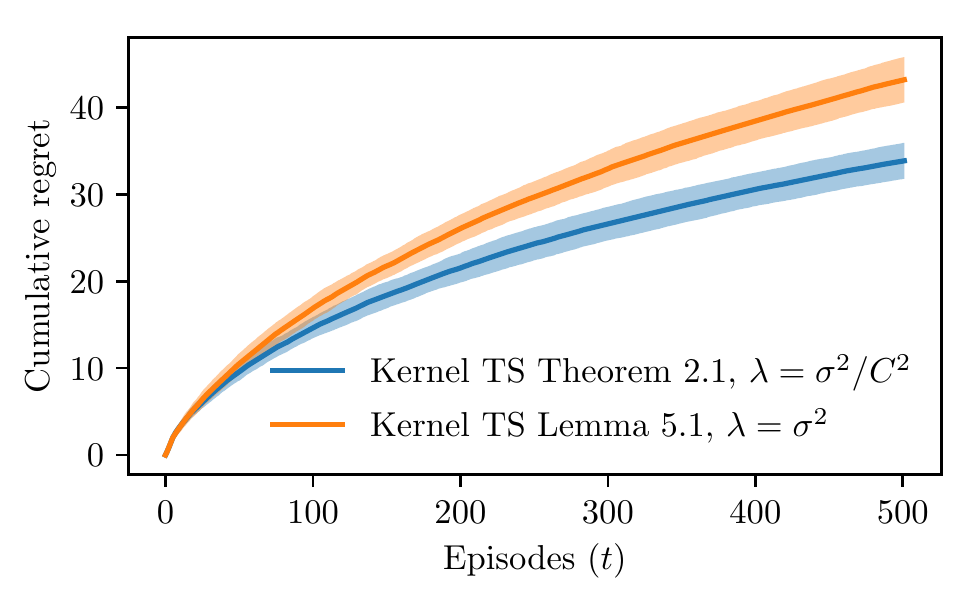}
	\caption{Averaged cumulative regret and one standard deviation along episodes for Kernel TS oracle with Theorem~\ref{thm:kLSmean} and Lemma~\ref{lem:gp_concentration:f_rkhs}~\citep{Wang2014}.}
	\label{fig:bandits:kernel_ts:against_wang}
\end{figure}

In order to evaluate the benefit of the concentration bound provided by Theorem~\ref{thm:kLSmean}, we compare the Kernel TS (Algorithm~\ref{alg:kernel_ts}) oracle using $v_t = B_{\lambda, t-1}/\sigma$ and $\lambda = \sigma^2/C^2$, where $B_{\lambda,t-1}$ is given by Theorem~\ref{thm:kLSmean}, against $v_t = \ell_t(\delta)$ where $\ell_t(\delta)$ is given by Lemma~\ref{lem:gp_concentration:f_rkhs}~\citep{Wang2014} with $\delta = 0.1$. Figure~\ref{fig:bandits:kernel_ts:against_wang} shows that the concentration bound given by Theorem~\ref{thm:kLSmean} improves the performance of Kernel TS compared with existing concentration results~\citep{Wang2014}. It highlights the relevance of expliciting the regularization parameter, which allows us to take advantage of regularization rates that may be better adapted.


\section{Conclusion}

This work addresses two problems: the online tuning of the regularization parameter in streaming kernel regression and the online estimation of the noise variance. To this extent, we introduce novel concentration bounds on the posterior mean estimate in streaming kernel regression with fixed and explicit regularization (Theorem~\ref{thm:kLSmean}), which we then extend to the setting where the regularization parameter is tuned (Theorem~\ref{thm:kLSmeantuned}). We further introduce upper- and lower-bound estimates of the noise variance (Theorem~\ref{thm:rkhs_var_bound}). Putting these tools together, we show how the estimate of the noise variance can be used to tune the kernel regularization in an online fashion (Corollary~\ref{cor:empBernstein}) while retaining theoretical guarantees. We also show how to use the proposed results in order to  derive kernelized variations of the most common bandits algorithms UCB and Thompson sampling, for which regret bounds are also provided (Theorems~\ref{thm:KernelUCB} and~\ref{thm:kernel_ts}).

All the proposed results and tools are illustrated through numerical experiments. The obtained results show the relevance of the introduced kernel regression concentration intervals for explicit regularization, which hold when the regularization does not correspond to the noise variance. The potential of the proposed regularization tuning procedure is illustrated through the application to kernelized bandits, where the benefits of adaptive regularization is undeniable when the noise variance is unknown (this is usually the case in practice). Finally, one must note that a major strength of the tools proposed in this work is to allow for an adaptively tuned regularization parameter while preserving theoretical guarantees, which is not the case when regularization is tuned
for example by cross-validation. 

Future work includes a natural extension of these techniques to obtain an empirical estimate of the kernel length scales. This information is often assumed to be known, while in practice it is often not available. Although some preliminary work has been done in that direction~\citep{Wang2014}, designing theoretically motivated algorithms addressing these concerns would help to fill an important gap between theory and practice. On a different matter, the current work gives the basis for performing Thompson sampling in RKHS, and could be extended to the contextual setting in a near future, as was done with CGP-UCB~\citep{Krause2011,Valko2013}.


\acks{This work was supported through funding from the Natural Sciences and Engineering Research Council of Canada (NSERC, Canada), the REPARTI strategic network (FRQ-NT, Qu\'ebec), MITACS, and E Machine Learning Inc. O.-A. M. acknowledges the support of the French Agence Nationale de la Recherche (ANR), under grant ANR-16-
	CE40-0002 (project BADASS).}


\begin{thebibliography}{21}
	\providecommand{\natexlab}[1]{#1}
	\providecommand{\url}[1]{\texttt{#1}}
	\expandafter\ifx\csname urlstyle\endcsname\relax
	\providecommand{\doi}[1]{doi: #1}\else
	\providecommand{\doi}{doi: \begingroup \urlstyle{rm}\Url}\fi
	
	\bibitem[Abbasi-Yadkori et~al.(2011)Abbasi-Yadkori, P{\'a}l, and
	Szepesv{\'a}ri]{Abbasi2011}
	Y.~Abbasi-Yadkori, D.~P{\'a}l, and C.~Szepesv{\'a}ri.
	\newblock Improved algorithms for linear stochastic bandits.
	\newblock In \emph{Advances in Neural Information Processing Systems 24
		(NIPS)}, pages 2312--2320, 2011.
	
	\bibitem[Abeille and Lazaric(2016)]{Abeille2016}
	M.~Abeille and A.~Lazaric.
	\newblock {Linear Thompson sampling revisited}.
	\newblock \emph{arXiv preprint arXiv:1611.06534}, 2016.
	
	\bibitem[Abramowitz and Stegun(1964)]{Abramowitz1964}
	M.~Abramowitz and I.~A. Stegun.
	\newblock \emph{Handbook of mathematical functions: with formulas, graphs, and
		mathematical tables}, volume~55.
	\newblock Courier Corporation, 1964.
	
	\bibitem[Agrawal and Goyal(2014)]{Agrawal2014}
	S.~Agrawal and N.~Goyal.
	\newblock Thompson sampling for contextual bandits with linear payoffs.
	\newblock \emph{arXiv preprint arXiv:1209.3352}, 2014.
	
	\bibitem[Auer et~al.(2002)Auer, Cesa-Bianchi, and Fischer]{Auer2002}
	P.~Auer, N.~Cesa-Bianchi, and P.~Fischer.
	\newblock Finite-time analysis of the multiarmed bandit problem.
	\newblock \emph{Machine learning}, 47\penalty0 (2-3):\penalty0 235--256, 2002.
	
	\bibitem[Brochu et~al.(2008)Brochu, De~Freitas, and Ghosh]{Brochu2008}
	E.~Brochu, N.~De~Freitas, and A.~Ghosh.
	\newblock Active preference learning with discrete choice data.
	\newblock In \emph{Advances in Neural Information Processing Systems 21
		(NIPS)}, pages 409--416, 2008.
	
	\bibitem[Chu et~al.(2011)Chu, Li, Reyzin, and Schapire]{Chu2011}
	W.~Chu, L.~Li, L.~Reyzin, and R.~E. Schapire.
	\newblock Contextual bandits with linear payoff functions.
	\newblock In \emph{Proceedings of the 14th International Conference on
		Artificial Intelligence and Statistics (AISTATS)}, volume~15, pages 208--214,
	2011.
	
	\bibitem[Cover and Thomas(1991)]{Cover1991}
	T.~M Cover and J.~A. Thomas.
	\newblock Elements of information theory.
	\newblock 1991.
	
	\bibitem[Krause and Ong(2011)]{Krause2011}
	A.~Krause and C.~S. Ong.
	\newblock {Contextual Gaussian process bandit optimization}.
	\newblock In \emph{Advances in Neural Information Processing Systems 24
		(NIPS)}, pages 2447--2455, 2011.
	
	\bibitem[Li et~al.(2010)Li, Chu, Langford, and Schapire]{Li2010}
	L.~Li, W.~Chu, J.~Langford, and R.~E. Schapire.
	\newblock {A contextual-bandit approach to personalized news article
		recommendation}.
	\newblock In \emph{Proceedings of the 19th international conference on World
		wide web (WWW)}, pages 661--0670, 2010.
	
	\bibitem[Maillard(2016)]{Maillard2016}
	O.-A. Maillard.
	\newblock {Self-normalization techniques for streaming confident regression}.
	\newblock working paper or preprint, May 2016.
	\newblock URL \url{https://hal.archives-ouvertes.fr/hal-01349727}.
	
	\bibitem[Marchant and Ramos(2014)]{Marchant2014}
	R.~Marchant and F.~Ramos.
	\newblock Bayesian optimisation for informative continuous path planning.
	\newblock In \emph{International Conference on Robotics and Automation (ICRA)},
	pages 6136--6143. IEEE, 2014.
	
	\bibitem[Maurer and Pontil(2009)]{Maurer2009}
	A.~Maurer and M~Pontil.
	\newblock {Empirical Bernstein bounds and sample variance penalization}.
	\newblock In \emph{Proceedings of the 22nd Annual Conference on Learning Theory
		(COLT)}, 2009.
	
	\bibitem[Rasmussen and Williams(2006)]{Rasmussen2006}
	C.~E. Rasmussen and C.~K.~I. Williams.
	\newblock \emph{Gaussian processes for machine learning}.
	\newblock MIT Press, 2006.
	
	\bibitem[Snoek et~al.(2012)Snoek, Larochelle, and Adams]{Snoek2012}
	J.~Snoek, H.~Larochelle, and R.~P. Adams.
	\newblock Practical bayesian optimization of machine learning algorithms.
	\newblock In \emph{Advances in Neural Information Processing Systems 25
		(NIPS)}, pages 2951--2959, 2012.
	
	\bibitem[Srinivas et~al.(2010)Srinivas, Krause, Kakade, and
	Seeger]{Srinivas2010}
	N.~Srinivas, A.~Krause, S.~M. Kakade, and M.~Seeger.
	\newblock {Gaussian process optimization in the bandit setting: No regret and
		experimental design}.
	\newblock In \emph{Proceedings of the 27th International Conference on Machine
		Learning (ICML)}, 2010.
	
	\bibitem[Srinivas et~al.(2012)Srinivas, Krause, Kakade, and
	Seeger]{Srinivas2012}
	N.~Srinivas, A.~Krause, S.~M. Kakade, and M.~W. Seeger.
	\newblock {Information-theoretic regret bounds for Gaussian process
		optimization in the bandit setting}.
	\newblock \emph{IEEE Transactions on Information Theory}, 58\penalty0
	(5):\penalty0 3250--3265, 2012.
	
	\bibitem[Thompson(1933)]{Thompson1933}
	W.~R. Thompson.
	\newblock On the likelihood that one unknown probability exceeds another in
	view of the evidence of two samples.
	\newblock \emph{Biometrika}, 25\penalty0 (3/4):\penalty0 285--294, 1933.
	
	\bibitem[Valko et~al.(2013)Valko, Korda, Munos, Flaounas, and
	Cristianini]{Valko2013}
	M.~Valko, N.~Korda, R.~Munos, I.~Flaounas, and N.~Cristianini.
	\newblock Finite-time analysis of kernelised contextual bandits.
	\newblock In \emph{Proceedings of the 29th conference on Uncertainty In
		Artificial Intelligence (UAI)}, pages 654--665, 2013.
	
	\bibitem[Wang and de~Freitas(2014)]{Wang2014}
	Z.~Wang and N.~de~Freitas.
	\newblock Theoretical analysis of bayesian optimisation with unknown gaussian
	process hyper-parameters.
	\newblock \emph{arXiv preprint arXiv:1406.7758}, 2014.
	
	\bibitem[Wilson et~al.(2014)Wilson, Fern, and Tadepalli]{Wilson2014}
	A.~Wilson, A.~Fern, and P.~Tadepalli.
	\newblock Using trajectory data to improve bayesian optimization for
	reinforcement learning.
	\newblock \emph{Journal of Machine Learning Research}, 15:\penalty0 253--282,
	2014.
	
\end{thebibliography}

\newpage
\appendix

\section{Laplace method for tuned kernel regression}
\label{app:concentration}

In this section, we want to control the term $|f_{\lambda,t}(x)-f_\star(x)|$
simultaneously over all $t\leq T$. To this end, we resort to a version of the Laplace method carefully extended to the RKHS setting.

Before proceeding, we note that since $k:\cX\times\cX\to\Real$ is a kernel function (that is continuous, symmetric  positive definite) on a compact set $\cX$ equipped with a positive finite Borel measure $\mu$,
then there is an at most countable sequence $(\sigma_i,\psi_i)_{i\in\Nat^\star}$ where
$\sigma_i\geq 0$, $\lim_{i\to\infty} \sigma_i=0$ and $\{\psi_i\}$ form an orthonormal basis of $L_{2,\mu}(\cX)$, such that
\beqan
k(x,y) = \sum_{j=1}^\infty  \sigma_j \psi_j(x) \psi_j(y')\quad \text{ and }\quad
\norm{f}^2_\cK = \sum_{j=1}^\infty \frac{\langle f, \psi_j\rangle^2_{L_{2,\mu}}}{\sigma_j}
\eeqan
Let $\phi_i = \sqrt{\sigma_i}\psi_i$. Note that $\norm{\phi_i}_{L_2} = \sqrt{\sigma_i}$, $\norm{\phi_i}_\cK = 1$. 
Further, if $f = \sum_{i} \theta_i\phi_i$, then $\norm{f}^2_\cK = \sum_{i} \theta_i^2$ and 
$\norm{f}^2_{L_2} = \sum_{i} \theta_i^2 \sigma_i$.
In particular $f$ belongs to the RKHS if and only if $\sum_{i} \theta_i^2<\infty$.
For $\phi(x) = (\phi_1(x),\dots)$ and $\theta = (\theta_1,\dots)$, we now denote
$\theta^\top \phi(x)$ for $\sum_{i\in\Nat}\theta_i\phi_i(x)$, 
by analogy with the finite dimensional case. Note that $k(x,y) =  \phi(x)^\top\phi(y)$.

In the sequel, the following Martingale control will be a key component of the analysis.
	\begin{mylemma}[Hilbert Martingale Control]\label{lem:VecMart}
	Assume that the noise sequence $\{\xi_t\}_{t=0}^\infty$ is conditionally $\sigma^2$-sub-Gaussian
	\beqan
	\forall t\in\Nat, \forall \gamma \in\Real,\quad \ln \Esp[\exp(\gamma \xi_t) | \cH_{t-1}] \leq \frac{\gamma^2\sigma^2}{2}\,.
	\eeqan
	Let $\tau$ be a stopping time with respect to the filtration $\{\cH_{t}\}_{t=0}^\infty$ generated by the variables $\{x_t,\xi_t\}_{t=0}^\infty$.
	For any ${\bf q} = (q_1,q_2,\dots)$ such that ${\bf q}^\top\phi_i(x) =\sum_{i\in\Nat}q_i\phi(x)<\infty$,
	and deterministic positive $\lambda$, let us denote
	\beqan
	M_{m,\lambda}^{\bf q} = 
	\exp\bigg( \sum_{t=1}^m \frac{{\bf q}^\top\phi(x_t)}{\sqrt{\lambda}}\xi_t - \frac{\sigma^2}{2} \sum_{t=1}^m \frac{({\bf q}^\top \phi(x_t))^2}{\lambda}\bigg)
	\eeqan
	Then, for all such ${\bf q}$ the quantity $M_{\tau,\lambda}^{\bf q}$ is well defined and satisfies 
	\beqan
	\ln \Esp[M_{\tau,\lambda}^{\bf q}]\leq 0\,.
	\eeqan
\end{mylemma}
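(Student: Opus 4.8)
The plan is to recognize $M_{m,\lambda}^{\bf q}$ as a nonnegative supermartingale with respect to $\{\cH_m\}$ and then combine optional stopping with Fatou's lemma. The essential simplification is that, although ${\bf q}$ and $\phi(x_t)$ live in an infinite-dimensional space, the pairing $a_t := {\bf q}^\top\phi(x_t)/\sqrt{\lambda}$ is a genuine scalar: the summability hypothesis that $\sum_i q_i\phi_i(x)$ converges guarantees that this series defines a finite real number, and by the predictability of the model ($x_t$ is $\cH_{t-1}$-measurable) the coefficient $a_t$ is moreover $\cH_{t-1}$-measurable. This reduces the Hilbert-valued object to the familiar scalar exponential martingale $M_{m,\lambda}^{\bf q} = \exp\big(\sum_{t\le m} a_t\xi_t - \tfrac{\sigma^2}{2}\sum_{t\le m} a_t^2\big)$, and it is exactly here that the well-definedness claim of the statement is secured.

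First I would establish the one-step supermartingale property. Conditioning on $\cH_{m-1}$ and factoring out the $\cH_{m-1}$-measurable terms gives
\[
\Esp\big[M_{m,\lambda}^{\bf q}\mid \cH_{m-1}\big] = M_{m-1,\lambda}^{\bf q}\,\exp\!\Big(-\tfrac{\sigma^2}{2}a_m^2\Big)\,\Esp\big[\exp(a_m\xi_m)\mid \cH_{m-1}\big].
\]
Applying the sub-Gaussian hypothesis with $\gamma = a_m$ bounds the conditional moment generating function by $\exp(\tfrac{a_m^2\sigma^2}{2})$, which exactly cancels the deterministic correction and yields $\Esp[M_{m,\lambda}^{\bf q}\mid \cH_{m-1}]\le M_{m-1,\lambda}^{\bf q}$. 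Since $M_{0,\lambda}^{\bf q}=1$ (empty sums), iterating shows $\Esp[M_{m,\lambda}^{\bf q}]\le 1$ for every fixed $m$, so $M_{m,\lambda}^{\bf q}$ is a nonnegative supermartingale.

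To pass to the random time $\tau$ I would use the standard two-step device. Being a nonnegative supermartingale, $M_{m,\lambda}^{\bf q}$ converges almost surely as $m\to\infty$ to some limit $M_{\infty,\lambda}^{\bf q}$; this is precisely what makes $M_{\tau,\lambda}^{\bf q}$ well defined even on $\{\tau=\infty\}$, where we set it equal to this limit. For the bounded stopping time $\tau\wedge m$, the optional stopping theorem for supermartingales gives $\Esp[M_{\tau\wedge m,\lambda}^{\bf q}]\le 1$. Letting $m\to\infty$, the integrands $M_{\tau\wedge m,\lambda}^{\bf q}$ converge almost surely to $M_{\tau,\lambda}^{\bf q}$, and as they are nonnegative, Fatou's lemma yields
\[
\Esp\big[M_{\tau,\lambda}^{\bf q}\big] \le \liminf_{m\to\infty}\Esp\big[M_{\tau\wedge m,\lambda}^{\bf q}\big] \le 1,
\]
hence $\ln\Esp[M_{\tau,\lambda}^{\bf q}]\le 0$.

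The only genuinely delicate point, compared with the classical finite-dimensional statement, is the passage to the Hilbert setting: one must confirm that the pairing ${\bf q}^\top\phi(x_t)$ is a finite, $\cH_{t-1}$-measurable scalar before any of the martingale bookkeeping applies. I expect the main obstacle to be precisely this — carefully justifying the convergence of the series defining $a_t$ from the summability condition on ${\bf q}$, and ensuring the sub-Gaussian inequality is legitimately invoked with the random but \emph{predictable} coefficient $a_m$, which requires $a_m$ to be frozen under the conditional expectation given $\cH_{m-1}$. Once these measurability and finiteness facts are in place, the remainder is the standard supermartingale-plus-Fatou argument.
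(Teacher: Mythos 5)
Your proposal is correct and follows essentially the same route as the paper's proof: establish the one-step supermartingale property from the conditional sub-Gaussian bound (using that $a_m$ is $\cH_{m-1}$-measurable), use the convergence theorem for nonnegative supermartingales to make $M_{\tau,\lambda}^{\bf q}$ well defined on $\{\tau=\infty\}$, and then apply Fatou's lemma to the stopped process $M_{\tau\wedge m,\lambda}^{\bf q}$. Your write-up in fact spells out the supermartingale step and the predictability/finiteness of ${\bf q}^\top\phi(x_t)$, which the paper leaves as ``immediate,'' but the argument is the same.
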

\begin{proof}
	The only difficulty in the proof is to handle the stopping time.
	Indeed, for all $m\in\Nat$, thanks to the conditional $R$-sub-Gaussian property, it is immediate to show that $\{M_{m,\lambda}^{\bf q}\}_{m=0}^\infty$ is a non-negative super-martingale and actually satisfies $\ln \Esp[M_{m,\lambda}^{\bf q}]\leq 0$.
	
	By the convergence theorem for nonnegative super-martingales, $M_\infty^{\bf q} = \lim_{m\to\infty}M_{m,\lambda}^{\bf q}$ is almost surely well-defined,
	and thus $M_{\tau,\lambda}^{\bf q}$ is well-defined (whether $\tau<\infty$ or not) as well.
	In order to show that $\ln \Esp[M_{\tau,\lambda}^{\bf q}]\leq 0$, we introduce
	a stopped version $Q_m^{\bf q} = M_{\min\{\tau,m\},\lambda}^{\bf q}$ of $\{M_{m,\lambda}^{\bf q}\}_m$.
	Now $\Esp[M_{\tau,\lambda}^{\bf q}] = \Esp[\liminf_{m\to\infty}Q_m^{\bf q}]\leq \liminf_{m\to\infty}\Esp[Q_m^{\bf q}] \leq 1$ by Fatou's lemma, which concludes the proof. We refer to \citep{Abbasi2011} for further details.
\end{proof}

We are now ready to prove the following result.


\begin{proofof}{Theorem~\ref{thm:kLSmeantuned} (Streaming Kernel Least-Squares)}
	We make use of the features in an explicit way.  Let $\lambda=\lambda_{t+1}$.
	For $f_\star\in\cK$, we denote $\theta^\star$ its corresponding parameter sequence.	
	We let ${\bf \Phi}_{t} = (\phi(x_{t'}))_{t'\leq t}$ be a $t\times \infty$ matrix built from the features and introduce the bi-infinite matrix $V_{\lambda,t} = I + \frac{1}{\lambda}{\bf \Phi}_{t}^\top{\bf \Phi}_{t}$ as well as the noise vector $E_t = (\xi_1,\dots,\xi_t)$.
	In order to control the term $| f_{\blambda,t}-f_\star(x)|$,  we first decompose the estimation term. Indeed, using the feature map, it holds that
\beqan
f_{\blambda,t}(x) &=& k_t(x)^\top({\bf K_t}+ \lambda I_t)^{-1}Y_t\\
&=& \phi(x)^\top{\bf \Phi}_{t}^\top  ( {\bf \Phi}_{t}{\bf \Phi}_{t}^\top  + \lambda I_t)^{-1}Y_t\\
&=& \phi(x)^\top{\bf \Phi}_{t}^\top  \bigg( 
\frac{I_t}{\lambda} -
\frac{1}{\lambda}{\bf \Phi}_{t}\big(\lambda I\! +\! {\bf \Phi}_{t}^\top{\bf \Phi}_{t}\big)^{-1}
{\bf \Phi}_{t}^\top\bigg)Y_t\\
&=&\phi(x)^\top ( {\bf \Phi}_{t}^\top {\bf \Phi}_{t} + \lambda I)^{-1}{\bf \Phi}_{t}^\top 
( {\bf \Phi}_{t}\theta^\star + E_{t})
\eeqan
where in the third line, we used the Shermann-Morrison formula. From this, simple algebra yields
\beqan
f_{\blambda,t}(x)-f^\star(x)&=& \frac{1}{\lambda}\phi(x)^\top V_{\lambda,t}^{-1}\big({\bf \Phi}_{t}^\top E_{t}-\lambda\theta^\star\big)\,.
\eeqan
	We then obtain, from a simple H\"{o}lder inequality using the appropriate matrix norm,
	the following decomposition, that is valid provided that all terms involved are finite.
	\beqan
	|f_{\blambda,t}(x) - f(x)| \leq \frac{1}{\sqrt{\lambda}}\|\phi(x)\|_{V_{\lambda,t}^{-1}}\bigg[ \frac{1}{\sqrt{\lambda}}\|{\bf \Phi}_{t}^\top E_t\|_{V_{\lambda,t}^{-1}} + \sqrt{\lambda}\|\theta^\star\|_{ V_{\lambda,t}^{-1}} \bigg]
	\eeqan

	Now, we note that a simple application of the Shermann-Morrison formula yields
	\beqan
	\|\phi(x)\|^2_{V_{\lambda,t}^{-1}} =k_{t}(x,x)\,.
	\eeqan
	On the other hand, the last term of the bound is controlled as
	$\|\theta^\star\|_{ V_{\lambda,t}^{-1}}  \leq \|\theta^\star\|$. 
Thus, 
	\beqan
|f_{\blambda,t}(x) - f(x)| \leq \frac{k_{\blambda,t}^{1/2}(x,x)}{\sqrt{\lambda_{t+1}}}\bigg[\frac{1}{\sqrt{\lambda_{t+1}}}\|{\bf \Phi}_{t}^\top E_t\|_{V_{\lambda_{t+1},t}^{-1}}  + \sqrt{\lambda_{t+1}}\|\theta^\star\|_2 \bigg]\,.
	\eeqan

	In order to control the remaining term,
	$\frac{1}{\sqrt{\lambda_{t+1}}}\|{\bf \Phi}_{t}^\top E_t\|_{V_{\lambda_{t+1},t}^{-1}}$, for all $t$, we now want to apply Lemma~\ref{lem:VecMart}. However, the lemma does not apply since $\lambda_{t+1}$ is $\cH_t$-measurable.
Thus, before proceeding, we upper-bound it by the similar expression involving $\lambda_\star$:
	\beqan
	\frac{1}{\lambda}\|{\bf \Phi}_{t}^\top E_t\|_{V_{\lambda,t}^{-1}}^2
	&=& E_t^\top\frac{{\bf \Phi}_{t}^\top}{\lambda} (I+ \frac{1}{\lambda}{\bf \Phi}_{t}^\top{\bf \Phi}_{t})^{-1} \frac{{\bf \Phi}_{t}}E_t\\
	&=& E_t^\top {\bf \Phi}_{t}^\top(\lambda I+ {\bf \Phi}_{t}^\top{\bf \Phi}_{t})^{-1} {\bf \Phi}_{t}E_t\\
	&\leq& E_t^\top{\bf \Phi}_{t}^\top(\lambda_\star I+ {\bf \Phi}_{t}^\top{\bf \Phi}_{t})^{-1} {\bf \Phi}_{t}E_t\,,	
	\eeqan
	where  in the last line, we use the fact that the function $f:\lambda\to u^\top(\lambda I+A)^{-1}u$, for 
	$u=   {\bf \Phi}_{t}E_t$ and $A={\bf \Phi}_{t}^\top{\bf \Phi}_{t}$ is non increasing 
	(see Lemma~\ref{lem:increasingnorm} below). 	
	Thus, $\frac{1}{\sqrt{\lambda_{t+1}}}\|{\bf \Phi}_{t}^\top E_t\|_{V_{\lambda_{t+1},t}^{-1}} \leq \frac{1}{\sqrt{\lambda_\star}}\|{\bf \Phi}_{\lambda_\star,t}^\top E_t\|_{V_{\lambda_\star,t}^{-1}}$. 
	Next, we introduce a random stopping time $\tau$, to be defined later and apply Lemma~\ref{lem:VecMart}.

	More precisely, let $Q\sim\cN(0,I)$ be an infinite Gaussian random sequence which is independent of all other random variables.
	We denote $Q^\top\phi(x) = \sum_{i\in\Nat} Q_i\phi_i(x)$.
	For all $x$, $k(x,x)=\sum_{i\in\Nat} \phi_i^2(x)<\infty$ and thus  $\Var( Q^\top {\bf \phi}(x) )<\infty$.
	We define $M_{m,\lambda_\star} = \Esp[M_{m,\lambda_\star}^Q]$.
	Clearly, we still have $\Esp[M_{\lambda_\star,\tau}] = \Esp[\Esp[M_{m,\lambda_\star}^Q]|Q] \leq 1$. 
	Since  $V_{\lambda_\star,\tau} = I + \frac{1}{\lambda_\star} {\bf \Phi}_{\tau}^\top{\bf \Phi}_{\tau}$, elementary algebra gives
	\beqan
	\det(V_{\lambda_\star,\tau}) &=& \det(V_{\lambda_\star,\tau-1} + \frac{1}{\lambda_\star}\phi(x_\tau)\phi(x_\tau)^\top)=\det(V_{\lambda_\star,\tau-1})(1+ \frac{1}{\lambda_\star}\norm{\phi(x_\tau)}_{V_{\lambda_\star,\tau-1}^{-1}}^{2})\\
	&=&\det(V_{\lambda_\star,0})\prod_{t'=1}^\tau\bigg(1+ \frac{1}{\lambda_\star}\norm{\phi(x_{t'})}_{V_{\lambda_\star,t'-1}^{-1}}^{2}\bigg)\,,
	\eeqan
	where we used the fact that the eigenvalues of a matrix of the form $I+xx^\top$ are all ones except for the eigenvalue $1+\norm{x}^2$ corresponding to $x$.
	Then, note that $\det(V_{\lambda_\star,0})=1$ and thus 
	\beqan
	\ln(\det(V_{\lambda_\star,\tau})) &=&  \sum_{t'=1}^\tau\ln(1 + \frac{1}{\lambda_\star}\norm{\phi(x_{t'})}_{V_{\lambda_\star,t'-1}^{-1}}^{2})\\
	&=&\frac{1}{2}\sum_{t'=1}^\tau \ln\bigg(1 +  \frac{1}{\lambda_\star}k_{\lambda_\star,t'-1}(x_{t'},x_{t'})\bigg)\,.
	\eeqan
	In particular, $\ln(\det(V_{\lambda_\star,\tau}))$ is finite.
	The only difficulty in the proof is now to handle the possibly infinite dimension. 
	To this end, it is enough to take a look at the approximations using the $d$ first dimension of the sequence for each $d$.  We note $Q_d,M_{\lambda_\star,\tau},\Phi_{\tau,d}$ and $V_{\tau,d}$
	the restriction of the corresponding quantities to the components $\{1,\dots,d\}$.
	Thus $Q_d$ is  Gaussian $\cN(0,I_d)$. Following the steps from \cite{Abbasi2011},
	we obtain that 
	\beqan
	M_{m,d,\lambda_\star} = \frac{1}{\det(V_{\lambda_\star,m,d})^{1/2}}\exp\bigg(\frac{1}{2\sigma^2\lambda_\star} \norm{{\bf \Phi}_{m,d}^\top E_m}^2_{V_{\lambda_\star,m,d}^{-1}}\bigg)\,.
	\eeqan
	Note also that $\Esp[M_{\tau,d,\lambda_\star}] \leq 1$ for all $d\in\Nat$.
	Thus, we obtain by an application of Fatou's lemma that
	\beqan
	\Pr\bigg(\lim_{d\to\infty} \frac{\norm{{\bf \Phi}_{\tau,d}^\top E_\tau}^2_{V_{\lambda_\star,\tau,d}^{-1}}}{2\sigma^2\lambda_\star\log\Big(\det(V_{\lambda_\star,\tau,d})^{1/2}/\delta\Big)}>1 \bigg) &\leq &
	\Esp\bigg[\lim_{d\to\infty} \frac{\delta\exp\bigg(\frac{1}{2\lambda_\star\sigma^2} \norm{{\bf \Phi}_{\tau,d}^\top E_\tau}^2_{V_{\tau,d}^{-1}}\bigg)}{\det(V_{\lambda_\star,\tau,d})^{1/2}}\bigg]\\
	&\leq& \delta \lim_{d\to\infty}\Esp[M_{\tau,d,\lambda_\star}] \leq \delta\,.
	\eeqan
	
	We conclude by defining $\tau$ following \cite{Abbasi2011}, by 
	\beqan
	\tau(\omega) = \min\bigg\{ t \geq 0; \omega \in\Omega \text{ s.t. } 
\|{\bf \Phi}_{t}^\top E_t\|^2_{V_{\lambda_\star,t}^{-1}}> 2\sigma^2\lambda_\star\log\Big(\det(V_{\lambda_\star,t})^{1/2}/\delta\Big)\bigg\}\,.
	\eeqan
	Then $\tau$ is a random stopping time and 
	\beqan
	\Pr\bigg( \exists t, \|{\bf \Phi}_{t}^\top E_t\|^2_{V_{\lambda_\star,t}^{-1}}> 2\sigma^2\lambda_\star\log\Big(\det(V_{\lambda_\star,t})^{1/2}/\delta\Big)\bigg) = \Pr(\tau<\infty) \leq \delta.
	\eeqan	
	Finally, combining this result with the previous remarks we obtain that with probability higher than $1-\delta$,  uniformly over $x\in\cX$
	and $t\leq T$, it holds that
	\beqan|
	f_{\blambda,t}-f_\star(x)| \leq 
	\frac{k_{\blambda,t}^{1/2}(x,x)}{\sqrt{\lambda_{t+1}}}\bigg[
	\sqrt{2\sigma^2\ln\bigg(\frac{\det(I+\frac{1}{\lambda_\star}\Phi_{t}^\top\Phi_{t})^{1/2}}{\delta}\bigg)}
	+\sqrt{\lambda_{t+1}}\norm{f_\star}_{\cK}
	\bigg]\,.
	\eeqan
\end{proofof}

\begin{mylemma}[Technical lemma]\label{lem:increasingnorm}
	The function $f:\lambda\mapsto u^\top (\lambda I + A)^{-1} u$, where $A$ is a semi-definite positive matrix  and $u$ is any vector, is non-decreasing on
	$\lambda\in\Real^+$.
\end{mylemma}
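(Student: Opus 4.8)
The plan is to reduce the matrix statement to a family of scalar monotonicity statements through the spectral theorem and then recombine them with nonnegative weights. Since $A$ is symmetric positive semi-definite, I would write its spectral decomposition $A=\sum_i a_i v_i v_i^\top$ with eigenvalues $a_i\ge 0$ and an orthonormal eigenbasis $(v_i)$, and expand the vector as $u=\sum_i c_i v_i$ where $c_i=\langle u,v_i\rangle$. Each $v_i$ is then an eigenvector of $(\lambda I+A)^{-1}$ associated with the eigenvalue $(\lambda+a_i)^{-1}$, so the quadratic form diagonalizes and
\[
f(\lambda)=u^\top(\lambda I+A)^{-1}u=\sum_i\frac{c_i^2}{\lambda+a_i}.
\]
This reduces the whole question to the behaviour of the elementary maps $\lambda\mapsto(\lambda+a_i)^{-1}$ on $\Real^+$, each weighted by the nonnegative scalar $c_i^2$.

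The second step is to settle the monotonicity of $f$ and to phrase it in the form the lemma is used for. For fixed $a_i\ge 0$ and $0<\lambda_1\le\lambda_2$ one has $\lambda_1 I+A\preceq\lambda_2 I+A$ in the Loewner order; since matrix inversion is order-reversing on the cone of positive-definite matrices, $(\lambda_2 I+A)^{-1}\preceq(\lambda_1 I+A)^{-1}$, and evaluating the associated quadratic form at $u$ gives $f(\lambda_2)\le f(\lambda_1)$ --- equivalently, every summand $c_i^2/(\lambda+a_i)$ is monotone in $\lambda$ and the nonnegative combination inherits this monotonicity. This is exactly the comparison invoked in the proof of Theorem~\ref{thm:kLSmeantuned}: with $u={\bf \Phi}_t E_t$, $A={\bf \Phi}_t^\top{\bf \Phi}_t$, $\lambda_1=\lambda_\star$ and $\lambda_2=\lambda_{t+1}\ge\lambda_\star$, it yields $\|{\bf \Phi}_t E_t\|^2_{(\lambda_{t+1}I+A)^{-1}}\le\|{\bf \Phi}_t E_t\|^2_{(\lambda_\star I+A)^{-1}}$, which is precisely the bound needed there.

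The main obstacle is not the scalar calculus but making the spectral argument rigorous in the possibly infinite-dimensional RKHS setting, where $A={\bf \Phi}_t^\top{\bf \Phi}_t$ is a bi-infinite matrix. Concretely, I would verify that $\sum_i c_i^2<\infty$ (so that $u$ is a legitimate element of the feature space and $f(\lambda)$ is finite for every $\lambda>0$), that the diagonalization above converges, and that the Loewner comparison, which is elementary for finite truncations, passes to the limit. A clean way to handle this is to run the argument on the $d$-dimensional truncations $A_d,u_d$ used elsewhere in Appendix~\ref{app:concentration}, establish the inequality for each $d$, and then let $d\to\infty$ using monotone convergence of the nonnegative series; this avoids any appeal to unbounded-operator spectral theory.
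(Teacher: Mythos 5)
Your proof is correct, but it takes a genuinely different route from the paper. The paper's own proof is a one-line resolvent computation: it applies the Sherman--Morrison formula to write $f(\lambda+h)=f(\lambda)-h\,u^\top(\lambda I+A)^{-1}(I+h(\lambda I+A)^{-1})^{-1}(\lambda I+A)^{-1}u$ and then lets $h\to 0$ to conclude $f'(\lambda)=-u^\top(\lambda I+A)^{-2}u\leq 0$. You instead argue globally, either by diagonalizing $A$ to get $f(\lambda)=\sum_i c_i^2/(\lambda+a_i)$ or, more directly, by the Loewner-order antitonicity of matrix inversion: $\lambda_1 I+A\preceq\lambda_2 I+A$ implies $(\lambda_2 I+A)^{-1}\preceq(\lambda_1 I+A)^{-1}$, hence $f(\lambda_2)\leq f(\lambda_1)$. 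Your route buys two things the paper's does not: it yields the two-point comparison $f(\lambda_2)\leq f(\lambda_1)$ directly (which is exactly the form invoked in the proof of Theorem~\ref{thm:kLSmeantuned}, with no need to integrate a derivative sign), and it explicitly handles the infinite-dimensional setting via finite truncations and monotone convergence --- an issue the paper's proof, stated for matrices but applied to the bi-infinite operator ${\bf \Phi}_t^\top{\bf \Phi}_t$, passes over silently. The paper's approach buys brevity. One point worth flagging: the lemma as stated says \emph{non-decreasing}, but the paper's own proof establishes $f'(\lambda)\leq 0$ and the application in Theorem~\ref{thm:kLSmeantuned} explicitly uses that $f$ is non-increasing; the statement contains a typo, and your proof (like the paper's) correctly establishes the non-increasing direction that is actually needed.
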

\begin{proof}
	Indeed, let $h>0$. By the Sherman-Morrison formula, we obtain
	\beqan
	f(\lambda+h) = f(\lambda) - h u^\top(\lambda I + A)^{-1}(I+ h (\lambda I + A)^{-1})^{-1} (\lambda I + A)^{-1}u\,.
	\eeqan
	Thus, since $\lambda I+A$ is also semi-definite positive, we have
	\beqan
	 \lim_{h\to 0}\frac{f(\lambda+h)-f(\lambda)}{h} = - u^\top(\lambda I + A)^{-1}(\lambda I + A)^{-1}u \leq 0\,.
	 \eeqan
\end{proof}

\section{Variance estimation}
\label{app:variance_estimation}

In this section, we give the proof of Theorem~\ref{thm:rkhs_var_bound}. To this end, we proceed in two steps. First, we provide an upper bound and lower bound on the variance estimate in the next theorem. Then, we use these bounds in order to derive the final statement.

\begin{mytheorem}[Regularized variance estimate]
\label{thm:kLSvar}
	Under the second-order sub-Gaussian predictable assumption, for any random stopping time $\tau$ for the filtration of the past, with probability higher than $1-3\delta$, it holds
	\beqan
	\sqrt{\hat \sigma^2_{k,\lambda,\tau}} &\leq& 
	\sigma\bigg[1 + \sqrt{\frac{2C_\tau(\delta)}{\tau}} \bigg]    
	+ \|f^\star\|_\cK\sqrt{\frac{\lambda}{\tau}}  \sqrt{1 - 
		\frac{1}{\max_{t\leq \tau}(1+k_{\lambda,t-1}(x_t,x_t))}}
	\\   
	\sqrt{\hat \sigma^2_{k,\lambda,\tau}} &\geq& 
	\sigma\bigg[1-\sqrt{\frac{C_\tau(\delta)}{\tau}}-
	\sqrt{\frac{C_\tau(\delta) + 2D_{\lambda_\star,\tau}(\delta)}{\tau}}\bigg]
	-\sqrt{\frac{2 \sigma \lambda^{1/2}\|f^\star\|_\cK\sqrt{D_{\lambda_\star,\tau}(\delta)}}{\tau}}\,.
	\eeqan
	where we introduced for convenience the constants 
	$C_\tau(\delta)=\ln(e/\delta)\big[1+\ln(\pi^2\ln(\tau)/6)/\ln(1/\delta)\big]$
	and $D_{\lambda_\star,\tau}(\delta)= 2\ln(1/\delta) + \sum_{t=1}^\tau\ln(1\! +\! \frac{1}{\lambda_\star}k_{\lambda_\star,t-1}(x_t,x_t))$.   
\end{mytheorem}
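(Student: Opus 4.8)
The plan is to control $\hat\sigma^2_{\lambda,\tau}$ through the residual vector. Let $\Phi_\tau$ be the (bi-infinite) feature matrix of the observed points, $M=\Phi_\tau^\top\Phi_\tau$, $\mathbf K_\tau=\Phi_\tau\Phi_\tau^\top$, $g=\Phi_\tau\theta^\star=(f_\star(x_t))_{t\le\tau}^\top$ the signal vector and $E_\tau=(\xi_1,\dots,\xi_\tau)^\top$ the noise. Since the fitted values are $\mathbf K_\tau(\mathbf K_\tau+\lambda I)^{-1}Y_\tau$, the residual vector equals $A\,Y_\tau=A(g+E_\tau)$ with $A=\lambda(\mathbf K_\tau+\lambda I)^{-1}$, so $\hat\sigma^2_{\lambda,\tau}=\tfrac1\tau\|A(g+E_\tau)\|_2^2$. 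I will repeatedly use that $A$ is symmetric with $0\preceq A\preceq I$ and that $I-A=\mathbf K_\tau(\mathbf K_\tau+\lambda I)^{-1}$. Because $\tau$ is a stopping time, every stochastic estimate below must be made time-uniform.

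For the upper bound I would apply the triangle inequality, $\sqrt{\hat\sigma^2_{\lambda,\tau}}\le\tfrac1{\sqrt\tau}\|AE_\tau\|+\tfrac1{\sqrt\tau}\|Ag\|\le\tfrac1{\sqrt\tau}\|E_\tau\|+\tfrac1{\sqrt\tau}\|Ag\|$, which cleanly separates noise from bias. The deterministic bias term obeys $\|Ag\|^2=g^\top A^2g\le g^\top Ag=\lambda\,\theta^{\star\top}(M+\lambda I)^{-1}M\,\theta^\star\le\lambda\|f_\star\|_{\cK}^2$, giving the robust estimate $\tfrac1{\sqrt\tau}\|Ag\|\le\|f_\star\|_{\cK}\sqrt{\lambda/\tau}$; the sharper factor $\sqrt{1-1/\max_{t\le\tau}(1+k_{\lambda,t-1}(x_t,x_t))}$ is obtained by retaining the spectral shrinkage of $A$ on the range of $\mathbf K_\tau$ and estimating it through the per-step posterior variances $k_{\lambda,t-1}(x_t,x_t)$ and the reproducing inequality $f_\star(x)^2\le k(x,x)\|f_\star\|_{\cK}^2$. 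The noise term is a time-uniform upper tail bound on $\tfrac1\tau\|E_\tau\|_2^2=\tfrac1\tau\sum_{t\le\tau}\xi_t^2$: the second-order $\sigma$-sub-Gaussian hypothesis supplies the chi-square-type cumulant control, and a Laplace/peeling argument over dyadic time blocks yields $\tfrac1{\sqrt\tau}\|E_\tau\|\le\sigma\bigl(1+\sqrt{2C_\tau(\delta)/\tau}\bigr)$, which is exactly where the constant $C_\tau(\delta)$ (with its $\ln\ln\tau$ and $\pi^2/6$ ingredients) appears.

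For the lower bound the triangle inequality is too lossy, so I would expand $\|A(g+E_\tau)\|^2=\|AE_\tau\|^2+2E_\tau^\top A^2g+\|Ag\|^2\ge\|AE_\tau\|^2-2|E_\tau^\top A^2g|$ and bound two quantities. First, the shrinkage: $\|E_\tau\|^2-\|AE_\tau\|^2=E_\tau^\top(I-A^2)E_\tau\le 2\,E_\tau^\top(I-A)E_\tau=2\|\Phi_\tau^\top E_\tau\|_{(M+\lambda I)^{-1}}^2$, where $\|u\|_B^2:=u^\top Bu$. Second, the cross term, by Cauchy--Schwarz, $|E_\tau^\top A^2g|\le\lambda^{1/2}\|\Phi_\tau^\top E_\tau\|_{(M+\lambda I)^{-1}}\|f_\star\|_{\cK}$. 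Both are governed by the single self-normalized quantity $\|\Phi_\tau^\top E_\tau\|_{(M+\lambda I)^{-1}}^2$, which I would bound by $\sigma^2 D_{\lambda_\star,\tau}(\delta)$: the monotonicity Lemma~\ref{lem:increasingnorm} together with $\lambda\ge\lambda_\star$ replaces $\lambda$ by $\lambda_\star$, after which the self-normalized (Laplace) inequality of Lemma~\ref{lem:VecMart} underlying Theorem~\ref{thm:kLSmeantuned}, using $2\gamma_\tau(\lambda_\star)=\ln\det V_{\lambda_\star,\tau}$, gives the estimate uniformly in $\tau$. Feeding these into the expansion, inserting a time-uniform lower tail $\tfrac1{\sqrt\tau}\|E_\tau\|\ge\sigma(1-\sqrt{C_\tau(\delta)/\tau})$, and using $\sqrt{a-b}\ge\sqrt a-\sqrt b$ produces the stated lower bound, the final term $\sqrt{2\sigma\lambda^{1/2}\|f_\star\|_{\cK}\sqrt{D_{\lambda_\star,\tau}(\delta)}/\tau}$ coming precisely from the cross term.

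A union bound over the three events (upper tail of $\sum\xi_t^2$, lower tail of $\sum\xi_t^2$, and the self-normalized event), each of probability at least $1-\delta$, then yields the claim with probability at least $1-3\delta$. I expect the main obstacle to be the time-uniform chi-square concentration: forcing both tails of $\sum_{t\le\tau}\xi_t^2$ to hold simultaneously for all stopping times $\tau$ with the exact $C_\tau(\delta)$ constants requires a self-normalized Laplace argument for the squared noise plus a careful peeling over time, and threading $\tau$ consistently through this step and through the control of $\|\Phi_\tau^\top E_\tau\|_{(M+\lambda I)^{-1}}$ is the delicate part; by contrast the deterministic bias factor is comparatively routine.
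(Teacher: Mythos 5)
Your proposal is correct in substance and relies on the same probabilistic ingredients as the paper's proof: the time-uniform two-sided chi-square bound on $\sum_{t\leq\tau}\xi_t^2$ with the constant $C_\tau(\delta)$ (Lemma~\ref{lem:varconcentration}, from \cite{Maillard2016}), the self-normalized control $\|\Phi_\tau^\top E_\tau\|^2_{G_{\lambda,\tau}^{-1}}\leq\frac{1}{\lambda_\star}\|\Phi_\tau^\top E_\tau\|^2_{V_{\lambda_\star,\tau}^{-1}}\leq\sigma^2 D_{\lambda_\star,\tau}(\delta)$ obtained from Lemma~\ref{lem:increasingnorm} plus the stopping-time Laplace argument, and a union bound over three events. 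Where you genuinely differ is the algebra. The paper works in parameter space, expanding $\tau\hat\sigma^2_{\lambda,\tau}=(\theta^\star-\theta_{\lambda,\tau})^\top G_\tau(\theta^\star-\theta_{\lambda,\tau})+\|E_\tau\|^2+2(\theta^\star-\theta_{\lambda,\tau})^\top\Phi_\tau^\top E_\tau$ and bounding each piece via the extreme eigenvalues of $G_{\lambda,\tau}$; you stay in observation space with the smoother complement $A=\lambda({\bf K}_\tau+\lambda I_\tau)^{-1}$, and the two are equivalent through push-through identities (e.g.\ $E_\tau^\top(I-A)E_\tau=\|\Phi_\tau^\top E_\tau\|^2_{G_{\lambda,\tau}^{-1}}$ and $g^\top A^2g=\lambda^2\|\theta^\star\|^2_{G_{\lambda,\tau}^{-1}}-\lambda^3\|\theta^\star\|^2_{G_{\lambda,\tau}^{-2}}$). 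Your route buys two things. For the upper bound, the triangle inequality $\|A(g+E_\tau)\|\leq\|E_\tau\|+\|Ag\|$ removes the cross term entirely, matching the stated bound; the paper instead carries a cross term in its upper-bound display and then discards it with what appears to be a sign slip, so your derivation is actually the cleaner justification of the statement. For the lower bound, your inequality $I-A^2\preceq 2(I-A)$ produces the factor $2D_{\lambda_\star,\tau}(\delta)$ directly, where the paper gets it from $1+\lambda/\lambda_{\min}(G_{\lambda,\tau})\leq 2$, and your Cauchy--Schwarz step yields exactly the stated term $\sqrt{2\sigma\lambda^{1/2}\|f_\star\|_\cK\sqrt{D_{\lambda_\star,\tau}(\delta)}/\tau}$. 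Two small caveats: the chi-square lemma gives $\frac{1}{\tau}\|E_\tau\|^2\geq\sigma^2-2\sigma^2\sqrt{C_\tau(\delta)/\tau}$, not the cleaner $\frac{1}{\sqrt\tau}\|E_\tau\|\geq\sigma\big(1-\sqrt{C_\tau(\delta)/\tau}\big)$ you wrote, and the missing $\sigma^2 C_\tau(\delta)/\tau$ correction is exactly what gets folded into the $\sqrt{(C_\tau(\delta)+2D_{\lambda_\star,\tau}(\delta))/\tau}$ term of the statement; and the conversion of the spectral shrinkage factor $1-\lambda/\lambda_{\max}(G_{\lambda,\tau})$ into the stated $1-1/\max_{t\leq\tau}(1+k_{\lambda,t-1}(x_t,x_t))$, which you leave vague, is also the one step the paper itself never justifies, since its proof stops at the eigenvalue form.
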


\begin{proof}
	We use the feature maps and start with the following decomposition
	\beqa
	\lefteqn{
		\tau\hat \sigma^2_{k,\lambda,\tau} = \sum_{t=1}^\tau (y_t- f_{\lambda,\tau}(x_t))^2 =\sum_{t=1}^\tau(y_t- \langle \theta_{\lambda,\tau},\phi(x_t)\rangle)^2}\nonumber\\
	&=&
	(\theta^\star -\theta_{\lambda,\tau})^\top G_\tau (\theta^\star -\theta_{\lambda,\tau})
	+ \|E_\tau\|^2 + 2 (\theta^\star -\theta_{\lambda,\tau})^\top \Phi_{\tau}^\top E_\tau\,.
	\label{eqn:decRegVar}
	\eeqa
	where 
	$	\theta^\star -\theta_{\lambda,\tau} = (I- G_{\lambda,\tau}^{-1}G_\tau)\theta^\star - G_{\lambda,\tau}^{-1}{\bf \Phi}_{\tau}^\top E_\tau$
	with  $G_{\lambda,\tau}=\lambda I +  G_\tau$ and 
	$G_\tau= {\Phi}_{\tau}^\top {\Phi}_{\tau}$.
	
	On the one hand, we can control the first term in 
	\eqref{eqn:decRegVar} via
	\beqan
	\lefteqn{
		(\theta^\star -\theta_{\lambda,\tau})^\top G_\tau(\theta^\star -\theta_{\lambda,\tau})}\\
	&=&[(I-G_{\lambda,\tau}^{-1} G_\tau)\theta^\star - G_{\lambda,\tau}^{-1} \Phi_\tau^\top E_\tau]^\top G_\tau[(I-G_{\lambda,\tau}^{-1} G_\tau)\theta^\star - G_{\lambda,\tau}^{-1} \Phi_\tau^\top E_\tau]\\
	&=& [\lambda \theta^\star-\Phi_\tau^\top E_\tau]^\top G_{\lambda,\tau}^{-1}G_\tau G_{\lambda,\tau}^{-1}[\lambda \theta^\star-\Phi_\tau^\top E_\tau]\\
	&=& [\lambda \theta^\star-\Phi_\tau^\top E_\tau]^\top [G_{\lambda,\tau}^{-1}-
	\lambda G_{\lambda,\tau}^{-2}][\lambda \theta^\star-\Phi_\tau^\top E_\tau]\\
	&=&\|\Phi_\tau^\top E_\tau\|_{G_{\lambda,\tau}^{-1}}^2 - \lambda\|\Phi_\tau^\top E_\tau\|_{G_{\lambda,\tau}^{-2}}^2 + \lambda^2\|\theta^\star\|_{G_{\lambda,\tau}^{-1}}^2-
	\lambda^3\|\theta^\star\|_{G_{\lambda,\tau}^{-2}}^2\\
	&&-2\lambda {\theta^\star}^\top[G_{\lambda,\tau}^{-1}-
	\lambda G_{\lambda,\tau}^{-2}]\Phi_\tau^\top E_\tau
	\eeqan
	where we used the fact that $I-G_{\lambda,\tau}^{-1} G_\tau = \lambda G_{\lambda,\tau}^{-1}$
	and then that $G_{\lambda,\tau}^{-1} G_\tau G_{\lambda,\tau}^{-1}=G_{\lambda,\tau}^{-1}-
	\lambda G_{\lambda,\tau}^{-2}$.
	Likewise, we control the third term in 
	\eqref{eqn:decRegVar} via
	\beqan
	2 (\theta^\star -\theta_{\lambda,\tau})^\top \Phi_\tau^\top E_\tau &=&
	2[(I-G_{\lambda,\tau}^{-1} G_\tau)\theta^\star - G_{\lambda,\tau}^{-1} \Phi_\tau^\top E_\tau]^\top\Phi_\tau^\top E_\tau \\
	&=& 2[\lambda \theta^\star-\Phi_\tau^\top E_\tau]^\top G_{\lambda,\tau}^{-1}\Phi_\tau^\top E_\tau\\
	&=& 2 \lambda{\theta^\star}^\top G_{\lambda,\tau}^{-1}\Phi_\tau^\top E_\tau - 2\|\Phi_\tau^\top E_\tau\|_{G_{\lambda,\tau}^{-1}}^2\,.
	\eeqan

	Combining these two bounds, we have
	\beqan
	\lefteqn{
		\sum_{t=1}^\tau(y_t- \langle \theta_{\lambda,\tau},\phi(x_t)\rangle)^2}\\ 
	&=& \|E_\tau\|^2 -\|\Phi_\tau^\top E_\tau\|_{G_{\lambda,\tau}^{-1}}^2 - \lambda \|\Phi_\tau^\top E_\tau\|_{G_{\lambda,\tau}^{-2}}^2\\
	&&  + \lambda^2\|\theta^\star\|_{G_{\lambda,\tau}^{-1}}^2-
	\lambda^3\|\theta^\star\|_{G_{\lambda,\tau}^{-2}}^2
	+2\lambda^2 {\theta^\star}^\top G_{\lambda,\tau}^{-2}\Phi_\tau^\top E_\tau\\
	&\leq& \|E_\tau\|^2+ \frac{\lambda^2}{\lambda_{\min(G_{\lambda,\tau})}} \|\theta^\star\|^2_2\Big(1 - \frac{\lambda}{\lambda_{\max}(G_{\lambda,\tau})}\Big)
	+ 2\frac{\lambda^2}{\lambda_{\min^{3/2}(G_{\lambda,\tau})}}\|\theta^\star\|_2
	\|\Phi_\tau^\top E_\tau\|_{G_{\lambda,\tau}^{-1}}\\
	&\geq&
	\|E_\tau\|^2+ \frac{\lambda^2}{\lambda_{\max}(G_{\lambda,\tau})} \|\theta^\star\|^2_2\Big(1 - \frac{\lambda}{\lambda_{\min(G_{\lambda,\tau})}}\Big)
	- 2\frac{\lambda^2}{\lambda_{\min^{3/2}(G_{\lambda,\tau})}}\|\theta^\star\|_2
	\|\Phi_\tau^\top E_\tau\|_{G_{\lambda,\tau}^{-1}}\\
	&&-\|\Phi_\tau^\top E_\tau\|_{G_{\lambda,\tau}^{-1}}^2\Big(1 + \frac{\lambda}{\lambda_{\min(G_{\lambda,\tau})}}\Big)\,.
	\eeqan
	
	Now, from Lemma~\ref{lem:varconcentration}, it holds on an event $\Omega_1$ 
	of probability higher than $1-\delta$,
	\beqan
	0 \leq 
	\|\Phi_\tau^\top E_\tau\|_{G_{\lambda,\tau}^{-1}}^2
	=	\frac{1}{\lambda}\|\Phi_{\tau}^\top E_\tau\|_{V_{\lambda,\tau}^{-1}}^2 \leq 
	\frac{1}{\lambda_\star}\|\Phi_{\tau}^\top E_\tau\|_{V_{\lambda_\star,\tau}^{-1}}^2\leq \sigma^2 D_{\lambda_\star,\tau}(\delta)\,.
	\eeqan
	
	On the other hand, we control the second term $\|E_\tau\|^2$ by Lemma~\ref{lem:varconcentration} below, and obtain that with probability higher than $1-2\delta$,
	\beqan
	\|E_\tau\|^2 &\leq& \tau \sigma^2 + 2\sigma^2\sqrt{2\tau C_\tau(\delta)} + 2\sigma^2C_\tau(\delta)\\
	\|E_\tau\|^2 &\geq& \tau \sigma^2 - 2\sigma^2\sqrt{\tau C_\tau(\delta)}\,,
	\eeqan
	where $C_\tau(\delta)=\ln(e/\delta)(1+c_\tau/\ln(1/\delta))$.
	
	Thus, combining these two results with a union bound, we deduce that with probability higher than $1-3\delta$ it holds that
	\beqan
	\hat \sigma^2_{\lambda,\tau} &\leq& \sigma^2 + 2\sigma^2\sqrt{\frac{2C_\tau(\delta)}{\tau}}
	+ \frac{2\sigma^2C_\tau(\delta)}{\tau} \\
	&&+ \frac{\lambda^2}{\tau \lambda_{\min(G_{\lambda,\tau})}} \|\theta^\star\|^2_2\Big(1 - \frac{\lambda}{\lambda_{\max}(G_{\lambda,\tau})}\Big)
	- 2\frac{\sigma\lambda^2}{\tau \lambda_{\min^{3/2}(G_{\lambda,\tau})}}\|\theta^\star\|_2\sqrt{D_{\lambda_\star,\tau}(\delta)}\\
	\hat \sigma^2_{\lambda,\tau}  &\geq& \sigma^2  - 2\sigma^2\sqrt{\frac{C_\tau(\delta)}{\tau}}
	+ 
	\frac{\lambda^2}{\tau \lambda_{\max}(G_{\lambda,\tau})} \|\theta^\star\|^2_2\Big(1 - \frac{\lambda}{\lambda_{\min(G_{\lambda,\tau})}}\Big)\\
	&&
	- 2\frac{\lambda^2\sigma}{\tau \lambda_{\min^{3/2}(G_{\lambda,\tau})}}\|\theta^\star\|_2\sqrt{D_{\lambda_\star,\tau}(\delta)}
	-\frac{\sigma^2D_{\lambda_\star,\tau}(\delta)}{\tau}\Big(1 + \frac{\lambda}{\lambda_{\min(G_{\lambda,\tau})}}\Big)
	\,.
	\eeqan  
	We can now derive a bound on $\sqrt{\hat \sigma^2_{\lambda,\tau}}$. Indeed, 
	\beqan
	\hat \sigma^2_{\lambda,\tau} &\leq& \bigg(
	\sigma + \sqrt{\frac{2\sigma^2C_\tau(\delta)}{\tau}}\bigg)^2+ \frac{\lambda^2}{\tau \lambda_{\min(G_{\lambda,\tau})}} \|\theta^\star\|^2_2\Big(1 - \frac{\lambda}{\lambda_{\max}(G_{\lambda,\tau})}\Big)\\  
	\hat \sigma^2_{\lambda,\tau} &\geq& \bigg(
	\sigma - \sqrt{\frac{\sigma^2C_\tau(\delta)}{\tau}}\bigg)^2 - \frac{\sigma^2}{\tau}
	\bigg(C_\tau(\delta) +D_{\lambda_\star,\tau}(\delta)\Big(1 + \frac{\lambda}{\lambda_{\min(G_{\lambda,\tau})}}\Big)\bigg)\\
	&& - \frac{2\lambda^2\sigma}{\tau \lambda_{\min^{3/2}(G_{\lambda,\tau})}}\|\theta^\star\|_2\sqrt{D_{\lambda_\star,\tau}(\delta)}\,.
	\eeqan
Thus, using the inequality $\sqrt{a+b} \leq \sqrt{a} + \sqrt{b}$, on both inequalities, we get
	\beqan
	\sqrt{\hat \sigma^2_{\lambda,\tau}} &\leq& 
	\sigma + \sigma\sqrt{\frac{2C_\tau(\delta)}{\tau}}     
	+ \frac{\lambda\|\theta^\star\|_2}{\sqrt{\tau \lambda_{\min(G_{\lambda,\tau})}}} \sqrt{1 - \frac{\lambda}{\lambda_{\max}(G_{\lambda,\tau})}}
	\\   
	\sqrt{\hat \sigma^2_{\lambda,\tau}} &\geq& 
	\sigma - \sigma\sqrt{\frac{C_\tau(\delta)}{\tau}}-
	\sigma\sqrt{\frac{C_\tau(\delta) + D_{\lambda_\star,\tau}(\delta)\Big(1\! +\! \frac{\lambda}{\lambda_{\min(G_{\lambda,\tau})}}\Big)}{\tau}}\\
	&&-\lambda\sqrt{\frac{2\sigma \|\theta^\star\|_2\sqrt{D_{\lambda_\star,\tau}(\delta)}}{\tau \lambda_{\min^{3/2}(G_{\lambda,\tau})}}}\,.
	\eeqan
\end{proof}


\begin{corollary}[Extension of Corollary~3.13 in \cite{Maillard2016}]
\label{cor:rkhs_var_bound}
    With probability higher than $1 - 3 \delta'$, it holds simultaneously over all $t\geq0$,
    \begin{align*}
        \sigma
        & \leq \frac{1}{\alpha^2} \Bigg( \sqrt{\frac{\sqrt{\lambda} \lVert  f_\star \rVert_\cK \sqrt{D_{t,\lambda_\star}(\delta')}}{2t}} + \sqrt{\frac{\sqrt{\lambda} \lVert  f_\star \rVert_\cK \sqrt{D_{\lambda_\star,t}(\delta')}}{2t} + \hat \sigma_{\lambda,t} \alpha} \Bigg)^2 \\
        \sigma
        & \geq \bigg[ \hat \sigma_{\lambda,t} - \lVert f_\star \rVert_\cK \sqrt{\frac{\lambda}{t} \bigg(1 - \frac{1}{\max_{t' \leq t}(1 + k_{\lambda,t'-1}(x_{t'}, x_{t'}))} \bigg)} \bigg] \bigg( 1 + \sqrt{\frac{2 C_t(\delta')}{t}} \bigg)^{-1},
    \end{align*}
    where $\alpha = \max \bigg( 1 - \sqrt{\frac{C_t(\delta')}{t}} - \sqrt{\frac{C_t(\delta') + 2 D_{\lambda_\star,t}(\delta')}{t}} , 0 \bigg)$. Further, if an upper bound $\sigma^+ \geq \sigma$ is known, one can derive the following inequalities that hold with probability higher than $1 - 3 \delta'$,
    \begin{align*}
        \sigma
        & \leq \hat \sigma_{\lambda,t} + \sigma^+ \bigg( \sqrt{\frac{C_t(\delta')}{t}} + \sqrt{\frac{C_t(\delta')+2D_{\lambda_\star,t}(\delta')}{t}} \bigg) + \sqrt{\frac{2 \sigma^+ \lambda^{1/2} \lVert f_\star \rVert_\cK \sqrt{D_{t,\lambda_\star}(\delta')}}{t}} \\
        \sigma
        & \geq \hat \sigma_{\lambda,t} - \sigma^+ \sqrt{\frac{2C_t(\delta')}{t}} - \lVert f_\star \rVert_\cK \sqrt{\frac{\lambda}{t} \bigg( 1 - \frac{1}{\max_{t' \leq t} (1 + k_{\lambda,t'-1}(x_{t'}, x_{t'}))} \bigg)}.
    \end{align*}
\end{corollary}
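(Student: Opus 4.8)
The plan is to obtain all four stated inequalities by algebraically \emph{inverting} the two-sided control of $\sqrt{\hat\sigma^2_{\lambda,\tau}}$ furnished by Theorem~\ref{thm:kLSvar}. Writing $\hat\sigma=\sqrt{\hat\sigma^2_{\lambda,t}}$ for brevity, that theorem gives, on an event of probability at least $1-3\delta'$, an upper bound of the linear form $\hat\sigma\le A\,\sigma+B$ together with a lower bound of the form $\hat\sigma\ge \alpha'\,\sigma-c\sqrt{\sigma}$, where $A=1+\sqrt{2C_t(\delta')/t}$, $B=\|f_\star\|_\cK\sqrt{(\lambda/t)\big(1-1/\max_{t'\le t}(1+k_{\lambda,t'-1}(x_{t'},x_{t'}))\big)}$, $\alpha'=1-\sqrt{C_t(\delta')/t}-\sqrt{(C_t(\delta')+2D_{\lambda_\star,t}(\delta'))/t}$, and $c=\sqrt{2\lambda^{1/2}\|f_\star\|_\cK\sqrt{D_{\lambda_\star,t}(\delta')}/t}$ (the cross term $\sqrt\sigma$ arises precisely from the $\sqrt{2\sigma\,\lambda^{1/2}\cdots}$ summand in the lower bound). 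Since every subsequent manipulation is deterministic, whatever holds for $\hat\sigma$ at each $t$ transfers to the inverted bounds for $\sigma$ on the same event; the uniformity over all $t\ge0$ is inherited from the stopping-time construction already used in Theorem~\ref{thm:kLSvar} (take $\tau$ to be the first index at which an inequality fails, so $\{\tau<\infty\}$ has probability at most $3\delta'$), and I would not re-prove it.

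First I would treat the \emph{lower bounds on $\sigma$}, which follow by inverting the linear upper bound $\hat\sigma\le A\sigma+B$. Rearranging gives $\sigma\ge(\hat\sigma-B)/A$, which is exactly the agnostic (case~2) bound after substituting $A$ and $B$. For case~1, rather than dividing by $A$, I would expand $A\sigma=\sigma+\sigma\sqrt{2C_t(\delta')/t}$, rearrange to $\sigma\ge\hat\sigma-B-\sigma\sqrt{2C_t(\delta')/t}$, and then invoke the known bound $\sigma\le\sigma^+$ in the residual term (since $-\sigma\ge-\sigma^+$) to replace it by $\sigma^+\sqrt{2C_t(\delta')/t}$, producing the stated case~1 expression.

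Next I would handle the \emph{upper bounds on $\sigma$}, which invert the lower bound $\hat\sigma\ge\alpha'\sigma-c\sqrt\sigma$. In case~1, bounding $\sqrt\sigma\le\sqrt{\sigma^+}$ in the cross term linearizes this to $\hat\sigma\ge\alpha'\sigma-\sqrt{2\sigma^+\lambda^{1/2}\|f_\star\|_\cK\sqrt{D_{\lambda_\star,t}(\delta')}/t}$; writing $\alpha'\sigma=\sigma-(1-\alpha')\sigma$, rearranging, and once more replacing $\sigma$ by $\sigma^+$ in the leftover $(1-\alpha')\sigma$ term yields the case~1 upper bound (this route avoids division and needs only $1-\alpha'\ge0$). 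The genuinely delicate step is the agnostic case~2: here I would set $u=\sqrt\sigma$ and read the lower bound as the quadratic inequality $\alpha\,u^2-c\,u-\hat\sigma\le0$, with $\alpha=\max(\alpha',0)$ enforcing a nonnegative leading coefficient. Solving and keeping the admissible (positive) root gives $u\le(c+\sqrt{c^2+4\alpha\hat\sigma})/(2\alpha)$; squaring and rewriting $c/2=\sqrt{\lambda^{1/2}\|f_\star\|_\cK\sqrt{D_{\lambda_\star,t}(\delta')}/(2t)}$ and $c^2/4=\lambda^{1/2}\|f_\star\|_\cK\sqrt{D_{\lambda_\star,t}(\delta')}/(2t)$ recovers the stated form $\sigma\le\alpha^{-2}\big(\sqrt{\cdots}+\sqrt{\cdots+\hat\sigma\,\alpha}\big)^2$.

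I expect the main obstacle to be precisely this quadratic inversion in case~2 and the bookkeeping around $\alpha$. One must truncate $\alpha'$ at $0$ so that the leading coefficient is nonnegative: when too few samples have been gathered and $\alpha'\le0$, the inequality no longer confines $u$ from above, and the bound correctly degenerates to $\sigma\le\infty$ (the $1/\alpha^2$ factor blowing up as $\alpha\to0$). Verifying that squaring the scalar inequality $u\le\cdots$ is legitimate (both sides nonnegative), selecting the correct root, and matching the two constants $c/2$ and $c^2/4$ to the two square-root summands of the target expression is where the care is needed; the three linear inversions are routine by comparison.
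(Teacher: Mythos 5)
Your proposal is correct and takes essentially the same route as the paper: the paper's proof also inverts the bounds of Theorem~\ref{thm:kLSvar}, and its key step is exactly your quadratic inversion (substituting $y^2=\sigma$, solving $A+yB-Cy^2\geq 0$ to get $y\leq (B+\sqrt{B^2+4AC})/(2C)$, then squaring), with uniformity over $t$ obtained by the same stopping-time-on-bad-events construction. The only difference is one of completeness, in your favor: the paper writes out only the agnostic quadratic case and leaves the three linear inversions (and the $\sigma\leq\sigma^+$ substitutions for case~1) implicit, whereas you carry out all four, each correctly.
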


\begin{proof}
    Using Theorem~\ref{thm:kLSvar}, it holds with high probability that
    \begin{align*}
        \underbrace{\hat \sigma_{\lambda,\tau}}_A \geq \sigma \bigg[ \underbrace{1 - \sqrt{\frac{C_\tau(\delta')}{\tau}} - \sqrt{\frac{C_\tau(\delta') + 2 D_{\lambda_\star,\tau}(\delta')}{\tau}}}_C \bigg] - \sqrt{\sigma} \underbrace{\sqrt{\frac{2 \sqrt{\lambda} \lVert  f_\star \rVert_\cK \sqrt{D_{\lambda_\star,\tau}(\delta')}}{\tau}}}_B.
    \end{align*}
    The inequality rewrites $A \geq \sigma C - \sqrt{\sigma} B$. Now, let $y^2 = \sigma$. If $C>0$,    the inequality holds provided that $y \geq 0$ and $A + yB - Cy^2 \geq 0$, that is when $0 \leq y \leq \frac{B + \sqrt{B^2 + 4AC}}{2C}$. We conclude  by choosing the stopping time $\tau$ corresponding to the probability of \emph{bad events}, as in the proof of Theorem~\ref{thm:kLSmeantuned}, then by remarking that $t\mapsto C_t(\delta')$ is an increasing function.
\end{proof}

\begin{mylemma}[Lemma~5.10 from \cite{Maillard2016}]\label{lem:varconcentration}
	Assume that $T_n$ is a random stopping time
	that satisfies $T_n\leq n$ almost surely, then
	\beqan
	\Pr\bigg[ \frac{1}{T_n}\sum_{i=1}^{T_n} \xi_i^2 \geq 
	\sigma^2 + 2\sigma^2\sqrt{\frac{2\ln(e/\delta)}{T_n}}
	+2\sigma^2\frac{\ln(e/\delta)}{T_n}\bigg] \leq 
	\Big( \lceil \ln(n)\ln(e/\delta)\rceil\Big)\delta\,,
	\eeqan
	\beqan
	\Pr\bigg[ \frac{1}{T_n}\sum_{i=1}^{T_n} \xi_i^2 \leq 
	\sigma^2 - 2\sigma^2\sqrt{\frac{\ln(e/\delta)}{T_n}}\bigg] \leq 
	\Big( \lceil \ln(n)\ln(e/\delta)\rceil\Big)\delta\,.
	\eeqan
	Further, for a random stopping time $T$, and if we introduce 
	$c_T =\ln(\pi^2\ln^2(T)/6)$, then
	\beqan
	\Pr\bigg[ \frac{1}{T}\sum_{i=1}^{T} \xi_i^2 \!\geq 
	\sigma^2 \!+\! 2\sigma^2\sqrt{\frac{2\ln(e/\delta)(1+c_T/\ln(1/\delta))}{T}}
	+2\sigma^2\frac{\ln(e/\delta)(1+c_T/\ln(1/\delta))}{T}\bigg] \leq \delta\,,
	\eeqan
	\beqan
	\Pr\bigg[ \frac{1}{T}\sum_{i=1}^{T} \xi_i^2 \leq 
	\sigma^2 - 2\sigma^2\sqrt{\frac{\ln(e/\delta)(1+c_T/\ln(1/\delta))}{T}}\bigg] \leq \delta\,.
	\eeqan
\end{mylemma}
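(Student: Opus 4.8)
The plan is to recognize each $\xi_t^2$ as a conditionally sub-gamma variable, encode its deviation in an exponential supermartingale (in the spirit of Lemma~\ref{lem:VecMart}), and then obtain control that is uniform both over the random stopping time and, in the last two inequalities, over all times, by two successive layers of peeling: one over the exponential tilt, which produces the $\ln(n)$ factor, and one over dyadic time epochs, which produces the $\pi^2/6$ and the correction $c_T$.

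First I would extract a sub-gamma control of the cumulant generating function from the second-order assumption. Writing $\ln\Esp[\exp(\gamma\xi_t^2)\,|\,\cH_{t-1}]\le -\tfrac12\ln(1-2\gamma\sigma^2)$ for $\gamma<1/(2\sigma^2)$ and centering by $\gamma\sigma^2$, the function $\psi(\gamma)=-\tfrac12\ln(1-2\gamma\sigma^2)-\gamma\sigma^2$ satisfies $\psi(\gamma)\le \gamma^2\sigma^4/(1-2\gamma\sigma^2)$ on $[0,1/(2\sigma^2))$ and $\psi(\gamma)\le\gamma^2\sigma^4$ for $\gamma\le 0$. This certifies that $\xi_t^2-\sigma^2$ is conditionally sub-gamma with variance proxy $\sim 2\sigma^4$ and scale $\sim 2\sigma^2$, which is exactly the pair of parameters that generates the Bernstein-type deviation $2\sigma^2\sqrt{2\ln(e/\delta)/T}+2\sigma^2\ln(e/\delta)/T$ in the statement (the $\sqrt2$ being absorbed as slack from the tilt grid below).

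Next, for each admissible $\gamma$ I would form $M_m(\gamma)=\exp\!\big(\gamma\sum_{t=1}^m(\xi_t^2-\sigma^2)-m\,\psi(\gamma)\big)$ and check, term by term with the tower property, that it is a nonnegative supermartingale with $\Esp[M_m(\gamma)]\le 1$. Applying the optional-stopping/maximal argument to a bounded stopping time $T_n\le n$ (exactly as $\tau$ is handled in the proof of Theorem~\ref{thm:kLSmeantuned}) yields, for fixed $\gamma$, a tail bound $\Pr\big[\sum_{t=1}^{T_n}(\xi_t^2-\sigma^2)\ge (\ln(1/\delta)+T_n\psi(\gamma))/\gamma\big]\le\delta$. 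The crux, and the main obstacle, is that the sharpest $\gamma$ depends on the \emph{realized} value of $T_n$, so no single tilt can deliver the stated rate. I would resolve this with a union bound over a geometric grid $\{\gamma_k\}$ covering the admissible range at resolution $\ln(e/\delta)$; since $T_n\in[1,n]$ the optimal tilt ranges over $O(\ln n)$ scales, and covering it with a grid of that cardinality inflates the failure probability by the factor $\lceil\ln(n)\ln(e/\delta)\rceil$. One must keep every $\gamma_k$ strictly below $1/(2\sigma^2)$ so the cgf bound stays valid; the lower-tail inequalities follow from the same construction restricted to $\gamma\le 0$, where admissibility is automatic.

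Finally, to upgrade to the genuinely time-uniform statements carrying $c_T=\ln(\pi^2\ln^2(T)/6)$, I would peel over dyadic epochs $T\in[2^j,2^{j+1})$, allocating a confidence budget $\delta_j\propto\delta/j^2$ to the $j$-th epoch and invoking the bounded-stopping-time result of the previous step within each epoch. Summing over $j$ uses $\sum_{j\ge1}j^{-2}=\pi^2/6$, which is precisely where that constant enters, while re-expressing the per-epoch budget $\delta_j$ as an effective confidence level produces the additive term inside $\ln(e/\delta)(1+c_T/\ln(1/\delta))$. I expect the sub-gamma bounding and the supermartingale construction to be routine; the delicate part is the twin peeling—over the tilt $\gamma$ to accommodate the self-normalized random stopping time, and over time epochs to obtain uniformity—together with the bookkeeping that keeps the grid inside $\gamma<1/(2\sigma^2)$.
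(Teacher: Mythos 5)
First, a point of reference: the paper itself contains \emph{no} proof of this lemma --- it is quoted verbatim as Lemma~5.10 of \cite{Maillard2016} and used as an external ingredient (in the proof of Theorem~\ref{thm:kLSvar}). So your attempt can only be compared against the technique that reference is built on, not against an in-paper argument.

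Judged on those terms, your architecture is the right one and matches the self-normalization toolkit the paper itself uses elsewhere (Lemma~\ref{lem:VecMart} and the stopping-time handling in the proof of Theorem~\ref{thm:kLSmeantuned}). Your cumulant computations are correct: with $\psi(\gamma)=-\tfrac{1}{2}\ln(1-2\gamma\sigma^2)-\gamma\sigma^2$ one indeed has $\psi(\gamma)\le \gamma^2\sigma^4/(1-2\gamma\sigma^2)$ on $[0,1/(2\sigma^2))$ and $\psi(\gamma)\le\gamma^2\sigma^4$ for $\gamma\le 0$, so $\xi_t^2-\sigma^2$ is conditionally sub-gamma with variance proxy $2\sigma^4$ and scale $2\sigma^2$, which inverts to exactly the $2\sigma^2\sqrt{\cdot}+2\sigma^2(\cdot)$ shape of the statement. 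The supermartingale $M_m(\gamma)$ is legitimate by the tower property, optional stopping via Fatou at $T_n$ is valid for each \emph{fixed} tilt, and since every grid inequality holds at the random time $T_n$, the union over a finite grid followed by an a-posteriori choice of the best grid point for the realized $T_n$ is sound. You also correctly identify where each constant comes from: the grid cardinality $\approx\ln(n)\ln(e/\delta)$ in the bounded case, and the $\pi^2/6$ and $c_T$ from $\delta_j\propto\delta/j^2$ epoch budgets in the time-uniform case.

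The gap is in the constants-level bookkeeping, and it is not purely cosmetic because the statement is \emph{asymmetric}: the upper tail carries a $\sqrt{2}$ inside the square root while the lower tail does not. Your proposal attributes the $\sqrt{2}$ generically to ``slack from the tilt grid,'' but any single grid resolution applied symmetrically would either put a $\sqrt{2}$ on both tails (coarse grid, e.g.\ ratio-$2$ peeling) or on neither (fine grid of resolution $1+O(1/\ln(e/\delta))$, which is what the stated cardinality $\lceil\ln(n)\ln(e/\delta)\rceil$ corresponds to). The cleanest way to land exactly on the stated inequalities is different: for the upper tail, restrict to $\gamma\in[0,1/(4\sigma^2)]$, where $1-2\gamma\sigma^2\ge \tfrac{1}{2}$ gives the sub-Gaussian-type bound $\psi(\gamma)\le 2\gamma^2\sigma^4$ with \emph{doubled} proxy; the standard Chernoff inversion on this restricted range yields precisely $2\sigma^2\sqrt{2mx}+2\sigma^2x$, and the fine grid then only needs to absorb the $\ln(1/\delta)\to\ln(e/\delta)$ inflation for both tails. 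Separately, in the unbounded case you should make the per-epoch statement itself uniform over the epoch (a single-tilt Ville maximal inequality inside each epoch, or the bounded-$T_n$ result invoked per epoch); otherwise the $\lceil\ln(\cdot)\ln(e/\delta)\rceil$ factor recurses inside the $j$-th budget and the $\sum_j j^{-2}$ accounting no longer yields the stated form of $c_T$. None of this undermines the conceptual plan --- your proposal does prove the result up to universal constants --- but as written it does not recover the statement's exact constants, which is the entire content of a lemma of this type.
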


\section{Application to stochastic multi-armed bandits}
\label{app:bandits}

\begin{proofof}{Lemma~\ref{lem:infogain}}
    Using the facts that $\min \{r, \alpha\} \leq (\alpha / \ln(1+\alpha)) \ln(1+r)$ and $\min_{\lambda \in \bslambda} \lambda \geq \sigma^2 / C^2$:
	\begin{align*}
	\sum_{t=1}^T s_{\bslambda, t-1}^2(x_t)
	& = \sigma^2 \sum_{t=1}^T \frac{1}{\lambda_t} k_{\lambda_t, t-1}(x_t, x_t) \\
	& \leq \sigma^2 \sum_{t=1}^T \frac{C^2}{\sigma^2} k_{\sigma^2/C^2, t-1}(x_t, x_t) \\
	& = \sigma^2 \sum_{t=1}^T \min \Big\{ \frac{C^2}{\sigma^2} k_{\sigma^2/C^2, t-1}(x_t, x_t), \frac{C^2}{\sigma^2} \Big\} \\
	& \leq \frac{2 C^2}{ \ln(1 + C^2/\sigma^2)} \gamma_T(\sigma^2/C^2).
	\end{align*}
    
    In particular, we obtain by a Cauchy-Schwarz inequality,
    \beqan
     \sum_{t=1}^T \sqrt{\frac{ k_{\lambda_t, t-1}(x_t, x_t)}{\lambda_t}}
     \leq      
    \sqrt{T \frac{2 C^2/\sigma^2}{\ln(1 + C^2/\sigma^2)} \gamma_T(\sigma^2/C^2)}\,.
    \eeqan
\end{proofof}

\begin{proofof}{Lemma~\ref{lem:Bt}}
	We want to control the quantity $B_{\lambda_t,t}(\delta)$.
	First of all, recall from Equation~\ref{eqn:Bt} that	
	\begin{align*}
	B_{\lambda_t,t}(\delta) &= \!\sqrt{\!\lambda_t}C \!+\! \sigma_{+,t} \sqrt{2\ln(1/\delta) + 2 \gamma_t(\lambda_-)} \\
	&\leq \sigma_+ + \sigma_+ \sqrt{2\ln(1/\delta) + 2 \gamma_t(\sigma_{t,-}^2/C^2)}\,,
	\end{align*}
	where we use the facts that $\lambda_t \leq \sigma_+^2/C^2$ and $\lambda_- \geq \sigma_{t,-}^2/C^2$.
	Then, using that  $\sigma_{t,-}^2\geq \sigma_{-}$, that $\gamma_t(\cdot)$ is non-increasing and non-decreasing with $t$, it comes
	\begin{align*}
	B_{\lambda_t,t}(\delta) 	&\leq \sigma_+ + \sigma_+ \sqrt{2\ln(1/\delta) + 2 \gamma_T(\sigma_{-}^2/C^2)}\,.
	\end{align*}

	Alternatively one may use Theorem~\ref{thm:kLSvar} in order to control the random variables 
	$\sigma_{t,+}$ and $\sigma_{t,-}$ in a tighter way.
	For instance, by Theorem~\ref{thm:kLSvar}, we easily obtain that with high probability, for all $t$,
	\beqan
\sigma\geq 	\sigma_{t,-} &\geq& \sigma - \frac{\sigma}{\sqrt{t}}
\frac{\bigg[(\sqrt{2}+1)\sqrt{C_t(\delta)}-
\sqrt{C_t(\delta) + 2D_{\lambda_\star,t}(\delta)}\bigg]}
{1 + \sqrt{2C_t(\delta)/t}}\\
&&-\frac{\sqrt{2 \sigma \lambda^{1/2}\|f^\star\|_\cK\sqrt{D_{\lambda_\star,t}(\delta)}} + C \sqrt{\lambda}\sqrt{1 - 
		\frac{1}{\max_{t\leq t}(1+k_{\lambda,t-1}(x_t,x_t))}}}{\sqrt{t} (1+\sqrt{2C_t(\delta)/t})}\,,
	\eeqan
	that is the estimate satisfies $\quad 	\sigma\geq 	\sigma_{t,-} \geq \sigma - O(1/\sqrt{t}).\quad$ 
	This in turns implies that 
	$\gamma_t(\sigma_{-,t}^2/C^2) \leq \gamma_t(\sigma^2/C^2) + O(1/\sqrt{t})$. Likewise, it can be shown that
	$\quad 	\sigma\leq 	\sigma_{t,+} \leq \sigma + O(1/\sqrt{t}),$  which yields
\begin{align*}
B_{\lambda_t,t}(\delta) 	&\leq \sigma\Big(1 + \sqrt{2\ln(1/\delta) + 2 \gamma_T(\sigma_{-}^2/C^2)}\Big)+o(1)\,.
\end{align*}	
	
\end{proofof}

\begin{proofof}{Theorem~\ref{thm:KernelUCB} (UCB algorithm for kernel bandits)}
Let $r_t$ denote the instantaneous regret	at time $t$ and $f^+(x_t)$ denote the optimistic value at the chosen point $x_t$, built from the confidence set used by the UCB algorithm. The following holds with probability higher than $1-4\delta$ for each time-step $t$
\beqan
r_t(\lambda_t) &=& f_\star(x_\star)-f_\star(x_t) \leq f^+_{t-1}(x_t)-f_\star(x_t)\\
&\leq& |f^+_{t-1}(x_t)-f_{\lambda_t,t-1}(x_t)|  + |f_{\lambda_t,t-1}(x_t)-f_\star(x_t)|\\
&\leq& 2 \sqrt{\frac{k_{\lambda_t,t-1}(x_t,x_t)}{\lambda_t}} B_{\lambda_t,t-1}(\delta)\,.
\eeqan
Thus, we deduce that with probability higher than $1-4\delta$:
\beqan
\kR_T &=& \sum_{t=1}^T r_t(\lambda)  \leq 2\sum_{t=1}^T\sqrt{\frac{k_{\lambda_t,t-1}(x_t,x_t)}{\lambda_t}} B_{\lambda_t,t-1}(\delta)\,.
\eeqan
We then use Lemma~\ref{lem:Bt} in order to control the term
$B_{\lambda_t,t-1}(\delta)$, and Lemma~\ref{lem:infogain} in order to control the sum of $\frac{k_{\lambda_t,t-1}(x_t,x_t)}{\lambda_t}$.
This yields the following bound on the regret: 
\beqan
\kR_T &\leq& 2\sigma_+\Big(1 + \sqrt{2\ln(1/\delta) + 2 \gamma_T(\sigma_{-}^2/C^2)}\big)
\sqrt{T \frac{2 C^2/\sigma^2}{\ln(1 + C^2/\sigma^2)} \gamma_T(\sigma^2/C^2)}\,.
\eeqan
\end{proofof}

\begin{proofof}{Theorem~\ref{thm:kernel_ts} (TS algorithm for kernel bandits)}	
	We closely follow the proof technique of \cite{Agrawal2014}, while clarifying and simplifying some steps. The general idea is to split the arms into two groups: \emph{saturated arms} and \emph{unsaturated arms}. The former designates arms where samples $\tilde f_t$ have low probability of dominating $f_\star(\star)$ while the latter designates the other case. This is related to the \emph{optimism}~\citep{Abeille2016}, that is the possibility of sampling a value that is higher than the optimum. 	
Let $\hat E_t$ and $\tilde E_t$ be the events that $\hat f_{t}$ and $\tilde f_t$ are concentrated around their respective means. More precisely, for a given confidence level $\delta$,  we introduce
	\beqan
	\hat E_{t,\delta} &=& \{\forall x\in\cX , |f_\star(x) - f_{\lambda_t,t-1}(x)| \leq  \hat C_{t,\delta}(x)  \}\\
	\tilde E_{t,\delta} &=&  \{\forall x\in\cX,  |f_{\lambda_t,t-1}(x) - \tilde f_t(x)|\leq  \tilde C_{t,\delta}(x)   \}\,,
	\eeqan
	for some quantities $\hat C_{t,\delta}(x) , \tilde C_{t,\delta}(x) $ to be defined.

\paragraph{Controlling the event $\hat E_{t,\delta}$}
Choosing the confidence bound to be
\beqan
\hat C_{t,\delta}(x)  = \sqrt{\frac{k_{\lambda_t,t-1}(x,x)}{\lambda_t}}B_{\lambda_t,t-1}(\delta/4)\,,
\eeqan
then the event $\hat E_{t,\delta}$ is controlled as  $\Pr\Big( \forall t \geq 0,\ \hat E_{t,\delta} \Big)\geq 1-\delta$.

\paragraph{Controlling the event $\tilde E_{t,\delta}$}
On the other hand, since 
$\tilde f_t(x)|\cH_{t-1} = \cN(f_{\lambda_t,t-1}(x), {\bf V}_t  )$ where we introduced the notation $ {\bf V}_t = v_t^2 \frac{\sigma_{+,t-1}^2}{\lambda_t} (k_{\lambda_t,t-1}(x,x'))_{x,x'\in\bX}$, then we have by a simple union bound over $x\in\bX$, 
\beqan
\Pr(	\tilde E_{t,\delta}^c |\cH_{t-1}  ) \leq \sum_{x\in\bX}\frac{1}{\sqrt{\pi} z_x }e^{-z_x^2/2}
\eeqan
provided that $z_x = \frac{\tilde C_{t,\delta}(x)}
{v_t \sqrt{\frac{\sigma_{+,t-1}^2}{\lambda_t}k_{\lambda_t,t-1}(x,x)}} \geq 1$ for all $x\in\bX$.
This motivates the following definition,
\beqan
\tilde C_{t,\delta}(x) = c_{t,\delta} v_t \sqrt{\frac{\sigma_{+,t-1}^2}{\lambda_t}k_{\lambda_t,t-1}(x,x)}\,,
\eeqan
for a well-chosen sequence $(c_{t,\delta})_t$. The choice 
$c_{t,\delta} = \max\{\sqrt{2\ln(t(t+1) |\bX|/\sqrt{\pi}\delta)},1\}$ ensures that
\beqan
\Pr( \exists t\geq 0\	\tilde E_{t,\delta}^c |\cH_{t-1}  )  &\leq& \sum_{t\geq 0}\frac{|\bX|}{\sqrt{\pi} c_{t,\delta} }e^{-c_{t,\delta}^2/2}=\sum_{t\geq 0}\frac{\delta}{c_{t,\delta} t(t+1) } \\
&\leq& \sum_{t\geq 0}\frac{\delta}{t(t+1) }  = \delta,
\eeqan
from which we obtain $\Pr\Big( \forall t \geq 0,\ \tilde E_{t,\delta} \Big)\geq 1-\delta$.

\paragraph{Summary}
By definition of the events, under $\hat E_{t,\delta}$ and $\tilde E_{t,\delta}$,
it thus holds that
\beqan
\forall x\in\cX,\
\Big|f_\star(x)- \tilde f_t(x)\Big| &\leq& \Big|f_\star(x) - f_{\lambda_t,t-1}(x)\Big| + \Big|f_{\lambda_t,t-1}(x) - \tilde f_t(x)\Big|\\
&\leq& \hat C_{t,\delta}(x) +\tilde C_{t,\delta}(x)\\
&=&\sqrt{\frac{k_{\lambda_t,t-1}(x,x)}{\lambda_t}}\bigg(B_{\lambda_t,t-1}(\delta/4) 
+ c_{t,\delta} v_t\sigma_{+,t-1}\bigg)\\
&=&s_{\blambda,t-1}(x)\bigg(\underbrace{\frac{B_{\lambda_t,t-1}(\delta/4)}{\sigma} 
	+ c_{t,\delta} v_t\frac{\sigma_{+,t-1}}{\sigma}}_{g_t(\delta)}\bigg)\,.
\eeqan

\paragraph{Saturated arms}
It is now convenient to introduce the set of saturated times a time $t$
\beqan
\cS_{t,\delta} = \bigg\{ x \in\bX :  
f_\star(\star)- f_\star(x) > s_{\blambda,t-1}(x)g_t(\delta) \bigg\}\quad\text{together with}\quad
x_{\cS,t} = \argmin_{x \notin \cS_{t,\delta}} s_{\blambda,t-1}(x)\,.
\eeqan
We remark that by construction $\star \notin \cS_{t,\delta}$ for all $t$. 
Now, by the strategy of the Kernel TS algorithm, $x_t = \argmax_{x\in\bX} \tilde f_t(x)$.
 Thus, we deduce that on the event $\hat E_{t,\delta}\cap \tilde E_{t,\delta}$
 \beqan
f_\star(\star)- f_\star(x_t) &=&
f_\star(\star)- f_\star(x_{\cS,t})  +f_\star(x_{\cS,t}) -f_\star(x_t)\\
&\leq&s_{\blambda,t-1}(x_{\cS,t})g_t(\delta)
+ \Big(f_\star(x_{\cS,t})- \tilde f_t(x_{\cS,t})\Big)\\
&& + \Big(\underbrace{\tilde f_t(x_{\cS,t}) - \tilde f_t(x_{t})}_{\leq 0}\Big) 
+\Big(\tilde f_t(x_{t}) -f_\star(x_t)\Big)\\
&\leq&2s_{\blambda,t-1}(x_{\cS,t})g_t(\delta) + 
s_{\blambda,t-1}(x_{t})g_t(\delta)\,.
 \eeqan
 Also, $f_\star(\star)- f_\star(x_t)\leq R$, where $R = \max_{x\in\bX}
 f_\star(\star)-f_\star(x)<\infty$.
We then remark that  by definition of $x_{\cS,t}$, we have
\beqan
\Esp[ s_{\blambda,t-1}(x_t) |\cH_{t-1}] &\geq &
\Esp[ s_{\blambda,t-1}(x_t)\indic{x_t \notin \cS_{t,\delta}} |\cH_{t-1}]\\
&\geq&
\Esp[ s_{\blambda,t-1}(x_{\cS,t})\indic{x_t \notin \cS_{t,\delta}} |\cH_{t-1}]\\
&=&s_{\blambda,t-1}(x_{\cS,t}) \Pr\bigg(x_t \notin \cS_{t,\delta} \bigg|\cH_{t-1}\bigg)\,.
\eeqan 
Likewise, 
\beqan
\min\{s_{\blambda,t-1}(x_{\cS,t})g_t(\delta),R \} \leq 
\frac{\Esp[ \min\{2s_{\blambda,t-1}(x_t)g_t(\delta),R\} |\cH_{t-1}]} {\Pr\bigg(x_t \notin \cS_{t,\delta} \bigg|\cH_{t-1}\bigg)}\,.
\eeqan
Since on the other hand,
$(f_\star(\star)- f_\star(x_t))\indic{x_t \notin \cS_{t,\delta}} \leq s_{\blambda,t-1}(x_t)g_t(\delta)\indic{x_t \notin \cS_{t,\delta}}$, we deduce that on the event $\hat E_{t,\delta}\cap \tilde E_{t,\delta}$ we have
\beqan
f_\star(\star)- f_\star(x_t) &\leq&
\min\bigg\{2s_{\blambda,t-1}(x_{\cS,t})g_t(\delta) + 
s_{\blambda,t-1}(x_{t})g_t(\delta),R\bigg\}\indic{x_t \in \cS_{t,\delta}}\\
&&+ s_{\blambda,t-1}(x_t)g_t(\delta)\indic{x_t \not\in \cS_{t,\delta}}\\
&\leq&\min\Big\{2s_{\blambda,t-1}(x_{\cS,t})g_t(\delta),R\Big\}\indic{x_t \in \cS_{t,\delta}} + 
s_{\blambda,t-1}(x_{t})g_t(\delta)\\
&\leq& \frac{\Esp[  \min\{2s_{\blambda,t-1}(x_t)g_t(\delta),R\} |\cH_{t-1}]}{ \Pr\bigg(x_t \notin \cS_{t,\delta} \bigg|\cH_{t-1}\bigg)}\indic{x_t \in \cS_{t,\delta}} +
s_{\blambda,t-1}(x_{t})g_t(\delta)\,.
\eeqan

\paragraph{Lower bounding the denominator}
At this point, we note that on the event $\hat E_{t,\delta} \cap \tilde E_{t,\delta}$, for all $x\in\cS_{t,\delta}$,
\beqan
\tilde f_t(x) \leq f_\star(x) + s_{\blambda,t-1}(x)g_t(\delta) \leq f_\star(\star)\,,
\eeqan
while on the other hand  we have the inclusion $\quad\{\forall x\in\cS_{t,\delta},\  \tilde f_t(\star)> \tilde f_t(x) \} \subset 	\{ x_t \not \in\cS_{t,\delta} \}.\quad$ 
Thus,  combining these two properties, we deduce that
\beqan
\lefteqn{
	\{ x_t \in\cS_{t,\delta} \} \cap \hat E_{t,\delta}  \cap \tilde E_{t,\delta}}\\
&\subset &
\Big\{\exists x\in\cS_{t,\delta},  \tilde f_t(\star) \leq  \tilde f_t(x) \Big\}  \cap 
\Big\{ \forall x\in\cS_{t,\delta}, \tilde f_t(x) \leq f_\star(\star)\Big\}\\
&	\subset &	\Big\{\tilde f_t(\star) \leq  f_\star(\star)  \Big\}\,.
\eeqan
Further, using that  $\tilde f_t(x)|\cH_{t-1} = \cN(f_{\lambda_t,t-1}(x), {\bf V}_t  )$ yields
\beqan
\lefteqn{
	\{ x_t \in\cS_{t,\delta} \} \cap \hat E_{t,\delta}  \cap \tilde E_{t,\delta}}\\
&\subset &	\Big\{\tilde f_t(\star) - f_{\lambda_t,t-1}(\star) \leq  f_\star(\star)-f_{\lambda_t,t-1}(\star) \Big\}\cap \hat E_{t,\delta}  \cap \tilde E_{t,\delta}\\
&\subset&\Big\{\tilde f_t(\star) - f_{\lambda_t,t-1}(\star) \leq  \hat C_{t,\delta}(\star)\Big\}\subset
\Big\{\big|\tilde f_t(\star) - f_{\lambda_t,t-1}(\star)\big| \leq  \hat C_{t,\delta}(\star)\Big\}\,,
\eeqan
from which we obtain 
\beqan
\Big\{\big|\tilde f_t(\star) - f_{\lambda_t,t-1}(\star)\big|>  \hat C_{t,\delta}(\star)  \Big\} \cap \hat E_{t,\delta} \subset 	\{ x_t \not\in\cS_{t,\delta} \} \cup \tilde E_{t,\delta}^c\,.
\eeqan
Thus, we have proved that
\beqan
 \Pr\bigg(x_t \notin \cS_{t,\delta} \bigg|\cH_{t-1}\bigg)& \geq &
  \Pr\bigg(\big|\tilde f_t(\star) - f_{\lambda_t,t-1}(\star)\big|>  \hat C_{t,\delta}(\star) , \hat E_{t,\delta} \bigg|\cH_{t-1}\bigg)  - \Pr\Big(\tilde E_{t,\delta}^c\Big|\cH_{t-1}\Big)\\\
  &=& \Pr\bigg(\big|\tilde f_t(\star) - f_{\lambda_t,t-1}(\star)\big|>  \hat C_{t,\delta}(\star) \bigg|\cH_{t-1}\bigg)\indic{ \hat E_{t,\delta} }  - \Pr\Big(\tilde E_{t,\delta}^c\Big|\cH_{t-1}\Big)\,.
\eeqan

\paragraph{Anti-concentration}
We now resort to an anti-concentration result for Gaussian variables~\citep{Abramowitz1964}. More precisely, the following inequality holds
\beqan
\Pr\bigg( \bigg|\tilde f_t(\star) - f_{\lambda_t,t-1}(\star) \bigg| >   \hat C_{t,\delta}(\star) \bigg|\cH_{t-1}\bigg) \geq 
\frac{1}{2\sqrt{\pi}z}e^{-z^2/2}
\eeqan
where  we introduced the $\cH_{t-1}$-measurable random variable
\beqan
z &=& \frac{ \hat C_{t,\delta}(\star)}{v_t\sigma_{+,t-1}\sqrt{\frac{k_{\lambda_t,t-1}(\star,\star)}{\lambda_t}}}= \frac{ B_{\lambda_t,t-1}(\delta/4)}{v_t\sigma_{+,t-1}}\,,\quad
\text{	provided that}\ z\geq 1\,.
\eeqan
Taking $v_t = \frac{ B_{\lambda_t,t-1}(\delta/4)}{\sigma_{+,t-1}\sqrt{2\alpha_t \ln(\beta_t)}}$ for constants $\alpha_t,\beta_t$ such that $2\alpha_t \ln(\beta_t)\geq 1$ thus yields 
\beqan
\Pr\bigg(\Big|\tilde f_t(\star) - f_{\lambda_t,t-1}(\star)\Big| >   \hat C_{t,\delta}(\star) \bigg|\cH_{t-1}\bigg) \geq p_t \eqdef
\frac{\beta_t^{-\alpha_t}}{2\sqrt{\pi}\sqrt{2\alpha_t \ln(\beta_t)}}\,.
\eeqan

\paragraph{Summary}
At this point of the proof, we have proved that
\beqan
\lefteqn{
(f_\star(\star)- f_\star(x_t))\indic{\hat E_{t,\delta} \cap \tilde E_{t,\delta}}}\\
&\leq& \frac{\Esp[ \min\{2s_{\blambda,t-1}(x_t)g_t(\delta),R\} |\cH_{t-1}]\indic{x_t \in \cS_{t,\delta}}}{ 
p_t\indic{\hat E_{t,\delta}} - \Pr(\tilde E_{t,\delta}^c \ |\cH_{t-1})
}\indic{\hat E_{t,\delta} \cap \tilde E_{t,\delta}} + 
s_{\blambda,t-1}(x_{t})g_t(\delta)\indic{\hat E_{t,\delta} \cap \tilde E_{t,\delta}}\\
&\leq& 
\Esp[  \min\{2s_{\blambda,t-1}(x_t)g_t(\delta),R\} |\cH_{t-1}]
\bigg(\frac{1}{p_t}  + \frac{ \Pr(\tilde E_{t,\delta}^c|\cH_{t-1})}{p_t^2}\bigg)+
s_{\blambda,t-1}(x_{t})g_t(\delta)\,,
\eeqan
where in the second inequality, we used the property $\frac{1}{p-q} = \frac{1}{p} + \frac{q}{p(p-q)} \leq \frac{1}{p} + \frac{q}{p^2}$, for $p>q$. Combining the bound on 
$\Pr(\tilde E_{t,\delta}^c|\cH_{t-1})$ and the definition of $p_t$, we obtain
\beqan
\lefteqn{
	(f_\star(\star)- f_\star(x_t))\indic{\hat E_{t,\delta} \cap \tilde E_{t,\delta}}}\\
&\leq&
\Esp[ \min\{2s_{\blambda,t-1}(x_t)g_t(\delta),R\} |\cH_{t-1}]
\bigg( \sqrt{8\pi\alpha_t \ln(\beta_t)}\beta_t^{\alpha_t} + 
\delta\frac{8\pi\alpha_t \ln(\beta_t)\beta_t^{2\alpha_t}}{c_{t,\delta}t(t+1)}
\bigg)+
s_{\blambda,t-1}(x_{t})g_t(\delta)\,. 
\eeqan

\paragraph{Pseudo-regret}
Summing-up the previous terms over $t\geq1$, we obtain that the pseudo-regret of the Kernel TS strategy satisfies,
on the event $\bigcap_{t\geq 1} \hat E_{t,\delta} \cap \tilde E_{t,\delta}$ that holds with probability higher than $1-2\delta$, 
\beqan
\kR_T \leq 
\sum_{t=1}^T \bigg[ \Esp[ \min\{2s_{\blambda,t-1}(x_t)g_t(\delta),R\} |\cH_{t-1}]
 \sqrt{8\pi\alpha_t \ln(\beta_t)}\beta_t^{\alpha_t}\bigg(1 + 
\delta\frac{\sqrt{8\pi\alpha_t \ln(\beta_t)}\beta_t^{\alpha_t}}{c_{t,\delta}t(t+1)}
\bigg)+
s_{\blambda,t-1}(x_{t})g_t(\delta) \bigg]\,,
\eeqan
where $c_{t,\delta}=\max\{\sqrt{2\ln(t(t+1) |\bX|/\sqrt{\pi}\delta)},1\}$, and the constants $\alpha_t,\beta_t$ must be such that  $2\alpha_t \ln(\beta_t)\geq 1$ and $\sqrt{8\pi\alpha_t \ln(\beta_t)}\beta_t^{\alpha_t}\geq 1$. Also, let us recall that
\beqan
g_t(\delta)&=&\frac{B_{\lambda_t,t-1}(\delta/4)}{\sigma} 
+ c_{t,\delta} v_t\frac{\sigma_{+,t-1}}{\sigma}\\
&=&\frac{B_{\lambda_t,t-1}(\delta/4)}{\sigma}\bigg(1 + \frac{c_{t,\delta}}{\sqrt{2\alpha_t\ln(\beta_t)}}\bigg)\,.
\eeqan
In particular, the specific choice $\alpha_t=1/2\ln(\beta_t)$ where $\beta_t>1$  (which satisfies $1\geq 1$ and $\sqrt{4\pi e}\geq 1$) yields
\beqan
\kR_T &\leq&
\sum_{t=1}^T \Esp\bigg[ \min\{2s_{\blambda,t-1}(x_t)\frac{B_{\lambda_t,t-1}(\delta/4)}{\sigma}\Big(1+
c_{t,\delta}\Big),R\} \bigg|\cH_{t-1}\bigg]
\eta_t
+ s_{\blambda,t-1}(x_{t})g_t(\delta)\\
&=&\sum_{t=1}^T \Esp\Big[ \min\big\{2s_{\blambda,t-1}(x_t)g_t(\delta),R\big\} \Big|\cH_{t-1}\Big]
\eta_t
+ s_{\blambda,t-1}(x_{t})g_t(\delta)\,,
\eeqan
where we introduced the deterministic quantity
$\quad\eta_t= \sqrt{4\pi e}\Big(1 + 
\delta\frac{\sqrt{4\pi e}}{c_{t,\delta}t(t+1)}\Big)\,.\quad$


\paragraph{Concentration}
In order to finish the proof, we now relate the sum of 
the terms $\Esp[ s_{\blambda,t-1}(x_t) |\cH_{t-1}]$, $t\geq1$ to the sum of the terms $s_{\blambda,t-1}(x_t)$. 
More precisely, let us introduce the following random variable
\beqan
X_t =  \Esp\Big[ \min\big\{2s_{\blambda,t-1}(x_t)g_t(\delta),R\big\} \Big|\cH_{t-1}\Big]\eta_t - 
 \min\big\{2s_{\blambda,t-1}(x_t)g_t(\delta),R\big\}\eta_t\,.
\eeqan
By construction, $\Esp[X_t | \cH_{t-1}] = 0$ and 
$|X_t| \leq R\eta_t\,.$  Thus, by an application of Azuma-hoeffding's inequality for martingales, we obtain that for all $\delta\in(0,1)$, with probability higher than $1-\delta$, 
\beqan
\sum_{t=1}^T X_t \leq \sqrt{ 2\sum_{t=1}^T R^2\eta_t^2 \ln(1/\delta)}\,,
\eeqan
and thus that on an event of probability higher than $1-3\delta$,
\beqan
\kR_T &\leq& \sum_{t=1}^T \min\big\{2s_{\blambda,t-1}(x_t)g_t(\delta),R\big\}\eta_t
+ s_{\blambda,t-1}(x_{t})g_t(\delta)+
\sqrt{ 2\sum_{t=1}^T R^2\eta_t^2 \ln(1/\delta)}\,.
\eeqan
Replacing $\eta_t$ with its expression, that is
\beqan
\eta_t&=& \sqrt{4\pi e}\Big(1 + 
\delta\frac{\sqrt{4\pi e}}{\max\{\sqrt{2\ln(t(t+1) |\bX|/\sqrt{\pi}\delta)},1\}t(t+1)}\Big)\\
&\leq&\sqrt{4\pi e}\Big(1 +\delta\frac{\sqrt{4\pi e}}{t(t+1)}\Big)\,,
\eeqan
we deduce that with probability higher than $1-3\delta$,
\beqan
\kR_T &\leq& (4\sqrt{\pi e}+1)\bigg(\sum_{t=1}^T s_{\blambda,t-1}(x_t)g_t(\delta)\bigg)
+ R \delta 4\pi e
+R\sqrt{ 8 \pi e\sum_{t=1}^T (1 + \delta\frac{\sqrt{4\pi e}}{t(t+1)})^2 \ln(1/\delta)}\\
&\leq&
 (4\sqrt{\pi e}+1)\bigg(\sum_{t=1}^T s_{\blambda,t-1}(x_t)g_t(\delta)\bigg)
+ R \delta 4\pi e
+\sqrt{ 8 \pi e(1 + \delta\sqrt{4\pi e})^2} R\sqrt{T \ln(1/\delta)}\\
&=&(4\sqrt{\pi e}+1)\bigg(\sum_{t=1}^T \sqrt{\frac{k_{\lambda_t,t-1}(x_t,x_t)}{\lambda_t}} B_{\lambda_t,t-1}(\delta/4)(1+c_{t,\delta})\bigg)\\
&&+ R \delta 4\pi e+\sqrt{ 8 \pi e(1 + \delta\sqrt{4\pi e})^2} R\sqrt{T \ln(1/\delta)}\,.
\eeqan
This concludes the proof of the main result, since
$c_{t,\delta}\leq c_{T,\delta}$.

\paragraph{Final bound}
Then, using Lemma~\ref{lem:Bt} we can rewrite the regret as
\beqan
\kR_T
&=&(4\sqrt{\pi e}+1)(1+c_{T,\delta})  \sigma_+\Big(1  + \sqrt{2\ln(4/\delta) + 2 \gamma_T(\sigma_-^2/C^2)} \Big) \sum_{t=1}^T \sqrt{\frac{k_{\lambda_t,t-1}(x_t,x_t)}{\lambda_t}} \\
&&+ R \delta 4\pi e+\sqrt{ 8 \pi e(1 + \delta\sqrt{4\pi e})^2} R\sqrt{T \ln(1/\delta)}\,.
\eeqan
Using Lemma~\ref{lem:infogain} together with a Cauchy-Schwarz inequality, we finally obtain
\beqan
\kR_T
&=&(4\sqrt{\pi e}+1)(1+c_{T,\delta})  \sigma_+\Big( 1 +\sqrt{2\ln(4/\delta) + 2 \gamma_T(\sigma_-^2/C^2)} \Big)  \sqrt{T \frac{2 C^2/\sigma^2}{\ln(1 + C^2/\sigma^2)} \gamma_T(\sigma^2/C^2)} \\
&&+ R \delta 4\pi e+\sqrt{ 8 \pi e(1 + \delta\sqrt{4\pi e})^2} R\sqrt{T \ln(1/\delta)}\,.
\eeqan
\end{proofof}

%
%


\end{document}